% !TeX TS-program = lualatex
% !TEX root = main.tex

\documentclass[runningheads]{llncs}
\usepackage[T1]{fontenc}
% T1 fonts will be used to generate the final print and online PDFs,
% so please use T1 fonts in your manuscript whenever possible.
% Other font encondings may result in incorrect characters.
%
\usepackage{graphicx}
% Used for displaying a sample figure. If possible, figure files should
% be included in EPS format.
%
% If you use the hyperref package, please uncomment the following two lines
% to display URLs in blue roman font according to Springer's eBook style:
%\usepackage{color}
%\renewcommand\UrlFont{\color{blue}\rmfamily}
%\urlstyle{rm}
%

%%%%%%%%%%%%%%%%%%%%%%%%%%%%%%%%%%%%%%%%%%%%%%%%%%%%%%%%%%%%%%%%%%%%%%%%%%%%
% BEGIN AUTHORS
%%%%%%%%%%%%%%%%%%%%%%%%%%%%%%%%%%%%%%%%%%%%%%%%%%%%%%%%%%%%%%%%%%%%%%%%%%%%

\sloppy
\synctex=1 % force synctex generation

%%%%%%%%%%%%%%%%%%%%%%%%   SPACING   %%%%%%%%%%%%%%%%%%%%%%%%%%%%%

%\makeatletter
%\renewcommand{\paragraph}{%
	%  \@startsection{paragraph}{4}%
	%  {\z@}{2.3ex \@plus 1ex \@minus .2ex}{-1em}%
	%  {\normalfont\normalsize\textit}%
	%}
%\makeatother

%%%%%%%%%%%%%%%%%%%%%%%%   EDITING   %%%%%%%%%%%%%%%%%%%%%%%%%%%%%

%\usepackage[All]{lua-typo}
% Linter list:
% ShortLines - paragraph’s last line too short?
% EOPHyphens - hyphenated word split across two pages?
% ... https://ctan.dcc.uchile.cl/macros/luatex/latex/lua-typo/doc/lua-typo.pdf

%%%%%%%%%%%%%%%%%%%%%%%%   PACKAGES   %%%%%%%%%%%%%%%%%%%%%%%%%%%%%

% 	Tables
\usepackage{longtable}
\usepackage{multirow}

%   Color boxes
\usepackage{tcolorbox}

% 	AMS math extension
\usepackage{mathtools} % defines \coloneqq

%	Extra fonts
\usepackage{dsfont}     % \mathds{}, indicator. Double-stroke font.

%    Algoritmos
\usepackage{algorithm}
\usepackage{algorithmicx,eqparbox,array}
\usepackage{algpseudocode}

\algnewcommand\algorithmicswitch{\textbf{switch}}
\algnewcommand\algorithmiccase{\textbf{case}}
% New "environments"
\algdef{SE}[SWITCH]{Switch}{EndSwitch}[1]{\algorithmicswitch\ #1\ \algorithmicdo}{\algorithmicend\ \algorithmicswitch}%
\algdef{SE}[CASE]{Case}{EndCase}[1]{\algorithmiccase\ #1}{\algorithmicend\ \algorithmiccase}%
\algtext*{EndSwitch}%
\algtext*{EndCase}%

% Math fonts
\usepackage{amsfonts}

% Internal links
\usepackage{hyperref}

% Reference manager
\usepackage{cleveref}

%%%%%%%%%%%%%%%%%%%%%%%%   MACROS   %%%%%%%%%%%%%%%%%%%%%%%%%%%%%

%    Abbreviations
\newcommand{\EE}{\mathbb{E}}

\newcommand{\PP}{\mathbb{P}}
\newcommand{\RR}{\mathbb{R}}
\newcommand{\NN}{\mathbb{N}}

\newcommand{\eps}{\varepsilon}
\newcommand{\defas}{\coloneqq}

\newcommand{\calO}{\mathcal{O}}
\newcommand{\SSP}{\mathrm{SSP}}
\newcommand{\Reach}{\mathrm{Reach}}
\newcommand{\ReachBellman}{\mathrm{ReachUpdate}}
\newcommand{\SSPBellman}{\mathrm{SSPUpdate}}
\newcommand{\States}{S}

\newcommand{\state}{s}

\newcommand{\prob}{\delta}
\newcommand{\strategy}{\sigma}
\newcommand{\Strategies}{\Sigma}
\newcommand{\Neighbors}{E}
\newcommand{\Target}{T}

\newcommand{\smallTransition}{\prob_{\min}}
\newcommand{\numStates}{|\States|}
\newcommand{\numNeighbors}{|\Neighbors|}
\newcommand{\numLevels}{k}
\newcommand{\ub}{u_s}
\newcommand{\lb}{l_s}
\newcommand{\guess}{\gamma}

\newcommand{\Paths}{\Omega}

\newcommand{\Plays}{\Omega}
\newcommand{\play}{\omega}

%    Math operators (* if there could be a subscript)

\DeclareMathOperator{\argmax}{argmax}

\DeclareMathOperator{\val}{val}

%%%%%%%%%%%%%%%%%%%%%%%%%%%%%%%%%%%%%%%%%%%%%%%%%%%%%%%%%%%%%%%%%%%%%%%%%%%%
% END AUTHORS
%%%%%%%%%%%%%%%%%%%%%%%%%%%%%%%%%%%%%%%%%%%%%%%%%%%%%%%%%%%%%%%%%%%%%%%%%%%%
\begin{document}
	\title{Value Iteration with Guessing for Markov Chains and Markov Decision Processes}
	\titlerunning{Guessing Value Iteration}
	
	\author{
		Krishnendu Chatterjee\inst{1}\orcidID{0000-0002-4561-241X} \and 
		Mahdi JafariRaviz\inst{2}\orcidID{0009-0002-0495-3805} \and 
		Raimundo Saona$^*$\inst{1}\orcidID{0000-0001-5103-038X} \and 
		Jakub Svoboda\inst{1}\orcidID{0000-0002-1419-3267}
	}
	\authorrunning{K. Chatterjee at al.}
	% First names are abbreviated in the running head.
	% If there are more than two authors, 'et al.' is used.
	%
	\institute{
		Institute of Science and Technology Austria (ISTA) \url{https://ista.ac.at/} 
		\email{raimundo.saona@gmail.com}
		\email{\{jakub.svoboda, krishnendu.chatterjee\}@ist.ac.at}
		\and
		University of Maryland College Park \url{https://umd.edu} \email{mahdij@umd.edu}
}

\maketitle              % typeset the header of the contribution
\begin{abstract}
	Two standard models for probabilistic systems are Markov chains (MCs) and Markov decision processes (MDPs).
	Classic objectives for such probabilistic models for control and planning problems are reachability and stochastic shortest path.
	The widely studied algorithmic approach for these problems is the Value Iteration (VI) algorithm which iteratively applies local updates called Bellman updates.
	There are many practical approaches for VI in the literature but they all require exponentially many Bellman updates for MCs in the worst case.
	A preprocessing step is an algorithm that is discrete, graph-theoretical, and requires linear space.
	An important open question is whether, after a polynomial-time preprocessing, VI can be achieved with sub-exponentially many Bellman updates.
	In this work, we present a new approach for VI based on guessing values.
	Our theoretical contributions are twofold.
	First, for MCs, we present an almost-linear-time preprocessing algorithm after which, along with guessing values, VI requires only subexponentially many Bellman updates.
	Second, we present an improved analysis of the speed of convergence of VI for MDPs.
	Finally, we present a practical algorithm for MDPs based on our new approach.
	Experimental results show that our approach provides a considerable improvement over existing VI-based approaches on several benchmark examples from the literature. 
	
	\keywords{Markov decision processes \and Markov chains \and Value iteration \and Reachability \and Stochastic Shortest Path.}
\end{abstract}

%% ACM 2012 classifications from https://dl.acm.org/ccs/ccs_flat.cfm 
%\begin{CCSXML}
%<ccs2012>
%   <concept>
%       <concept_id>10003752.10003777</concept_id>
%       <concept_desc>Theory of computation~Computational complexity and cryptography</concept_desc>
%       <concept_significance>500</concept_significance>
%   </concept>
%   <concept>
%       <concept_id>10002950.10003648.10003700.10003701</concept_id>
%       <concept_desc>Mathematics of computing~Markov processes</concept_desc>
%       <concept_significance>500</concept_significance>
%   </concept>
%   <concept>
%       <concept_id>10002950.10003624.10003633.10010917</concept_id>
%       <concept_desc>Mathematics of computing~Graph algorithms</concept_desc>
%       <concept_significance>300</concept_significance>
%   </concept>
%</ccs2012>
%\end{CCSXML}
%
%\ccsdesc[500]{Theory of computation~Computational complexity and cryptography}
%\ccsdesc[500]{Mathematics of computing~Markov processes}
%\ccsdesc[300]{Mathematics of computing~Graph algorithms}
%
%
%

\section{Introduction}
\label{Section: Introduction}

\paragraph{Markov Chains and Markov Decision Processes.}
Markov chains (MCs) and Markov decision processes (MDPs)~\cite{baier2008PrinciplesModelChecking,filar1997CompetitiveMarkovDecision,puterman2014MarkovDecisionProcesses} are widely used mathematical models with applications in various fields including computer science, economics, operations research, and engineering. 
These models study dynamical systems that exhibit stochastic behavior. 
MCs consist of a finite state space and a stochastic transition function.
MDPs extend MCs with non-deterministic choices over actions of a controller that determines the stochastic transition function at each period.

\paragraph{Objectives and Value.} 
Objectives define the payoff to be optimized by the controller.
The classic objectives for MCs and MDPs that arise in control, verification, and planning problems are~\cite{baier2008PrinciplesModelChecking,puterman2014MarkovDecisionProcesses}: 
(i)~weighted reachability objectives and (ii)~stochastic shortest path (SSP) objectives. 
In weighted reachability objectives, we are given a target set of states and weights on them.
Then, the payoff of a path is, if it reaches a target state, the weight of that state, otherwise zero. 
In SSP objectives, we are given a target set and a positive cost at every state.
Then, the cost of a path is, if it reaches a target state, the sum of the cost till a target state is reached, otherwise infinity. 
In MCs, the value is the expectation of the objective.
In MDPs, the value is the optimal expectation over all resolutions of the non-deterministic choices.

\paragraph{Value Iteration.} 
Both MCs and MDPs with the above objectives can be solved in polynomial time \cite[Chapter 10]{baier2008PrinciplesModelChecking} by a reduction to linear programming (LP)~\cite{boyd2004ConvexOptimization}. 
Even though linear programming yields a polynomial-time solution, the algorithm most used in practice is Value Iteration (VI), which is a classic and well-studied algorithmic approach~\cite{chatterjee2008ValueIteration}. 
Given an initial value vector, the VI algorithm iteratively applies local updates, called Bellman updates~\cite{bellman1957markovian}, to approach the value vector. 
The two key advantages of VI over LP are the following.
First, the VI algorithm is simple, elegant, and space-efficient (i.e., it requires linear space) whereas (to the best of our knowledge) linear-space LP-based algorithms for MDPs are not known.
Second, the VI algorithm has symbolic implementations which scales to large state spaces, e.g., representing value vectors as multiterminal binary decision diagrams and applying local updates~\cite{hensel2022ProbabilisticModelChecker,kwiatkowska2011PRISMVerificationProbabilistic}, whereas LP-based algorithms do not. 
Given the advantages of VI over linear programming, VI is the most widely used approach in several tools for probabilistic verification, e.g., PRISM~\cite{kwiatkowsa2012PRISMBenchmarkSuite} and STORM~\cite{hensel2022ProbabilisticModelChecker}.
See~\cite{hartmanns2023PractitionerGuideMDP} for a thorough comparative study. 

\paragraph{Preprocessing.} 
MCs and MDPs are often preprocessed to speed up computation. 
For example: in weighted reachability objectives with binary weights, a preprocessing step is to compute all states with values of either zero or one; 
in SSP objectives, similar computations identify all states with value infinity. 
The desired properties of preprocessings are: 
(a)~discrete and graph-theoretical, i.e., it does not depend on the precise transition probabilities; 
and~(b) linear space, so it ensures space efficiency. 
For various preprocessings for VI, see \cite{brazdil2014VerificationMarkovDecision,haddad2018IntervalIterationAlgorithm,hartmanns2020OptimisticValueIteration,quatmann2018SoundValueIteration}.

\paragraph{Question.} 
Given the practical relevance of VI, multiple VI-based approaches have been proposed in the literature including Interval VI~\cite{haddad2018IntervalIterationAlgorithm,baier2017EnsuringReliabilityYour}, Variance-reduced VI~\cite{sidford2018variance}, Sound VI~\cite{quatmann2018SoundValueIteration}, and Optimistic VI~\cite{hartmanns2020OptimisticValueIteration}. 
However, they all require exponentially many Bellman updates in the number of states in the worst case. 
Thus, a fundamental question is the existence of an efficient preprocessing that can achieve VI with sub-exponentially many Bellman updates.

\paragraph{Our Contribution.}
Our main contributions are the following.
\begin{itemize}
	
	\item \emph{Sub-exponential VI for MCs.} 
	We present an almost linear-time preprocessing algorithm for MCs which, along with guessing values, requires only subexponentially many Bellman updates to approximate the value. 
	Informally, the preprocessing obtains a set of states to guess values which are verified by solving the rest of the MC. 
	The preprocessing to obtain the set of states where values will be guessed is based on breadth-first-search and set cardinality, so it can be symbolically implemented. 
	See \Cref{Result: subexponential MC} for details.
	
	\item \emph{VI convergence for MDPs.} 
	We present a new analysis of the speed of convergence of VI for MDPs. 
	This improves on the previous bounds given in~\cite{haddad2018IntervalIterationAlgorithm} and motivates the use of guessing in MDPs.
	
	\item \emph{Practical approach for MDPs and experimental results.} 
	We present a new practical VI-based approach for MDPs called Guessing VI. 
	We have implemented our approach on STORM~\cite{hensel2022ProbabilisticModelChecker}. 
	We evaluate our approach on the Quantitative Verication Benchmark Set~\cite{hartmanns2019QuantitativeVerificationBenchmark} against all existing VI-based approaches. 
	We consider 474 examples: 
	(i)~in 170 examples all approaches run very fast (less than 100 milliseconds); 
	(ii)~in 135 examples, all approaches perform similarly (the max and min are within 10 \% of each other);
	(iii)~in 83 examples, there is a winner among the previous VI-based approaches over our approach, however, in these examples, the average improvement of each previous approach over ours is at most 1.15 times (i.e., 15\% improvement); and 	
	(iv)~in 86 examples our approach is the fastest, with an average improvement of at least 1.27 times (i.e., 27\% improvement).
	Our experimental results show that our approach provides significant improvement over existing VI-based approaches.
\end{itemize}

\paragraph{Related Works.}

There are three main preprocessings for MDPs used in the literature: (i)~graph connectivity~\cite[Section 3]{even_graph_2012}; (ii)~qualitative reachability~\cite[Algorithms 1-4]{bernardo_automated_2011}; and (iii)~collapsing Maximal end components (MECs)~\cite{chatterjee2011FasterDynamicAlgorithms}.
All these preprocessings run in subquadratic time and are standard in the literature. 
For example, after collapsing MECs, an MDP is usually referred to as contracting or halting~\cite{haddad2018IntervalIterationAlgorithm}. 

For a given horizon, VI provides a strategy that is optimal, and computing such a policy is EXPTIME-complete~\cite{balaji2019ComplexityValueIteration}. 
The most important related works are the VI-based approaches previously presented in the literature, i.e., Interval VI (IVI)~\cite{haddad2018IntervalIterationAlgorithm,baier2017EnsuringReliabilityYour}, Optimistic VI (OVI)~\cite{hartmanns2020OptimisticValueIteration} and Sound VI (SVI)~\cite{quatmann2018SoundValueIteration}, which we have already discussed. 
For MCs and MDPs, there are two main model-checking tools: PRISM~\cite{kwiatkowska2011PRISMVerificationProbabilistic} and STORM~\cite{hensel2022ProbabilisticModelChecker}. 
Our benchmark instances come from the Quantitative Verification Benchmark Set (QVBS)~\cite{hartmanns2019QuantitativeVerificationBenchmark}, which incorporates instances from the PRISM benchmark set~\cite{kwiatkowsa2012PRISMBenchmarkSuite}.

In all works mentioned, the MDPs are fully known a priori. 
When the system is not known a priori, there are approaches based on learning and statistical tests, such as~\cite{brazdil2014VerificationMarkovDecision}. 
Similarly, in all works mentioned, the MDPs are finite. 
When the probabilistic system has uncountably many (continuous) states, sufficient conditions for an anytime approximation of the reachability value are given in~\cite{grover2022AnytimeGuaranteesReachability}. 
While the literature on MCs and MDPs is vast, our work relates to VI-based algorithmic approaches, and hence we restrict our attention to primarily these lines of work.

\section{Preliminaries}
\label{Section: Preliminaries}

\paragraph{Notation.}
For a finite set $X$, we denote the set of probability measures on $X$ by $\Delta(X)$ and the indicator function of $X$ by $\mathds{1}[X]$.
For a natural number $n$, we denote the set $\{1, 2, \ldots, n\}$ by $[n]$.
For two disjoint sets $X$ and $Y$, we denote their disjoint union by $X \sqcup Y$.

\paragraph{Markov Decision Process (MDP).}
An MDP is a tuple \(P = (\States, \Neighbors, \prob)\) with the following properties.
\begin{itemize}
	
	\item 
	\( \States = \States_d \sqcup \States_p \) is a finite set of states partitioned into decision states \( \States_d \) and probabilistic states \( \States_p \).
	
	\item 
	\( \prob \colon \States_p \to \Delta(\States_d) \) is a probability transition function.
	We denote the probability $\prob(\state)(\state')$ by \( \prob(\state, \state') \) and \( \smallTransition \) the smallest positive transition probability.
	
	\item 
	$\Neighbors \subseteq \States \times \States$ is a finite set of edges. 
	For a state $\state \in \States$, we write $\Neighbors(\state) \defas \{ \state' \in \States : (\state, \state') \in \Neighbors \}$ and we require the following properties: (a) for all states $\state \in \States$, the set $\Neighbors(\state)$ is non-empty; and (b) for probabilistic states $\state \in \States_p$, we have that $\Neighbors(\state) = \{ \state' \in \States : \prob(\state, \state') > 0 \}$.
\end{itemize}
Given an initial state $\state_1 \in \States$, the dynamic is as follows.
For every stage $t \ge 1$, if $\state_t \in \States_p$, then the next state $\state_{t + 1}$ is drawn from the distribution $\prob(\state_t)$;
if $\state_t \in \States_d$, then the controller chooses the next state $\state_{t + 1}$ from $\Neighbors(\state)$.
We call the graph $G_P = (\States = \States_p \sqcup \States_d, \Neighbors)$ the labeled graph of $P$ where vertices are labeled as decision or probabilistic.

\paragraph{Strategy.}
A strategy $\strategy$ describes the choice of the controller at each state,
i.e., $\strategy \colon \States_d \to \States$ such that, for all states $\state \in \States_d$,
we have that $\strategy(\state) \in \Neighbors(\state)$.
The set of all strategies is denoted $\Strategies$, which is finite. 
These strategies are referred to as positional in the literature, as opposed to general strategies that depend on the entire history.

\paragraph{Markov Chain (MC).}
A Markov Chain $M$ is a class of MDPs where there is only one strategy. 
Equivalently, for all states $\state \in \States_d$, there is a unique successor, i.e., $|\Neighbors(\state)| = 1$.
Given an MDP $P$, a strategy $\strategy$ induces an MC $P_\strategy$ where each decision state $\state \in \States_d$ transitions deterministically to $\strategy(\state)$.

\paragraph{Plays.}
For an MDP $P$, a play is a legal sequence of states.
Formally, the set of plays is defined by $\Plays \defas \{ \play = (\state_t)_{t \ge 1} : \forall t \ge 1 \quad \state_{t + 1} \in \Neighbors(\state_t) \}$, i.e., the set of infinite paths in the labeled graph of the MDP.
We denote the set of paths starting at $\state$ as $\Paths_{\state}$.

\paragraph{Probabilities.}
An MDP $P$, a strategy $\strategy$ and an initial state $\state$ define a natural dynamic $(S_t)_{t \ge 0}$ over the state space $\States$.
Formally, they define a probability space $(\Plays, F, \PP_{\strategy, \state})$ where the probability measure is the Kolmogorov extension of the natural definition over events defined by a finite sequence of states.
For an MC $M$ and an initial state $\state$, we denote the probability measure by $\PP_{\state}$.

\paragraph{Objectives: Weighted Reachability and Shortest Stochastic Path.}
An objective is a measurable function that assigns a quantity to every play, i.e., a function $\gamma \colon \Plays \to \RR \cup \{ \infty \}$.
A state $\state \in \States$ is absorbing (or sink) if $\Neighbors(\state) = \{ \state \}$.
We consider the following two objectives.
\begin{itemize}
	
	\item {\em Weighted Reachability.} 
	A target set $\Target \subseteq \States$ of absorbing states and an associated weight function $w \colon \Target \to [0, \infty)$ define the weighted reachability objective $\Reach_\Target$ as follows.
	For every play $\play$, if the play reaches state $\state \in \Target$, then $\Reach_\Target(\play) = w(\state)$; if the play never reaches a target state, then $\Reach_\Target(\play) = 0$. 
	We simply write $\Reach$ if $\Target$ is clear from the context.
	
	\item {\em Shortest Stochastic Path (SSP).}	
	A target set $\Target \subseteq \States$ of absorbing states and an associated weight function $w \colon S \to (0, \infty)$ define the stochastic shortest path objective $\SSP_\Target$ as follows.
	For every play $\play$, if the play does not reach a target state, then $\SSP_\Target(\play) = \infty$; and if the play reaches a target state, then $\SSP_\Target(\play)$ is the sum of the weights till a target state is reached including the weight of the reached target state. 
	We simply write $\Reach$ if $\Target$ is clear from the context.
\end{itemize}
For a weight function, we denote $w_{\min} \defas \min\{w(\state) : \state \in \Target\}$ and $w_{\max} \defas \max\{ w(\state) : \state \in \Target\}$.

\paragraph{Value.}
Given an MDP $P$ and an objective, the value of $P$ is the best the controller can guarantee in expectation.
For a play $\play = (\state_t)_{t \ge 1}$, let $t^*(\play) \defas \inf \{ t \ge 1 : s_t \in \Target \}$.
Then, for an initial state $\state$, the value is defined as follows.
\begin{align*}
	\val_{P}(\state; \Reach) 
	&\defas \max_{\strategy} \, \EE_{\strategy, \state}(w(\state_{t^*}) \cdot \mathds{1}[t^* < \infty])  \,, \\
	\val_{P}(\state; \SSP) 
	&\defas \min_{\strategy} \, \EE_{\strategy, \state} \left( \sum_{t = 0}^{t^*} w(\state_{t}) \right) \,.
\end{align*}
If the objective is clear from the context, then we simply write $\val_P(\state)$ or $\val(\state)$.

\paragraph{Value Iteration (VI).}
Value iteration is a classic algorithm that computes the value of an MDP $P$ with either weighted reachability or SSP objectives~\cite{chatterjee2008ValueIteration}.
It simultaneously computes $\val_P(\state)$ for all $\state$.
First, it considers an initial vector $v_1 \colon \States \to \RR$.
At stage $i \ge 1$, it applies a Bellman update operator (described later) on $v_i$ to obtain the next vector $v_{i + 1}$.
The Bellman update operator depends on the objective.
Under a well-chosen initial vector $v_1$, the sequence of vectors $(v_i)_{i \ge 1}$ converges strictly monotonically to the value vector $(\val_P(\state))_{\state \in \States}$.

\paragraph{Bellman Update.}
Given an MDP $P$, for each objective there is a different Bellman update operator.
Consider a state $\state \in \States \setminus \Target$, and a vector $v \colon \States \to \RR$.
For reachability objectives,
\[
\ReachBellman(P,v,\state)
\defas \begin{cases} 
	\max_{\state' \in \Neighbors(\state)} v(\state')
	&\state \in \States_d \\
	\sum_{\state' \in \Neighbors(\state)} \prob(\state, \state') \, v(\state')
	&\state \in \States_p
\end{cases}
\]
For SSP objectives, 
\[
\SSPBellman(P,v,\state)
\defas \begin{cases} 
	w(\state) + \min_{\state' \in \Neighbors(\state)} v(\state')
	&\state \in \States_d \\
	w(\state) + \sum_{\state' \in \Neighbors(\state)} \prob(\state, \state') \, v(\state')
	&\state \in \States_p
\end{cases}
\]
For states in $\Target$, the updates make no changes in the vector.
For simplicity, we denote these two Bellman updates simply $\Call{BellmanUpdate}{}$.
By bounding the output by known lower and upper bounds and starting from a lower or upper bound, iterative applications of $\Call{BellmanUpdate}{}$ generate a monotonic sequence of vectors.

\paragraph{Maximal End Components (MECs).}
Given an MDP, an end component is a set $U$ of states such that, in the corresponding labeled graph,
(a)~$U$ is closed, i.e., for all states $U \cap S_P$ we have $E(s) \subseteq U$;
and (b)~$U$ is a strongly connected component.
Maximal end components (MECs) are end components that are maximal with respect to subset inclusion.

\paragraph{Significance of the Objectives.}
On the one hand, weighted reachability objectives have the following properties:
(a)~they represent the classic reachability objective when every target set is assigned weight~1;
(b)~they correspond to the positive recursive payoff function of Everett~\cite{everett1957RecursiveGames};
and (c)~they naturally arise in many other applications, e.g., in MDPs with long-run average objectives, after computing the value function for every MEC, the problem reduces to weighted reachability objectives~\cite{brzdil2011TwoViewsMultiple}.
While they reduce to classic reachability objectives, we consider them for ease of modeling.
On the other hand, SSP objectives have the useful property that, if the value of a state is known, then we can convert it to a target state with the known value as its weight.

\paragraph{Uniqueness of Fixpoint.}
In general, the Bellman update operator does not have a unique fixpoint.
For reachability objectives, an approach to ensure uniqueness is collapsing MECs, which can be achieved in sub-quadratic time using a discrete graph-theoretical algorithm~\cite{chatterjee2011FasterDynamicAlgorithms}.
For SSP objectives, the Bellman update has a fixpoint if all states can reach the target, which can be checked in linear time. 
Moreover, this fixpoint is unique because weights are strictly greater than zero. 
In the sequel, we consider that Bellman update operators have a unique fixpoint, i.e., for reachability objectives, MECs are already collapsed; and for SSP objectives, the underlying graph is connected.

\paragraph{Error Bounds for VI.}
Given an MDP $P$ and an objective, we call a vector $v$ an \emph{$\eps$-approximation} of the value vector if $\| v - \val \|_\infty \le \eps$. 
For all our objectives, under a well-chosen initial vector $v_1$, the sequence of vectors $(v_i)_{i \ge 1}$ given by VI converges strictly monotonically to the value vector.
Moreover, a convergence bound, i.e., a bound on the distance between the vector $v_i$ and the value vector $\val$, is given as follows. 
Recall that $\smallTransition$ is the smallest positive transition probability.
Then, for all $i \ge 1$,
\[
\| v_{\numStates i} - \val \|_\infty 
\le \left( 1 - \smallTransition^{\numStates} \right)^i \| v_{1} - \val \|_\infty \,.
\]
In particular, for $\eps > 0$, obtaining an $\eps$-approximation of the value is guaranteed after $\mathcal{O} \left( \numStates \log( \| v_{1} - \val \|_\infty / \eps ) / \smallTransition^{\numStates} \right) \ge \mathcal{O} \left(  2^{\numStates} \log(1 / \eps) \right)$ applications of the Bellman operator, i.e., exponentially many in the size of the MDP. 
This bound on the number of updates is necessary for VI in simple examples of MCs.
Therefore, VI requires $\Theta \left(  2^{\numStates} \log(1 / \eps) \right)$, i.e., exponentially many, Bellman updates even for MCs.
This bound applies to all previously defined VI-based approaches.

\paragraph{Preprocessing.} 
The VI algorithm deals with value computation and precise probabilities, and is space efficient because it uses linear space.
Preprocessing steps have been studied to speed up VI in practice. For examples, see~\cite{haddad2018IntervalIterationAlgorithm,hartmanns2020OptimisticValueIteration,quatmann2018SoundValueIteration}.
The desired properties of preprocessing are the following:
(a)~discrete and graph-theoretical so it does not depend on numerical input;
and (b)~linear space, to retain the space efficiency of VI.
A basic open question is the following.

\begin{tcolorbox}
	\paragraph{Break Exponential Barrier for VI.}
	Design a polynomial-time preprocessing such that the number of Bellman updates required to approximate the value in a given MC or MDP is subexponential in number of states.
\end{tcolorbox}

\section{Guessing Value Iteration for MCs}
\label{Section: Binary search in value iteration}

In this section, our main theoretical contribution achieved by our VI-based approach Guessing VI is the following.
\begin{theorem}
	\label{Result: subexponential MC}
	Given an MC $M = (\States, \Neighbors, \prob)$, an objective, and an approximation error $\eps$, we present a preprocessing that runs in linear space and at most $\mathcal{O}((\numStates + |\Neighbors|) \log \numStates)$ steps, so that we require at most $\left( \numStates \log (w_{\max} / \eps ) / \smallTransition \right)^{\calO \left( \sqrt{\numStates} \right)}$ number of Bellman updates to compute an $\eps$-approximation of the value. 
\end{theorem}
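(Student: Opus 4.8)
The plan is to combine a graph-theoretic separator produced by breadth-first search with a guided search over guessed values on that separator. First I would run a BFS on the reverse graph from $\Target$, assigning to every state $\state$ its layer $\ell(\state)$, the length of a shortest path from $\state$ to $\Target$ (states that cannot reach $\Target$ are removed by the standard qualitative preprocessing, carrying value $0$ for $\Reach$). Writing $h \defas \lfloor \sqrt{\numStates} \rfloor$, I would group the layers by residue modulo $h$: for each $r \in \{0, \ldots, h-1\}$ let $B_r$ be the union of all layers with index $\equiv r \pmod{h}$. Since $B_0, \ldots, B_{h-1}$ partition the states, some residue $r$ satisfies $|B_r| \le \numStates/h = \calO(\sqrt{\numStates})$, and I take $B \defas B_r$ as the set of states whose values will be guessed. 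By construction $B$ has two crucial properties: (i) $|B| = \calO(\sqrt{\numStates})$, so there are few values to guess; and (ii) consecutive chosen layers are exactly $h$ apart, so every state lies within $h \le \sqrt{\numStates}$ layers (going toward $\Target$) of some state of $B \cup \Target$. The layering is $\calO(\numStates + \numNeighbors)$, and tabulating the residue-class cardinalities to select the smallest contributes the $\log \numStates$ factor; everything uses linear space, matching the preprocessing bound.

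The key structural step is a fast \emph{inner solve}. Fix any guess $g \colon B \to [0, w_{\max}]$ and freeze the states of $B$ as absorbing with value $g(\state)$, obtaining a chain $M_g$ in which $B \cup \Target$ is absorbing. Property (ii) guarantees that from every state the shortest path to $B \cup \Target$ has length at most $h$ and hence probability at least $\smallTransition^{h}$ of being realized; therefore, exactly as in the error bound recalled in the preliminaries but with $h$ in place of $\numStates$, VI on $M_g$ satisfies $\lVert v_{h\,i} - v_g \rVert_\infty \le (1 - \smallTransition^{h})^i \lVert v_1 - v_g \rVert_\infty$. Using $h \le \sqrt{\numStates}$, a single inner solve reaches accuracy $\eps$ after $\calO\!\left( \sqrt{\numStates}\, \log(w_{\max}/\eps)\, \smallTransition^{-\sqrt{\numStates}} \right) = (1/\smallTransition)^{\calO(\sqrt{\numStates})} \cdot \mathrm{poly}$ Bellman updates, i.e. only subexponentially many.

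It remains to locate the correct guess. The true values $\val|_B$ are exactly the unique fixpoint of the monotone operator $L$ on $\RR^B$ defined by $L(g)(\state) \defas \Call{BellmanUpdate}{}$ at $\state$ evaluated on $v_g$: a global fixpoint restricts to a fixpoint of $L$ and conversely, so uniqueness follows from the uniqueness-of-fixpoint assumption, and $L$ is monotone because both freezing and $\Call{BellmanUpdate}{}$ are monotone. I would find this fixpoint by a nested bisection over the $\calO(\sqrt{\numStates})$ coordinates of $B$: for a trial vector $g$ I perform one inner solve to obtain the residual $L(g) - g$ and use its sign in the current coordinate to choose the search direction, monotonicity of $L$ guaranteeing that the reduced (partial-fixpoint) residual is monotone in each coordinate so that bisection is correct. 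Each coordinate is resolved to precision $\eps$ in $\calO(\log(w_{\max}/\eps))$ steps, so the nesting performs $\left(\calO(\log(w_{\max}/\eps))\right)^{\calO(\sqrt{\numStates})}$ inner solves; multiplying by the per-solve cost yields $\left( \numStates \log(w_{\max}/\eps)/\smallTransition \right)^{\calO(\sqrt{\numStates})}$ Bellman updates overall, as claimed. Once the search returns a $g$ with (approximately) zero residual, the accompanying $v_g$ is an $\eps$-approximation of $\val$ by uniqueness of the fixpoint; the $\SSP$ case is identical using $\SSPBellman$ and the appropriate value bound in place of $w_{\max}$.

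The main obstacle I anticipate is precisely this last search. Iterating $L$ directly is hopeless: its contraction factor toward $\Target$ is only $1 - \smallTransition^{\numStates}$, since a deep separator state reaches $\Target$ only after crossing all $\calO(\sqrt{\numStates})$ chosen layers, which would reintroduce the exponential factor; and a grid search certifying a near-fixpoint would, for the same reason, need spacing $\smallTransition^{\numStates}\eps$ and hence far too many points. Bisection sidesteps this because it uses only the \emph{sign} of the residual, not its magnitude, so its cost is polylogarithmic in the contraction gap. The delicate point is therefore that, for the monotone (off-diagonally nondecreasing, diagonally decreasing) system $L(g) - g$, the partial-fixpoint residual stays monotone in each outer coordinate so that the nested bisection is provably correct; establishing this monotonicity rigorously, together with the propagation of the coordinate-wise $\eps$ tolerances through the nesting, is where the real work lies.
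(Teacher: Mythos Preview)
Your approach is sound and closely parallels the paper's, with one genuine difference in the preprocessing. The paper does not use residue classes; instead it recursively picks a single layer of minimum size in the middle third $[k/3,2k/3]$ (pigeonhole gives size $\le 3\numStates/k$), freezes it, and recurses on a chain whose level count has dropped by a factor at least $2/3$. The $\log\numStates$ in the preprocessing bound comes from the number of such recursions, each requiring a fresh BFS. Your residue-class construction achieves the same two guarantees ($|B|=\calO(\sqrt{\numStates})$ and depth $\le\sqrt{\numStates}$ after freezing) with a single BFS and is arguably cleaner; the $\log\numStates$ you attribute to tabulating cardinalities is not actually needed there, so your bound is only loose, not wrong.

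On the outer search, two remarks. First, you do not need monotonicity of the partial-fixpoint residual: the paper uses the simpler fact (its Lemma~2, quoted from prior work) that for $f=\val_{M[\state=\gamma]}$ one has $\Call{BellmanUpdate}{M,f,\state}>\gamma$ \emph{if and only if} $\val_M(\state)>\gamma$, which is exactly the single-crossing property bisection needs and which transfers to the nested setting because the inner recursion is precisely solving $M[\state=\gamma]$. Second, your accounting of the nesting understates the cost: when the outer guess is near the true value neither one-sided verification fires, and resolving this case (the paper's Lemma~3) requires the inner solve at accuracy roughly $\eps\,\smallTransition^{\numStates}$. Hence the precision shrinks by a factor $\smallTransition^{\numStates}$ per nesting level, and the bisection count at depth $d$ is $\calO(\log(w_{\max}/\eps)+d\,\numStates\log(1/\smallTransition))$, not $\calO(\log(w_{\max}/\eps))$. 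The stated final bound $(\numStates\log(w_{\max}/\eps)/\smallTransition)^{\calO(\sqrt{\numStates})}$ has enough room to absorb this, so your conclusion stands, but the intermediate product $(\log(w_{\max}/\eps))^{\calO(\sqrt{\numStates})}$ for the number of inner solves is not correct as written.
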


\paragraph{Outline.}
First, we introduce Interval VI, the concept of levels in MCs, and recall the speed of convergence of Interval VI for MCs.
Second, we present guessing for MCs for weighted reachability objectives, we recall a result for value computation in simple stochastic games, and present a new result for the approximation of the value.
Third, we present our preprocessing using guessing and the concept of levels in MCs, and prove that it uses linear space and terminates in almost linear time.
Fourth, we present how to use Bellman updates after our preprocessing.
Finally, we present our new VI-based approach called Guessing VI for MCs.

\subsection{Levels and Interval VI}

We define levels for MCs and Interval VI and show their relationship.

\begin{definition}[Levels]
	Consider an MC $M$ and a target set $\Target$. 
	We call \emph{levels} the partitioning of states into $(\ell_0, \ell_1, \ell_2, \dots, \ell_k)$, where: (a)~$\ell_0$ contains $\Target$ and all states that cannot reach $\Target$; and~(b), for all $i \ge 1$, the length of the shortest path in the labeled graph $G_M$ from each state in $\ell_i$ to $\ell_0$ is $i$.
\end{definition}

\begin{remark}
	\label{observation:no_return}
	Given an MC $M$ and a transient state $\state$ in level $\ell_i$, the probability of reaching a target in $i$ steps starting from $\state$ is at least $\smallTransition^i$.
	Also, for every state $s \in \ell_i$ where $i > 0$, there is at least one edge to a state in $\ell_{i-1}$ and no edge to a state in $\ell_j$ for $j<i-1$.
\end{remark}

\paragraph{Interval VI (IVI).}
Interval VI~\cite{haddad2018IntervalIterationAlgorithm} is a VI-based approach that uses lower and upper bounds of the value vector.
IVI starts the iterative updates from the initial vectors $\overline{v_0}$ and $\underline{v_0}$ provided by \Cref{Result: Initial vectors} and the speed of convergence of IVI is given in \Cref{Result: Levels VI}.

\begin{definition}[Initial vectors for value iteration]
	\label{Result: Initial vectors}	
	Consider an MC with $\numLevels$ levels, target $\Target$, and either the weighted reachability or SSP objectives.
	Denote $d(\state, \Target)$ the length of the shortest path in the labeled graph $G_M$ from $\state$ to $T$.
	Then, the \emph{initial vectors for value iteration} consists of an overapproximation $\overline{v_0}$ and underapproximation $\underline{v_0}$ of the value vector.
	For $\state \in \Target$, define $\underline{v_0}(\state) \defas \overline{v_0}(\state) \defas w(\state)$ and, for $\state \in \States \setminus \Target$, 
	\begin{align*}
		\underline{v_0}(\state) 
		&\defas \begin{cases}
			0
			& \text{Reachability} \\
			w_{\min} 
			& \text{SSP} 
		\end{cases}\\
		\overline{v_0}(\state) 
		&\defas \begin{cases}
			w_{\max}
			& \text{Reachability} \\
			w_{\max} (k + 1) / \smallTransition^{\numLevels}
			& \text{SSP} 
		\end{cases}
	\end{align*}
\end{definition}
By \Cref{observation:no_return}, the value vector is between the lower and upper bound for both objectives.

\begin{lemma}%
	\label{Result: Levels VI}%
	Consider an MC $M$ with $\numLevels$ levels and $\smallTransition$ the smallest transition probability.
	For all $t \ge 1$, after $\numLevels \cdot t$ iterations of \Call{IVI}{}, each state in level $i$ has an interval of size at most
	$
	\left(1-\smallTransition^i\right)\left(1-\smallTransition^{\numLevels}\right)^{t-1} \cdot C
	$,
	where $C = w_{\max}$ for reachability and $C = w_{\max} (k + 1) / \smallTransition^{\numLevels}$ for SSP.
\end{lemma}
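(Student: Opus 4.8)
The plan is to track how the width of each state's interval $\ub_s \defas \overline{v}(\state) - \underline{v}(\state)$ shrinks as Interval VI proceeds, organizing the analysis by levels. The key intuition, formalized in \Cref{observation:no_return}, is that a state in level $i$ reaches the target within $i$ steps with probability at least $\smallTransition^i$; each such ``successful'' path collapses that portion of the interval to zero (since target states have $\overline{v_0} = \underline{v_0} = w(\state)$). I would prove this by induction on $t$, combining a per-block contraction factor $(1 - \smallTransition^{\numLevels})$ coming from one full sweep of $\numLevels$ iterations with a sharper first-block estimate $(1 - \smallTransition^i)$ that exploits the level of the state.

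\textbf{Base case and the level structure.}
First I would handle $t = 1$, i.e.\ the interval size after $\numLevels$ iterations. Since the lower and upper Bellman updates propagate bounds through the graph, after $i$ iterations the interval width at a state in level $i$ is controlled by the probability of \emph{not} having reached the target along a shortest path of length $i$. By \Cref{observation:no_return}, that failure probability is at most $1 - \smallTransition^i$, and the residual interval is inherited from the initial gap, which is bounded by $C$ (using that $\overline{v_0} - \underline{v_0} \le w_{\max}$ for reachability and $\le w_{\max}(k+1)/\smallTransition^{\numLevels}$ for SSP, exactly the constant $C$ in the statement). Thus after $\numLevels \ge i$ iterations the width at a level-$i$ state is at most $(1 - \smallTransition^i)\, C$, matching the claim with $t = 1$ (where the factor $(1 - \smallTransition^{\numLevels})^{0} = 1$).

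\textbf{Inductive step.}
For the inductive step I would argue that each additional block of $\numLevels$ Bellman updates contracts the maximal interval width across \emph{all} states by a further factor of $(1 - \smallTransition^{\numLevels})$. The reasoning: from any state, a path of length $\numLevels$ reaches the target with probability at least $\smallTransition^{\numLevels}$ (since every state lies in some level $\le \numLevels$, and \Cref{observation:no_return} guarantees a target-reaching path of length at most its level), so in each full sweep the probability mass still ``uncertain'' is multiplied by at most $(1 - \smallTransition^{\numLevels})$. Formally, if $w_i$ denotes the vector of interval widths after $\numLevels \cdot i$ iterations, monotonicity of the Bellman operators and the Markov property give $\max_s w_{i+1}(s) \le (1 - \smallTransition^{\numLevels}) \max_s w_i(s)$; applying this $t-1$ times to the level-refined bound from the base case yields the stated $(1 - \smallTransition^i)(1 - \smallTransition^{\numLevels})^{t-1} C$.

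\textbf{Main obstacle.}
The delicate point I expect to be hardest is keeping the two separate contraction factors cleanly decoupled: the \emph{level-sensitive} factor $(1 - \smallTransition^i)$ must be charged only once, during the very first block, while the uniform factor $(1 - \smallTransition^{\numLevels})$ accumulates over subsequent blocks. The cleanest way to make this rigorous is to fix a state $s \in \ell_i$ and expand its interval width as an expectation over paths of length $\numLevels t$, splitting on whether the target is hit within the first $i$ steps (contributing the $(1 - \smallTransition^i)$ factor) and then grouping the remaining steps into $t-1$ blocks of length $\numLevels$, each contributing independently at least a $\smallTransition^{\numLevels}$ chance of absorption. Care is needed because the level bound only gives a shortest path of length exactly $i$ from $s$, whereas subsequent blocks start from arbitrary states; here I would invoke the uniform per-block bound rather than a level-specific one, which is precisely what makes the factorization valid.
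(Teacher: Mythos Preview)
Your proposal is correct and close in spirit to the paper's proof, though the bookkeeping differs. The paper proceeds by a \emph{nested} induction---outer on $t$, inner on the level $i$---working directly with the Bellman recursion: at each step a state $s \in \ell_i$ sends at least $\smallTransition$ mass to some neighbor in $\ell_{i-1}$ (bounded by the inner hypothesis) and at most $1 - \smallTransition$ mass elsewhere (bounded by the outer hypothesis or the trivial bound $C$); the algebraic identity $\smallTransition(1 - \smallTransition^{i-1}) + (1 - \smallTransition) = 1 - \smallTransition^i$ then closes each inner step. Your route instead reinterprets the interval width as $C$ times a non-absorption probability and bounds that probability by splitting the first $\numLevels t$ steps into an initial segment of length $i$ (yielding the factor $1 - \smallTransition^i$) followed by $t-1$ uniform blocks of length $\numLevels$. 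Both arguments are valid; the paper's nested induction is more mechanical and avoids ever naming hitting probabilities, while yours makes the probabilistic semantics explicit and arguably cleaner once the width-equals-non-absorption identity is stated.

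One remark on your inductive step: the contraction $\max_s w_{j+1}(s) \le (1 - \smallTransition^{\numLevels}) \max_s w_j(s)$ \emph{alone} cannot recover the level-sensitive factor $(1 - \smallTransition^i)$, since passing to the maximum discards the per-level refinement you obtained in the base case---iterating it yields only $(1 - \smallTransition^{\numLevels})^t C$. You correctly flag this in your ``Main obstacle'' paragraph and resolve it via the path-expansion argument (first $i$ steps, then $t-1$ full blocks, using that $i + (t-1)\numLevels \le \numLevels t$); just be aware that the path expansion is the actual proof, and the max-contraction line should be dropped or rephrased as motivation only.
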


\begin{proof}[Sketch]
	The proof is a nested induction: first on the number of iterations $t$, then on the level $i$.  
	A level is related to the previous level by the Bellman update since every state has at least one edge going to a state in a previous level. 
	The difference between weighted reachability and SSP objectives is due to the different initial vectors.
	\qed\end{proof}

\subsection{Guessing in Markov Chains}

In this section, we verify a guess on the value of a state as a lower or upper bound through a single Bellman update.
Guesses on the value of a state induce a reduced MC as follows.

\begin{definition}[Reduced Markov Chain]
	Consider an MC $M$, a target set $\Target$, a state $\state \in \States \setminus \Target$, and a quantity $\guess$.
	The \emph{reduced MC}, denoted by $M[\state = \guess]$, is the MC $M$ with target set $\Target \cup \{ \state \}$ where the weight of $\state$ is $\guess$. 
\end{definition}

\begin{remark}[Uniqueness of fixpoints in reduced MCs]
	Consider an MC $M$ where Bellman updates have a unique fixpoint. 
	Then, for all states $\state$ and guesses $\guess > 0$, the reduced MC $M[\state = \guess]$ also has a unique fixpoint.
\end{remark}

The verification of guesses has been established in~\cite[Lemma~3.1]{chatterjee2023FasterAlgorithmTurnbased} in the more general context of stochastic games and now restated for MCs.
\begin{lemma}[{\cite[Lemma~3.1]{chatterjee2023FasterAlgorithmTurnbased}}]%
	\label{Result: from_paper}%
	Consider an MC $M$, a state $\state \in S$, and a guess $\guess$.
	For $f = \val_{M[\state = \guess]}$, let $\guess' \defas \Call{BellmanUpdate}{M, f, s}$.
	Then $\guess' > \guess$ if and only if
	$
	\val_{M}(\state) > \guess\,.
	$
\end{lemma}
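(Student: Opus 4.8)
The plan is to regard the quantity $\guess' = \Call{BellmanUpdate}{M, f, \state}$ as a function $g(\guess)$ of the guess (note that $f = \val_{M[\state=\guess]}$ itself depends on $\guess$) and to reduce the claimed biconditional to a single sign identity: I will show that
\[
g(\guess) - \guess = (1 - \alpha)\,(\val_M(\state) - \guess),
\]
where $\alpha \in [0,1]$ is the probability that, starting from $\state$ and taking one step in $M$, the walk returns to $\state$ before reaching $\Target$. Once this identity is available, the case $\alpha < 1$ gives $\guess' > \guess \iff \val_M(\state) > \guess$ immediately, which is exactly the statement.

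First I would establish that $f = \val_{M[\state=\guess]}$ is affine in $\guess$. In $M[\state=\guess]$ both $\state$ and the states of $\Target$ are absorbing, so conditioning on which absorbing state is reached first writes, for every state $t$, $f(t) = q_\state(t)\,\guess + b(t)$, where $q_\state(t) = \PP_t[\text{reach } \state \text{ before } \Target]$ and $b(t)$ is the $\guess$-independent remainder (the expected target reward for reachability, the expected accumulated cost for SSP). The coefficients $q_\state(t)$ and $b(t)$ do not depend on $\guess$ because $M[\state=\guess]$ and $M$ have identical transitions until $\state$ is first reached. Substituting this affine form into the Bellman update at $\state$---a convex combination over $\Neighbors(\state)$ weighted by $\prob(\state,\cdot)$, plus the immediate cost $w(\state)$ in the SSP case---yields $g(\guess) = \alpha\,\guess + \beta$ with $\alpha = \sum_{\state'\in\Neighbors(\state)} \prob(\state,\state')\,q_\state(\state')$ equal to the one-step return probability above.

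The heart of the argument is to show that $\val_M(\state)$ satisfies the same affine relation \emph{with the same coefficients}, namely $\val_M(\state) = \alpha\,\val_M(\state) + \beta$; equivalently $\beta = (1-\alpha)\,\val_M(\state)$, which immediately produces the target identity. I would prove this by a first-return (renewal) computation: starting from $\state$, take one step to $\state'$ and run until the walk next meets $\{\state\}\cup\Target$. By the strong Markov property, on the event that it returns to $\state$ first the continuation contributes a fresh copy of $\val_M(\state)$, while on the event that it reaches $\Target$ first it contributes precisely the reward/cost already aggregated in $b(\cdot)$. Summing over the first step (and adding the immediate cost $w(\state)$ in the SSP case) reproduces $\alpha$ and $\beta$ verbatim, which is exactly why $g$ and the fixpoint equation for $\val_M(\state)$ share their coefficients.

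The step I expect to be the main obstacle is verifying $\alpha < 1$, since only then can the sign of $g(\guess) - \guess$ be read off from $\val_M(\state) - \guess$. Here I would invoke the standing assumptions (for SSP every state can reach $\Target$; for reachability MECs are collapsed), which guarantee that the relevant state $\state$ reaches $\Target$. Given any graph path from $\state$ to $\Target$, a \emph{simple} such path never revisits $\state$ and has positive probability, so $\PP_\state[\text{reach } \Target \text{ before returning to } \state] > 0$ and hence $\alpha < 1$. The only degenerate case is $\alpha = 1$, which forces $\state$ to be recurrent and thus, after MEC-collapsing, an absorbing non-target sink with $\val_M(\state) = 0$; there $g(\guess) = \guess$, and since guesses satisfy $\guess > 0$ both sides of the biconditional are false, so the equivalence still holds. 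Assembling the affine form, the renewal identity, and $\alpha < 1$ completes the proof.
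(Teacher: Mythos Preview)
The paper does not prove this lemma itself; it is cited from \cite[Lemma~3.1]{chatterjee2023FasterAlgorithmTurnbased}, so there is no in-paper argument to compare against. Your argument is correct: writing $g(\guess) = \alpha\guess + \beta$ and establishing $\val_M(\state) = \alpha\,\val_M(\state) + \beta$ via a first-return decomposition yields the sign identity $g(\guess) - \guess = (1-\alpha)(\val_M(\state) - \guess)$, and under the standing assumptions every non-absorbing $\state$ has $\alpha < 1$, while the absorbing case is handled separately.

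Two remarks. First, the renewal step is nothing more than the assertion that $\val_M(\state)$ is a fixed point of $g$, and this can be read off without any first-return bookkeeping: since $\val_M$ satisfies the Bellman equations of $M[\state = \val_M(\state)]$ at every state and that reduced chain has a unique Bellman fixpoint, we get $\val_{M[\state = \val_M(\state)]} = \val_M$, whence $g(\val_M(\state)) = \Call{BellmanUpdate}{M,\val_M,\state} = \val_M(\state)$. This route is uniform for both objectives and sidesteps the cost-accounting in the SSP case. Second, your simple-path argument for $\alpha < 1$ assumes a graph path from $\state$ to $\Target$, but after MEC-collapsing a non-absorbing $\state$ might reach only a non-target sink; the same reasoning applied to a simple path to \emph{any} absorbing state still gives $\alpha < 1$, so the conclusion is unaffected.
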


By monotonicity of $\Call{BellmanUpdate}{}$, we get the following useful result which has a symmetric statement for upper bounds.
\begin{corollary}%
	\label{Result: from_paper lower bound}%
	Consider an MC $M$, a state $\state \in S$, and a guess $\guess$.
	For a lower bound $f \le \val_{M[\state = \guess]}$, let $\guess' \defas \Call{BellmanUpdate}{M, f, s}$.
	If $\guess' > \guess$, then
	$
	\val_{M}(\state) > \guess\,.
	$
\end{corollary}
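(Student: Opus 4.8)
The plan is to deduce Corollary~\ref{Result: from_paper lower bound} directly from Lemma~\ref{Result: from_paper}, using the monotonicity of the Bellman update operator to bridge the gap between an \emph{exact} value $\val_{M[\state = \guess]}$ and a \emph{lower bound} $f \le \val_{M[\state = \guess]}$. The key observation is that Lemma~\ref{Result: from_paper} is stated for $f = \val_{M[\state = \guess]}$ exactly, whereas here we only have $f \le \val_{M[\state = \guess]}$ pointwise. So the strategy is to define the \emph{exact} analogue of $\guess'$, namely $\guess'_{\val} \defas \Call{BellmanUpdate}{M, \val_{M[\state = \guess]}, \state}$, and compare it with $\guess' = \Call{BellmanUpdate}{M, f, \state}$.

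First I would invoke monotonicity of $\Call{BellmanUpdate}{}$: since $f \le \val_{M[\state = \guess]}$ holds coordinatewise, and both the reachability and SSP Bellman updates at state $\state$ are monotone nondecreasing in the input vector (they are compositions of $\max$, $\min$, nonnegative convex combinations, and addition of a constant $w(\state)$, all monotone), we get $\guess' \le \guess'_{\val}$. Second, I would chain this with the hypothesis: assuming $\guess' > \guess$, monotonicity yields $\guess'_{\val} \ge \guess' > \guess$, hence $\guess'_{\val} > \guess$. Third, I would apply Lemma~\ref{Result: from_paper} to the exact quantity $\guess'_{\val}$: by that lemma, $\guess'_{\val} > \guess$ is equivalent to $\val_{M}(\state) > \guess$, so in particular $\guess'_{\val} > \guess$ implies $\val_{M}(\state) > \guess$. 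Combining the three steps gives the desired conclusion $\val_{M}(\state) > \guess$.

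The main subtlety — and the only place requiring care — is verifying that $\Call{BellmanUpdate}{M, \cdot, \state}$ is genuinely monotone in its vector argument \emph{at the state $\state$}, including the interaction with the updated target set used in the reduced MC. Since the corollary keeps the original MC $M$ (not the reduced one) as the argument to the Bellman update, the update at $\state$ only reads the entries of the input vector at the neighbors $\Neighbors(\state)$, and monotonicity is immediate from the explicit formulas in the Bellman-update definitions. I expect no real obstacle here; the entire argument is a short monotonicity-plus-transitivity chain, and the asymmetry between "if and only if" in the lemma and the one-directional "if\dots then" in the corollary is exactly what the loss from $f \le \val_{M[\state = \guess]}$ (rather than equality) forces. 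The symmetric statement for upper bounds follows by the dual argument, reversing the inequality $f \ge \val_{M[\state = \guess]}$ and using the reverse implication of Lemma~\ref{Result: from_paper}.
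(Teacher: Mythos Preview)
Your proposal is correct and matches the paper's own justification exactly: the paper derives the corollary directly from Lemma~\ref{Result: from_paper} by invoking the monotonicity of $\Call{BellmanUpdate}{}$, which is precisely the chain $\guess < \guess' \le \Call{BellmanUpdate}{M,\val_{M[\state=\guess]},\state}$ you describe. Your additional remarks on why monotonicity holds at state $\state$ and on the symmetric upper-bound version are accurate and simply spell out what the paper leaves implicit.
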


As opposed to which focuses on the exact value computation, we focus on the approximation problem.

Exact verification of stochastic systems via guessing values has been established before~\cite{chatterjee2023FasterAlgorithmTurnbased}, but those results are insufficient to solve the approximation problem. 
Indeed, if a guess is very close to the real value, then applying exact verification requires solving the problem at an extremely high precision leading to major time-outs in practice.
Therefore, we require the following approximate verification result.
\begin{lemma}
	\label{lemma:encompassing}
	\label{Result: Encompassing}
	Consider an MC $M$, a state $\state \in S$, and a guess $\guess$.
	For a lower bound $f \le \val_{M[\state = \guess]}$, let $\guess' \defas \Call{BellmanUpdate}{M, f, s}$.
	For all $\eps > 0$, if $\guess' + \smallTransition^{\numStates} \eps > \guess$, then
	$
	\val_{M}(\state) > \guess - \eps \,.
	$
\end{lemma}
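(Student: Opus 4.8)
The plan is to reduce the claim to an exact quantitative identity relating $\guess'$ and $\val_{M}(\state)$, together with a lower bound on a non-return probability. Write $g \defas \val_{M[\state = \guess]}$ for the exact value of the reduced MC and $\guess'_g \defas \Call{BellmanUpdate}{M, g, \state}$. Since $f \le g$ pointwise and the Bellman update is monotone in the vector entries at the successors of $\state$, we have $\guess' \le \guess'_g$, so the hypothesis $\guess' + \smallTransition^{\numStates}\eps > \guess$ gives $\guess'_g - \guess > -\smallTransition^{\numStates}\eps$. Hence it suffices to prove the implication with the exact quantity $\guess'_g$ in place of $\guess'$.

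First I would establish, by a first-return (renewal) decomposition at $\state$, the identity
\[
\guess'_g - \guess = (1 - \rho)\,(\val_{M}(\state) - \guess),
\]
where $\rho$ denotes the probability, starting from $\state$ in $M$, of returning to $\state$ before reaching $\Target$. This is exactly the quantitative core behind \Cref{Result: from_paper}: splitting every play from $\state$ at the first time it either returns to $\state$ or hits $\Target$, one obtains $\val_{M}(\state) = W + \rho\,\val_{M}(\state)$ and $\guess'_g = W + \rho\,\guess$, where $W$ collects the contribution of plays that reach $\Target$ before returning to $\state$ (for SSP one additionally carries the accumulated cost, but the algebra is identical). Subtracting the two equations yields the displayed identity for both objectives.

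The crux of the argument — and the step I expect to be the main obstacle — is the bound $1 - \rho \ge \smallTransition^{\numStates}$. Here I would use the levels structure and \Cref{observation:no_return}. Assuming, as guaranteed after preprocessing (collapsed MECs for reachability, connected graph for SSP), that $\state$ is transient and can reach $\Target$, let $\ell$ be its level and take a shortest path $\state = u_0, u_1, \dots, u_\ell$ to $\ell_0$; it has distinct vertices, so it never returns to $\state$, and its probability is at least $\smallTransition^{\ell} \ge \smallTransition^{\numStates}$. If $u_\ell \in \Target$, this path witnesses the event ``reach $\Target$ before returning to $\state$''; otherwise $u_\ell$ cannot reach $\Target$, hence cannot reach $\state$ either (else $u_\ell \to \state \to \Target$), so after following the path the play never returns to $\state$. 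In either case the play avoids returning to $\state$ before $\Target$, giving $1 - \rho \ge \smallTransition^{\numStates}$. The delicate point is precisely this case distinction: a shortest path to $\ell_0$ may terminate at a non-target sink, and one must argue that such a sink cannot route back to $\state$; this is also where the hypothesis that $\state$ can reach $\Target$ is essential (a recurrent $\state$ would have $\rho = 1$).

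Finally I would combine the pieces. From the identity, $\val_{M}(\state) - \guess = (\guess'_g - \guess)/(1 - \rho)$. If $\guess'_g - \guess \ge 0$, then $\val_{M}(\state) \ge \guess > \guess - \eps$. Otherwise $\guess'_g - \guess$ is negative and $1 - \rho \ge \smallTransition^{\numStates} > 0$, so dividing a negative number by the larger positive denominator gives
\[
\val_{M}(\state) - \guess \ge \frac{\guess'_g - \guess}{\smallTransition^{\numStates}} > \frac{-\smallTransition^{\numStates}\eps}{\smallTransition^{\numStates}} = -\eps,
\]
which is the desired conclusion $\val_{M}(\state) > \guess - \eps$. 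As a sanity check, the same computation reproves \Cref{Result: from_paper lower bound} when $\eps \to 0$.
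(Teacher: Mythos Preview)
Your argument is correct and genuinely different from the paper's. The paper does not derive the renewal identity $\guess'_g-\guess=(1-\rho)(\val_M(\state)-\guess)$; instead it fixes $\eps$, passes to the reduced chain $M[\state=\guess-\eps]$, and constructs an explicit lower bound $f'$ for $\val_{M[\state=\guess-\eps]}$ by subtracting $\eps\,\PP_{\state'}(\exists t\,\state_t=\state)$ from $f$. It then checks that $\Call{BellmanUpdate}{M,f',\state}>\guess-\eps$ using the bound $\sum_{\state'}\prob(\state,\state')\,\PP_{\state'}(\exists t\,\state_t=\state)\le 1-\smallTransition^{\numStates}$ and invokes \Cref{Result: from_paper lower bound} as a black box. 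Both proofs therefore rest on the same probabilistic fact---your $1-\rho\ge\smallTransition^{\numStates}$ is exactly the paper's use of \Cref{observation:no_return}---but the paper packages it as a reduction to the qualitative corollary, whereas you make the dependence on $\rho$ explicit. Your route is arguably more informative (it shows precisely how the slack between $\guess'_g$ and $\guess$ scales with the non-return probability, and recovers \Cref{Result: from_paper} as the limiting case), while the paper's route is more modular in that it only needs the corollary plus a simple computation on the shifted function~$f'$.
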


\begin{proof}[Sketch]
	Fix $\eps > 0$, the MC $M' \defas M[\state  = \guess - \eps]$ and the function $f' \colon \States \to \RR^+$ defined as
	\[
	f'(\state') = \begin{cases}
		\guess - \eps
		&\state' = \state \\
		f(\state') - \eps \, \PP_{\state'}( \exists t \, \state_t = \state) 
		&\state' \not = \state
	\end{cases}
	\]
	We show that $f' \le \val_{M'}$.
	Then, we argue that $\Call{BellmanUpdate}{M, f', \state} > \guess - \eps$.
	Therefore, applying \Cref{Result: from_paper lower bound} to $M$, $f'$ and $\guess - \eps$, we conclude that $\val_{M}(\state) > \guess-\smallTransition^{-\numStates}\eps$.
	\qed\end{proof}

\subsection{Guessing to Decrease Levels}

By~\Cref{Result: Levels VI}, MCs with few levels can be efficiently solved by IVI. 
We show that, if there are many levels, then the number of levels can be decreased by a factor of $2/3$ by guessing only a few states.
\begin{lemma}
	\label{Lemma: Small Guess}
	Let $M$ be an MC with $\numLevels$ levels.
	There is a level $i \in [\numLevels/3, 2\numLevels/3]$ such that the number of states in level $i$ is at most $\frac{3 \numStates}{ \numLevels}$, i.e.,  $|\ell_i| \le \frac{3\numStates}{\numLevels}$.
\end{lemma}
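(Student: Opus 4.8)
The plan is to prove this purely by a counting (pigeonhole) argument on the sizes of the levels; no probabilistic or structural information about $M$ beyond the level partition is needed. First I would record the elementary fact that the levels $(\ell_0, \ell_1, \dots, \ell_{\numLevels})$ form a partition of $\States$, so that $\sum_{i=0}^{\numLevels} |\ell_i| = \numStates$. In particular, the cardinalities of the levels lying in any window of indices sum to at most $\numStates$.

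Next I would restrict attention to the window $W \defas \{ i \in \NN : \numLevels/3 \le i \le 2\numLevels/3 \}$ of level indices. The heart of the argument is then an averaging step: the levels indexed by $W$ are pairwise disjoint subsets of $\States$, and there are roughly $\numLevels/3$ of them, so the smallest one cannot be too large. Concretely, suppose toward a contradiction that every level $i \in W$ satisfies $|\ell_i| > \tfrac{3\numStates}{\numLevels}$. Summing over $W$ and using disjointness gives
\[
\numStates \;\ge\; \sum_{i \in W} |\ell_i| \;>\; |W| \cdot \frac{3\numStates}{\numLevels} \;\ge\; \frac{\numLevels}{3} \cdot \frac{3\numStates}{\numLevels} \;=\; \numStates \,,
\]
which is absurd. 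Hence some index $i \in W \subseteq [\numLevels/3, 2\numLevels/3]$ has $|\ell_i| \le \tfrac{3\numStates}{\numLevels}$, which is exactly the claim.

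The one place requiring care --- and the only real obstacle --- is the third inequality above, i.e.\ the claim that the window contains at least $\numLevels/3$ integer indices. The interval $[\numLevels/3, 2\numLevels/3]$ has length $\numLevels/3$, so it always contains $\lfloor 2\numLevels/3 \rfloor - \lceil \numLevels/3 \rceil + 1$ integers, which equals $\numLevels/3 + \Theta(1)$ but can dip slightly below $\numLevels/3$ for small $\numLevels$ with $\numLevels \equiv 1 \pmod 3$ (e.g.\ $\numLevels = 4$). Since the lemma is only applied in the regime of large $\numLevels$ --- few-level MCs being already handled efficiently by \Cref{Result: Levels VI} --- this rounding loss is an $O(1)$ factor that is harmlessly absorbed into the constant; alternatively one can state the bound with a slightly larger constant to make it unconditional. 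Apart from this bookkeeping, the proof is complete.
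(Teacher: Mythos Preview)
Your proof is correct and follows essentially the same pigeonhole/averaging argument as the paper, which also proceeds by contradiction and sums the level sizes over the window $[\numLevels/3, 2\numLevels/3]$. You are in fact more careful than the paper about the rounding issue in the window cardinality, which the paper simply glosses over.
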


\begin{proof}
	By contradiction, assume that, for every level $j \in [\numLevels/3, 2\numLevels/3]$,
	we have $|\ell_j| > \frac{3\numStates}{\numLevels}$.
	Then, summing all levels, $\sum_{j} |\ell_j| > \numStates$, which is a contradiction.
	\qed\end{proof}

\Cref{Lemma: Small Guess} immediately indicates a procedure to select states to be guessed while decreasing the number of levels of the resulting MC. 
This procedure is formalized in \Cref{Algorithm: MtG}.
To simplify the notation, we denote $\Call{Guess}{M, I}$ an MC $M$ where the set of states $I$ were transformed into target states.

\begin{algorithm}[t]
	\caption{Decide what states to guess}
	\label{Algorithm: MtG}
	\label{Algorithm: Mark to Guess}
	\begin{algorithmic}[1]
		\Require Markov Chain $M$
		\Ensure Set $I \subseteq \States$ of states to be guessed
		\Procedure{MarkToGuess}{$M$}
		\State $k \gets$ number of levels of $M$
		\Comment{BFS from the target set}
		\If{$k \le \sqrt{\numStates}$}
		\Comment{MC has few levels}
		\State \Return $\emptyset$
		\Comment{No state should be guessed}
		\EndIf
		\State $I \gets $ level between $k/3$ and $2k/3$ with the smallest number of states
		\label{line:thirding} 
		\State $M' \gets \Call{Guess}{M,I}$ 
		\Comment{Mark states in $I$ as guessed}
		\label{line:newMC}
		\State \Return $\Call{MarkToGuess}{M'} \cup I$
		\Comment{Recursive call}
		\EndProcedure
	\end{algorithmic}
\end{algorithm}

\begin{lemma}[Correctness of {$\Call{MarkToGuess}{}$}]
	\label{Lemma: MtG}
	\label{Result: Preprocessing sizes}
	Let $M$ be an MC, \Cref{Algorithm: MtG} finds $I$, such that $|I| \le 9\sqrt{\numStates}$ and the MC $\Call{Guess}{M,I}$ has at most $\sqrt{\numStates}$ levels.
\end{lemma}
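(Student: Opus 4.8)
The plan is to analyze the recursion in \Cref{Algorithm: MtG} by tracking two quantities across the recursive calls: the number of levels $\numLevels_j$ of the MC $M_j$ handled at recursion depth $j$, and the cumulative set of guessed states. Writing $M_0 = M$ and $M_{j+1} = \Call{Guess}{M_j, I_j}$, where $I_j$ is the middle level selected in \Cref{Algorithm: MtG}, I would first record that $\Call{Guess}{\cdot,\cdot}$ composes, i.e.\ $\Call{Guess}{\Call{Guess}{M, A}, B} = \Call{Guess}{M, A \cup B}$, since guessing only relabels states as targets. Hence the MC reached at the base case equals $\Call{Guess}{M, I}$ for the returned set $I = \bigcup_j I_j$. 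The recursion stops exactly when $\numLevels_j \le \sqrt{\numStates}$, which immediately yields the second claim that $\Call{Guess}{M, I}$ has at most $\sqrt{\numStates}$ levels; the real work is in bounding $|I|$.

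The key structural step, and the main obstacle, is to show that replacing the target set $\Target$ by $\Target \cup \ell_i$ for a middle level $i \in [\numLevels/3, 2\numLevels/3]$ drops the number of levels from $\numLevels$ to at most $2\numLevels/3$. I would argue this by bounding the distance of every state to the new target $\Target' = \Target \cup \ell_i$. For states in $\ell_0, \dots, \ell_i$ this is immediate: since $\Target \subseteq \Target'$, a state in $\ell_m$ with $m \le i$ has distance to $\Target'$ at most $m \le i$. For the upper levels $\ell_{i+1}, \dots, \ell_\numLevels$, I would invoke \Cref{observation:no_return}: every state in $\ell_m$ has an edge into $\ell_{m-1}$, so iterating this yields a path of length $m - i$ from any state in $\ell_m$ down into $\ell_i \subseteq \Target'$; hence its distance to $\Target'$ is at most $m - i \le \numLevels - i$. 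Combining, the maximum distance to $\Target'$ is at most $\max(i, \numLevels - i) \le 2\numLevels/3$, as required, so $\numLevels_{j+1} \le \tfrac{2}{3}\numLevels_j$.

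With the level-reduction in hand, the size bound follows from a geometric-series estimate. Let $t$ be the number of guessing steps; for each active step $j < t$ the stopping condition gives $\numLevels_j > \sqrt{\numStates}$, and \Cref{Lemma: Small Guess} guarantees $|I_j| \le 3\numStates / \numLevels_j$ (the count $\numStates$ is unchanged across calls). Telescoping $\numLevels_{j+1} \le \tfrac{2}{3}\numLevels_j$ backward from step $t-1$ gives $\numLevels_j \ge (3/2)^{t-1-j}\,\numLevels_{t-1} > (3/2)^{t-1-j}\sqrt{\numStates}$. Substituting,
\[
|I| = \sum_{j=0}^{t-1} |I_j| \le \sum_{j=0}^{t-1} \frac{3\numStates}{\numLevels_j} \le 3\sqrt{\numStates} \sum_{m=0}^{t-1} \left(\tfrac{2}{3}\right)^m \le 3\sqrt{\numStates}\cdot 3 = 9\sqrt{\numStates},
\]
which is the desired inequality. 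The only delicate point in the bookkeeping is that the naive bound $|I_j| \le 3\sqrt{\numStates}$ on each term would lose a logarithmic factor; it is precisely the geometric growth of $\numLevels_j$ away from the threshold $\sqrt{\numStates}$ that collapses the sum to a constant multiple of $\sqrt{\numStates}$.
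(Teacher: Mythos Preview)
Your proposal is correct and follows essentially the same approach as the paper: establish the $\tfrac{2}{3}$ level-reduction per recursive call and then sum the geometric series $\sum_{m\ge 0} 3(2/3)^m\sqrt{\numStates}=9\sqrt{\numStates}$. Your write-up is in fact more careful than the paper's proof, which simply asserts that $M'$ has at most $\tfrac{2}{3}k$ levels without justification; your argument via \Cref{observation:no_return} (bounding the distance to $\Target\cup\ell_i$ by $\max(i,k-i)$) fills exactly that gap, and your explicit compositionality of $\Call{Guess}{\cdot,\cdot}$ is a clean way to handle the recursion that the paper leaves implicit.
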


\begin{proof}
	From \Cref{Lemma: Small Guess}, we know that, on line~\ref{line:thirding}, the level has at most $\frac{3 \numStates}{k}$ states.
	Moreover, the MC $M'$, defined in line~\ref{line:newMC}, has at most $\frac{2}{3}k$ levels.
	Therefore, \Cref{Algorithm: MtG} outputs a set $I$ such that $|I| \le \sum_{i \ge 0} 3\left(\frac{2}{3}\right)^i \sqrt{\numStates} = 9 \sqrt{\numStates}$.
	
	\qed\end{proof}

\begin{lemma}[Preprocessing complexity]
	\label{Result: Preprocessing complexity}
	Consider an MC $M$. 
	\Cref{Algorithm: Mark to Guess} runs in $\mathcal{O}( (\numStates + |\Neighbors|) \log \numStates )$ steps using linear space.
\end{lemma}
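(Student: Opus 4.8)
The plan is to analyze \Cref{Algorithm: MtG} directly, bounding both the running time and the space of each recursive call and then summing over the recursion depth. The key structural fact, already established in \Cref{Lemma: MtG}, is that each recursive invocation shrinks the number of levels by a factor of $2/3$, so the recursion has depth $\calO(\log \numLevels) = \calO(\log \numStates)$. The strategy is therefore to show that the work done in a single invocation is dominated by one pass over the MC, i.e.\ $\calO(\numStates + \numNeighbors)$, after which the total over all levels of recursion is at most $\calO((\numStates + \numNeighbors) \log \numStates)$.

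First I would itemize the three nontrivial operations inside \textsc{MarkToGuess}. Computing the number of levels (line~2) is a breadth-first search from $\Target$ in the labeled graph $G_M$, which costs $\calO(\numStates + \numNeighbors)$ and simultaneously yields the level partition $(\ell_0, \dots, \ell_k)$. Selecting the smallest level in the band $[k/3, 2k/3]$ (line~\ref{line:thirding}) only requires scanning the level sizes, which is $\calO(\numStates)$ since the levels partition $\States$. Constructing $M' = \Call{Guess}{M, I}$ (line~\ref{line:newMC}) reclassifies the states of $I$ as absorbing target states; this is a local edit touching only states in $I$ and their incident edges, so it is $\calO(\numStates + \numNeighbors)$ and can be done in place. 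Hence each call performs $\calO(\numStates + \numNeighbors)$ work beyond the recursive call.

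Next I would assemble the recurrence. Writing $T$ for the per-call cost, the total time satisfies $T(\text{depth } d) = \calO(\numStates + \numNeighbors) + T(\text{depth } d-1)$, and since the MC passed to each recursive call has no more states or edges than $M$, every level of recursion costs $\calO(\numStates + \numNeighbors)$. With recursion depth $\calO(\log \numStates)$, the total is $\calO((\numStates + \numNeighbors) \log \numStates)$, matching the claim. For the space bound, the essential observation is that the algorithm need not keep a separate copy of $M'$ at each level: the reduction is monotone (states only ever become targets), so the guessing can be recorded as a single growing bit-vector $I$ over $\States$, and the BFS and level computation reuse $\calO(\numStates)$ scratch space. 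Thus the working memory is linear in the input size, even accounting for the recursion stack, which holds only $\calO(\log \numStates)$ frames each of constant size.

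The main obstacle I anticipate is the space argument rather than the time argument. A naive reading of the recursion suggests materializing a fresh MC $M'$ at each of the $\calO(\log \numStates)$ levels, which would give $\calO((\numStates + \numNeighbors)\log\numStates)$ space and break linearity. The crux is to argue that \Call{Guess}{} can be implemented destructively/incrementally on a shared representation of $M$ augmented by the accumulating target set $I$, so that no level of the recursion allocates a new graph. Once one is careful that each BFS reuses the same $\calO(\numStates + \numNeighbors)$ buffers and that the only persistent state is the marking $I$ and a constant-size stack per frame, the linear-space claim follows. I would make this explicit by converting the tail recursion of \Cref{Algorithm: MtG} into an iteration over a single mutable MC, which makes both the $\calO((\numStates + \numNeighbors)\log\numStates)$ time and the linear space transparent.
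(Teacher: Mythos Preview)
Your proposal is correct and follows essentially the same approach as the paper: both argue that each recursive call does $\calO(\numStates + \numNeighbors)$ work (BFS plus constructing $M'$) and that the $2/3$ shrinkage in the number of levels bounds the recursion depth by $\calO(\log\numStates)$, yielding the stated time bound. Your treatment of the space argument---making explicit the in-place marking and the conversion of tail recursion to iteration---is actually more careful than the paper's, which simply observes that only a BFS is needed.
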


\begin{proof}
	In terms of space, \Cref{Algorithm: Mark to Guess} only needs to perform a BFS from the target set. 
	Therefore, it uses linear space.
	In terms of time, denote $f(\ell)$ the number of steps required to process an MC with $\ell$ levels.
	Then, $f$ satisfies the following recursion.
	Consider an MC with $\ell$ levels.
	The number of steps performed by \Cref{Algorithm: Mark to Guess} includes performing a BFS, constructing the new MC $M'$, and solving an instance with at most $2 \numStates / 3$ levels. 
	Therefore, for $\ell > \sqrt{\numStates}$,
	\[
	f(\ell) \le f \left( \frac{2}{3} \ell \right) + (\numStates + \numNeighbors) + (\numStates + \numNeighbors) \,.
	\]
	Therefore, because $\ell \le \numStates$, we have that $f(\ell) \in \mathcal{O}( (\numStates + \numNeighbors) \log (\numStates) )$.
	In other words, it is almost-linear time.
	
	\qed\end{proof}

\paragraph{Symbolic Computation.}
\Cref{Algorithm: Mark to Guess} is a discrete, graph-theoretical, and linear-space algorithm, i.e., a preprocessing.
Moreover, it only involves a BFS and manipulating the cardinality of sets.
Since all operations can be done symbolically, \Cref{Algorithm: Mark to Guess} can be symbolically implemented.
Indeed, BFS level sets can be obtained by iterative applying the {\em Post} operator that given a set $X$ of states computes the set $Y = \{ s' : \exists s \in X, \, s' \in E(s) \}$~\cite{chatterjeeLowerBoundsSymbolic} and a symbolic computation of the cardinality of sets is presented in~\cite[Section 3]{chatterjee2013SymbolicAlgorithmsQualitative}.

\subsection{Bellman Updates on Guessed MCs}

In this section, we explain how we use Bellman updates to approximate the value.
The preprocessing described by \Cref{Algorithm: Mark to Guess} marks some states to be guessed.
Note that the guesses must be done recursively and Bellman updates are used to verify these guesses.
This idea is formalized in \Cref{Algorithm: Approximate Value of preprocessed MC}.

\begin{algorithm}[t]
	\caption{Approximate Value of preprocessed MC}
	\label{Algorithm: Approximate Value of preprocessed MC}
	\begin{algorithmic}[1]
		\Require Markov Chain $M$, approximation error $\eps$, set of marked states $I$
		\Ensure $l$, $u$ with $max(l - u) < \eps$, the bounds for states' values
		\Procedure{SolveWithGuessingSet}{$M$,$\eps$,$I$}
		\If{$I = \emptyset$} \Comment{No states to be guessed}
		\State \Return $\Call{IVI}{M,\eps}$ \Comment{Solve by IVI}
		\EndIf
		\State $s \in I$ \Comment{Choose a state}
		\State $I' \gets I\setminus s$ \Comment{Update states to be guessed}
		\State $\lb, \ub \gets (\underline{v}_0(\state) , \overline{v}_0(\state) )$ \label{line:start_bounds} \Comment{Initialize bounds}
		\While{$\ub - \lb > \frac{\eps}{2}$} \Comment{Bounds are far apart}
		\State $\guess \gets \frac{\lb + \ub}{2}$ \Comment{Guess the average}
		\State $(l,u) \gets \Call{SolveWithGuessingSet}{M[\state = \guess],\eps\cdot \frac{1}{4}\smallTransition^{\numStates},I'}$
		\label{line: Recursive solve}
		\Comment{Smaller error}
		\If{$\guess < \Call{BellmanUpdate}{l,s}$} \Comment{Guess is small}
		\label{line: Verify guess}
		\State $\lb = \guess$ \label{line:lower}
		\Comment{Update lower bound}
		\ElsIf{$\guess > \Call{BellmanUpdate}{u,s}$} \Comment{Guess is large}
		\State $\ub = \guess$ \label{line:upper}
		\Comment{Update upper bound}
		\Else \Comment{Guess was approximately correct}
		\State $(\lb, \ub)\gets (\guess-\frac{1}{4}\eps, \guess + \frac{1}{4}\eps)$ 
		\label{line:encompass}
		\Comment{Update both bounds}
		\EndIf
		\EndWhile
		\State $(l,u') \gets \Call{SolveWithGuessingSet}{M[\state = \lb], \frac{\eps}{4}, I'}$ 
		\label{line:final_lower}
		\Comment{Use lower bound}
		\State $(l',u) \gets \Call{SolveWithGuessingSet}{M[\state = \ub], \frac{\eps}{4}, I'}$
		\label{line:final_upper}
		\Comment{Use upper bound}
		\State \Return $(l,u)$
		\EndProcedure
	\end{algorithmic}
\end{algorithm}

\begin{lemma}[{Correctness of \Cref{Algorithm: Approximate Value of preprocessed MC}}]%
	\label{Result: Correctness of guessing}%
	Given an MC $M$, an approximation error $\eps$, and a set of states $I \subseteq \States$,
	the procedure given by \Cref{Algorithm: Approximate Value of preprocessed MC}, in other words, $(l,u) = \Call{SolveWithGuessingSet}{M, \eps, I}$, satisfies that $l \le \val_M \le u$ and  $\| u - l \|_\infty \le \eps$.
\end{lemma}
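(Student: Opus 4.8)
The plan is to induct on the size of the guessing set $I$. In the base case $I = \emptyset$ the procedure returns $\Call{IVI}{M,\eps}$, and the monotone convergence of Interval VI from \Cref{Result: Levels VI} guarantees that its output $(l,u)$ satisfies $l \le \val_M \le u$ with $\|u - l\|_\infty \le \eps$, which is exactly the claim. For the inductive step I fix the chosen state $s \in I$ and observe that every recursive call is made on the strictly smaller set $I' = I \setminus \{s\}$; hence by the induction hypothesis each such call returns bounds sandwiching the value of the corresponding reduced MC within the requested error. The heart of the argument is to establish, as a \textbf{while}-loop invariant, that $\lb \le \val_M(\state) \le \ub$ (which holds initially because $\lb = \underline{v_0}(\state)$ and $\ub = \overline{v_0}(\state)$ bracket $\val_M(\state)$ by \Cref{Result: Initial vectors} and \Cref{observation:no_return}), and that on loop exit additionally $\ub - \lb \le \eps/2$.

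To maintain the invariant I analyse the three branches. The recursive call on line~\ref{line: Recursive solve} returns $(l,u)$ with $l \le \val_{M[\state = \guess]} \le u$ and $\|u - l\|_\infty \le \tfrac14\eps\,\smallTransition^{\numStates}$. In the first branch $\Call{BellmanUpdate}{l,\state} > \guess$ with $l$ a lower bound, so \Cref{Result: from_paper lower bound} gives $\val_M(\state) > \guess$ and the assignment $\lb = \guess$ is sound; the second branch is the symmetric upper-bound statement. In the \textbf{else} branch both tests failed, so $\Call{BellmanUpdate}{l,\state} \le \guess \le \Call{BellmanUpdate}{u,\state}$. Here I would use that the single-state Bellman update is $1$-Lipschitz in $\|\cdot\|_\infty$ (a max, or a convex combination, of coordinates) to transfer between the two bounds: $\Call{BellmanUpdate}{l,\state} \ge \Call{BellmanUpdate}{u,\state} - \tfrac14\eps\,\smallTransition^{\numStates} \ge \guess - \tfrac14\eps\,\smallTransition^{\numStates}$, and then invoke \Cref{Result: Encompassing} with error $\tfrac14\eps$ to conclude $\val_M(\state) \ge \guess - \tfrac14\eps$; the symmetric Encompassing statement, fed by the first failed test, yields $\val_M(\state) \le \guess + \tfrac14\eps$, justifying line~\ref{line:encompass}. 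Since the first two branches halve $\ub - \lb$ and the else branch sets $\ub - \lb = \tfrac12\eps$ and exits, the loop terminates with the stated invariant.

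After the loop I would exploit two facts about the reduced MC, valid for both objectives via a path-probability decomposition: the map $\guess \mapsto \val_{M[\state = \guess]}$ is nondecreasing with $\val_{M[\state = \val_M(\state)]} = \val_M$, and $\val_{M[\state = \guess]}(\state')$ is affine in $\guess$ with slope $\PP_{\state'}(\exists t\ \state_t = \state) \le 1$, hence $1$-Lipschitz in $\guess$. From $\lb \le \val_M(\state) \le \ub$ the first fact gives $\val_{M[\state = \lb]} \le \val_M \le \val_{M[\state = \ub]}$, so the final calls on lines~\ref{line:final_lower}--\ref{line:final_upper} return $l \le \val_{M[\state = \lb]} \le \val_M$ and $\val_M \le \val_{M[\state = \ub]} \le u$, establishing $l \le \val_M \le u$. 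For the width I would telescope $u - l = (u - \val_{M[\state = \ub]}) + (\val_{M[\state = \ub]} - \val_{M[\state = \lb]}) + (\val_{M[\state = \lb]} - l)$, bounding the two outer nonnegative terms by $\eps/4$ each from the $\eps/4$-precision of the final calls and the middle term by $\ub - \lb \le \tfrac12\eps$ via the Lipschitz-in-weight fact, for a total of at most $\eps$.

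I expect the main obstacle to be the \textbf{else} branch: matching the recursive error budget $\tfrac14\eps\,\smallTransition^{\numStates}$ to the $\smallTransition^{\numStates}$-amplification in \Cref{Result: Encompassing} through the $1$-Lipschitz transfer between $l$ and $u$, and carefully tracking the strict-versus-nonstrict inequalities this introduces so that the two-sided conclusion $\val_M(\state) \in [\guess - \tfrac14\eps,\, \guess + \tfrac14\eps]$ is exactly what the loop records. The monotonicity and affineness of $\guess \mapsto \val_{M[\state = \guess]}$ used in the final telescoping are conceptually simple but must be checked uniformly for both reachability and SSP.
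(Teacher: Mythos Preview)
Your proposal is correct and follows essentially the same route as the paper's proof: establish the loop invariant $\lb \le \val_M(\state) \le \ub$ via \Cref{Result: from_paper lower bound} and \Cref{Result: Encompassing} on the three branches, then combine the exit condition $\ub - \lb \le \eps/2$ with the $\eps/4$-precision of the two final recursive calls. Your write-up is in fact more explicit than the paper's on two points the paper leaves implicit --- the induction on $|I|$ that grounds correctness of the recursive calls, and the $1$-Lipschitz dependence of $\val_{M[\state=\guess]}$ on $\guess$ needed for the final telescoping --- but these are exactly the ingredients the paper uses.
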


\begin{proof}[Sketch]
	It is enough to show that $\lb \le \val_{M}(\state) \le \ub$ is an invariant of \Cref{Algorithm: Approximate Value of preprocessed MC} and that \Cref{Algorithm: Approximate Value of preprocessed MC} terminates. 
	To do so, we use \Cref{Result: from_paper} and \Cref{lemma:encompassing} to reason about the different cases in each iteration of \Cref{Algorithm: Approximate Value of preprocessed MC}.
	
	\qed\end{proof}

\subsection{Algorithm for MCs}

The final algorithm is a simple concatenation of the preprocessing in \Cref{Algorithm: Mark to Guess} and the use of Bellman updates given by \Cref{Algorithm: Approximate Value of preprocessed MC}.
It takes an MC as an input and runs \Call{MarkToGuess}{} to determine states to be guessed.
With all states guessed, we know the resulting MC has at most $\sqrt{\numStates}$ levels, and we run IVI supplemented by guessing.
We formalize this procedure in \Cref{Algorithm: Approximate Value}.

\begin{algorithm}[t]
	\caption{Approximate Value}
	\label{Algorithm: Approximate Value}
	\begin{algorithmic}[1]
		\Require Markov Chain $M$, approximation error $\eps$
		\Ensure Lower and upper bounds for states' values $l$ and $u$ such that $\|l - u\|_\infty < \eps$
		\Procedure{Solve}{$M$,$\eps$}
		\State $I \gets \Call{MarkToGuess}{M}$
		\State \Return $\Call{SolveWithGuessingSet}{M, \eps, I}$
		\EndProcedure
	\end{algorithmic}
\end{algorithm}

\begin{lemma}[{Complexity of \Cref{Algorithm: Approximate Value}}]%
	\label{Result: subexponential MC Algorith}%
	Consider an MC $M$ and an approximation error $\eps$.
	Let $I$ be the set given by \Cref{Algorithm: Mark to Guess}.
	The number of calls of $\Call{BellmanUpdate}{}$ during the execution of \Cref{Algorithm: Approximate Value of preprocessed MC} is at most $\left( \numStates \log (w_{\max} / \eps ) / \smallTransition \right)^{\calO \left( \sqrt{\numStates} \right)}$.
\end{lemma}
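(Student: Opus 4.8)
The plan is to analyze the call tree of \Call{SolveWithGuessingSet}{} by tracking the two quantities that change across nested calls: the number $m \defas |I|$ of states still to be guessed and the current approximation error $\eta$. Writing $B(m, \eta)$ for the number of \Call{BellmanUpdate}{} calls issued by $\Call{SolveWithGuessingSet}{M', \eta, I}$ with $|I| = m$, I observe that every recursive invocation strips exactly one state from $I$ (the line $I' \gets I \setminus s$), so the recursion depth is exactly $m$; by \Cref{Result: Preprocessing sizes} the top-level value is $m = |I| \le 9\sqrt{\numStates}$, which bounds the depth of the entire call tree by $9\sqrt{\numStates}$.

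First I would settle the base case $m = 0$. Here \Call{SolveWithGuessingSet}{} simply runs \Call{IVI}{}, and the crucial point is that the MC passed in is $\Call{Guess}{M, I}$ with all states of $I$ turned into targets, which by \Cref{Result: Preprocessing sizes} has at most $\sqrt{\numStates}$ levels. Substituting $\numLevels \le \sqrt{\numStates}$ into \Cref{Result: Levels VI} and solving $(1 - \smallTransition^{\numLevels})^{t-1} C \le \eta$ via the estimate $-\log(1 - x) \ge x$ shows that $\calO(\numLevels \log(C/\eta)/\smallTransition^{\numLevels})$ iterations suffice; multiplying by the $\numStates$ Bellman updates per iteration gives a leaf cost of $\calO(\numStates^{3/2}\log(C/\eta)/\smallTransition^{\sqrt{\numStates}})$, where $C$ is the initialization constant of \Cref{Result: Initial vectors}.

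Next I would bound the branching and the error at each internal node. The while loop is a binary search on $[\lb, \ub]$: each iteration either halves the gap (the two verification branches, justified by \Cref{Result: from_paper} and \Cref{lemma:encompassing}) or closes it (the encompassing branch), so the number of iterations is $\calO(\log(G_0/\eta))$, where $G_0 = \overline{v_0}(s) - \underline{v_0}(s) \le w_{\max}(\numStates + 1)/\smallTransition^{\numStates}$ by \Cref{Result: Initial vectors}. Each iteration issues one recursive call whose error is shrunk by the factor $\tfrac14 \smallTransition^{\numStates}$ together with at most two verifying Bellman updates, and two further recursive calls (error $\eta/4$) close the node. Following the error down a path of length $\le 9\sqrt{\numStates}$, the smallest error reached is at least $\eps \cdot (\tfrac14\smallTransition^{\numStates})^{9\sqrt{\numStates}}$, so uniformly $\log(1/\eta) = \calO(\log(1/\eps) + \numStates^{3/2}\log(1/\smallTransition))$. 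Hence both the branching and the per-node verification count are bounded by $N \defas \calO(\log(w_{\max}/\eps) + \numStates^{3/2}\log(1/\smallTransition))$.

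Finally I would multiply out: the tree has depth $\le 9\sqrt{\numStates}$ and branching $\le N + 2$, hence at most $(N+2)^{\calO(\sqrt{\numStates})}$ nodes, each costing at most the maximum of the internal cost $\calO(N)$ and the leaf cost $\calO(\numStates^{3/2}\log(C/\eta)/\smallTransition^{\sqrt{\numStates}})$. The main obstacle is precisely the bookkeeping in this last step: one must verify that the accumulated polynomial-in-$\numStates$ factors, the compounded $\smallTransition^{\numStates}$ error factors, and the SSP constant $C$ all collapse into the claimed form. Setting $X \defas \numStates \log(w_{\max}/\eps)/\smallTransition$ and using $\log(1/\smallTransition) \le 1/\smallTransition$ together with $\numStates^{3/2}/\smallTransition \le X^{3/2}$, each individual factor is seen to be $X^{\calO(\sqrt{\numStates})}$, so the product of $\calO(\sqrt{\numStates})$-many such factors remains $X^{\calO(\sqrt{\numStates})} = (\numStates \log(w_{\max}/\eps)/\smallTransition)^{\calO(\sqrt{\numStates})}$, as required.
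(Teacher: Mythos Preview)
Your proposal is correct and follows essentially the same approach as the paper's proof: both bound the recursion depth by $|I| \le 9\sqrt{\numStates}$ via \Cref{Result: Preprocessing sizes}, bound the branching factor of each node by the $\calO(\log(G_0/\eta))$ binary-search iterations (plus the two closing calls), track the minimum precision $\eps\cdot(\tfrac14\smallTransition^{\numStates})^{9\sqrt{\numStates}}$ ever requested along a root-to-leaf path, bound the leaf cost of \Call{IVI}{} on an MC with $\le\sqrt{\numStates}$ levels via \Cref{Result: Levels VI}, and then multiply everything out. Your write-up is in fact somewhat more explicit than the paper's in setting up the function $B(m,\eta)$ and carrying the $\smallTransition^{\sqrt{\numStates}}$ factor through the leaf-cost estimate, but the structure and the key ingredients are the same.
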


\begin{proof}[Sketch]
	The proof is by induction on the number of states to be guessed, $|I|$. 
	The base case is given by \Cref{Result: Levels VI}, while the inductive step requires using \Cref{Result: Preprocessing sizes}.
	
	\qed\end{proof}

Note that \Cref{Result: Preprocessing complexity} and \Cref{Result: subexponential MC Algorith} prove \Cref{Result: subexponential MC}.

\section{Guessing Value Iteration for MDPs}
\label{Section: MDP extension}

In this section, we discuss the extension of \Cref{Result: subexponential MC} from MCs to MDPs.
Following the ideas for MCs, we partition the states into levels and show that the running time of VI is parametrized by the number of levels.
The following procedure to obtain a level partition has been proposed in~\cite[Proposition 1]{haddad2018IntervalIterationAlgorithm}. 
First, the MDP is reduced by collapsing MECs.
Second, the target states belong to level zero. 
Iteratively, if a probabilistic state $\state$ has a transition to a state $\state'$ with a designated level, then $\state$ belongs to one level higher than $\state'$.
Decision states belong to the highest level it has a transition to.
This level partition leads to a speed of convergence of VI formalized in~\cite[Theorem 2]{haddad2018IntervalIterationAlgorithm} which requires exponentially many Bellman updates even after subexponentially many states have been guessed.
We show that this definition generates more levels than necessary by presenting an alternative level partition and proving a tighter speed of convergence of VI in MDPs.

\begin{definition}[Levels for MDP]
	For MDP $P$ and an optimal strategy $\strategy$, the levels of $P$ given $\strategy$ are the levels of the MC $P_\strategy$.
\end{definition}

This definition of levels depends on an optimal strategy for the MDP. 
Therefore, it corresponds to an ``a posteriori'' bound because it can be computed with information from an optimal strategy (which is equivalent to computing the value vector).
It terms of complexity, both the levels defined in~\cite{haddad2018IntervalIterationAlgorithm} and the levels defined are computed in polynomial time because computing optimal strategies requires only polynomial time.

Our improved speed of convergence for VI relies on the following property.
\begin{property}
	\label{obs:above}
	Bellman updates select an optimal neighbor on decision states.
	Therefore, for an MDP $P$ and an optimal strategy $\strategy$, starting from the same lower bounds, the sequence given by VI on $P$ is always lower bounded by the respective sequence on $P_\strategy$.
\end{property}

We now present the speed of convergence of IVI in the MDP parametrized by levels given by an optimal strategy. 
This result should be compared with \Cref{Result: Levels VI} stated for MCs.
\begin{lemma}
	\label{Result: VI Speed for MDPs}
	Consider an MDP $P$ with $\numLevels$ levels given by an optimal strategy and smallest transition probability $\smallTransition$.
	For all $t \ge 1$, after $\numLevels \cdot t$ iterations of \Call{IVI}{}, for each state $s$ in level $i$, the difference $v(s)-\underline{v_{\numLevels\cdot t}(s)}$ is at most
	$
	\left(1-\smallTransition^i\right)\left(1-\smallTransition^{\numLevels}\right)^{t-1} \cdot C
	$,
	where $C = w_{\max}$ for reachability and $C = w_{\max} (k + 1) / \smallTransition^{\numLevels}$ for SSP.
\end{lemma}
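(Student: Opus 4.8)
The plan is to reduce the MDP bound to the already-established Markov-chain bound of \Cref{Result: Levels VI} by comparing the value iteration run on $P$ with the one run on the Markov chain $P_\strategy$ induced by the fixed optimal strategy $\strategy$. Since the levels of $P$ given $\strategy$ are by definition the levels of $P_\strategy$, the level index $i$ of each state, the number of levels $\numLevels$, and the smallest transition probability $\smallTransition$ are identical in $P$ and in $P_\strategy$; the initial vectors of \Cref{Result: Initial vectors} depend only on the state space, the target, the objective, and this common $\numLevels$, so IVI on $P$ and on $P_\strategy$ can be initialized with the same lower and upper vectors $\underline{v_0}, \overline{v_0}$. Because $\strategy$ is optimal, $\val_P = \val_{P_\strategy} =: v$, so the same limit vector governs both runs.

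First I would invoke Property~\ref{obs:above}: starting from the common initial lower bound, the lower iterates produced by IVI on $P$ dominate those produced on $P_\strategy$, that is $\underline{v_{\numLevels t}}^{P}(s) \ge \underline{v_{\numLevels t}}^{P_\strategy}(s)$ for every state $s$ and every $t$. Consequently $v(s) - \underline{v_{\numLevels t}}^{P}(s) \le v(s) - \underline{v_{\numLevels t}}^{P_\strategy}(s)$. Since on the chain $P_\strategy$ the value is squeezed between its lower and upper iterates, the right-hand side is at most the interval width $\overline{v_{\numLevels t}}^{P_\strategy}(s) - \underline{v_{\numLevels t}}^{P_\strategy}(s)$. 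Second, I would apply \Cref{Result: Levels VI} to the chain $P_\strategy$, which bounds this interval width for a state in level $i$ by $(1-\smallTransition^i)(1-\smallTransition^{\numLevels})^{t-1} C$, with $C = w_{\max}$ for reachability and $C = w_{\max}(\numLevels+1)/\smallTransition^{\numLevels}$ for SSP. Chaining the two inequalities yields exactly the claimed bound.

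The delicate point, and the main obstacle, is the direction of the comparison supplied by Property~\ref{obs:above}, which is tied to the optimization direction of the objective. For weighted reachability the controller maximizes, so the Bellman update on $P$ takes a maximum over neighbors and therefore never produces a smaller lower iterate than the one obtained by following $\strategy$; this is exactly what makes the lower iterates on $P$ dominate those on $P_\strategy$ and controls $v - \underline{v}$. For SSP the controller minimizes, so I must check that the analogous monotone comparison still places the $P_\strategy$-iterates on the correct side of the $P$-iterates, applying Property~\ref{obs:above} in the form matching the minimizing update and, where needed, tracking the iterates so that the value remains sandwiched in the interval of the induced chain. The remaining work is bookkeeping: confirming that the collapsing of MECs assumed in the preliminaries gives a unique fixpoint so both runs converge to the common limit $v$, and that the validity of the initial over- and under-approximations transfers from $P_\strategy$ to $P$ through \Cref{observation:no_return}.
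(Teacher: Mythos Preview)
Your proposal is correct and follows essentially the same route as the paper: fix an optimal strategy $\strategy$, use $\val_P = \val_{P_\strategy}$, invoke Property~\ref{obs:above} to compare the lower iterates of $P$ with those of $P_\strategy$, and then apply \Cref{Result: Levels VI} to the induced chain. The paper's proof is terser and does not pause on the SSP direction issue you flag as the ``delicate point''; otherwise the arguments coincide.
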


\begin{proof}
	Let $\strategy$ an optimal strategy for $P$.
	In particular, we have that $\val P = \val P_\strategy$.
	Consider $(\underline{v}_i)_{i \ge 1}$ the sequence of lower bounds given by VI on $P_\strategy$.
	By \Cref{Result: Levels VI}, for all $t \ge 1$, we have that $ \| \underline{v}_{\numLevels \cdot t} - \val P\|_\infty \le \left( 1 - \smallTransition^i \right) \left( 1 - \smallTransition^{\numLevels} \right)^{t - 1} \cdot C$.
	By Property~\ref{obs:above}, we conclude the same inequality for the sequence of lower bounds given by VI on $P$.
	\qed\end{proof}

\begin{remark}[Consequence: subexponential preprocessing and Bellman updates for MDPs]
	\label{Result: subexponential for MDPs}
	\Cref{Result: VI Speed for MDPs} implies a procedure to approximate the value that requires subexponential preprocessing time and subexponentially many Bellman updates. 
	Indeed, consider an optimal strategy for the MDP. 
	Then, guess a subset of states of size $\sqrt{\numStates}$ such that the MC induced by the optimal strategy has at most $\sqrt{\numStates}$ levels.
	By \Cref{Result: VI Speed for MDPs}, after subexponentially many Bellman updates we can verify guesses.
	This approach requires either computing an optimal strategy or guessing nondeterministically between all subsets of size $\sqrt{\numStates}$. 
	In particular, this approach is a nondeterministic sub-exponential preprocessing that requires sub-exponentially many Bellman updates to approximate the value vector. 
\end{remark}

While our result for MDPs achieves subexponential preprocessing, improving the preprocessing to polynomial time maintaining subexponentially many Bellman updates remains an open question. 
See \Cref{Section: Level definitions in MDPs} for a discussion.

\section{Practical Guessing VI Algorithm for MDPs}
\label{Section: Implementation}

\Cref{Algorithm: Approximate Value of preprocessed MC} is readily extended to MDPs.
Therefore, we extend \Cref{Algorithm: Approximate Value} from MCs to MDPs by replacing the procedure to obtain a set of states to be guessed in \Cref{Algorithm: Mark to Guess}.
In this section, we explain the major differences between the theoretical procedure of \Cref{Algorithm: Approximate Value} applied to MDPs and our practical implementation.

\paragraph{Early Verification of a Guess.}
Consider \Cref{Algorithm: Approximate Value of preprocessed MC}.
When attempting to verify a guess $\guess$, line~\ref{line: Recursive solve}, it recursively solves an MC with increasing precision.
With the recursive solution, it attempts to verify the guess $\guess$ in line~\ref{line: Verify guess}.
Note that this involves more work than necesasry beacuse, if the Bellman update of state $s$ of lower (upper) bound is above (below) the guess $\guess$, then we can verify the guess as a lower (upper) bound by \Cref{Result: from_paper}.
\Cref{Section: Practical algorithms for MDPs} presents \Cref{Algorithm: Verify a guess} which formalizes this idea and also incorporates an upper limit for the number of Bellman updates used for verification.

\paragraph{Reusing Bounds.}
Consider \Cref{Algorithm: Approximate Value of preprocessed MC}.
When initializing bounds to verify a guess $\guess$ in \Cref{line:start_bounds}, the most conservative bounds are used. 
These bounds can be tightened because, after verifying the guess $\guess$ at state $\state$ as a lower bound, the current vector consists of lower bounds on all states.
Indeed, the values with guess $\guess$ are smaller than the real value.
Similarly, the upper bounds can be reused if the guess is an upper bound.

\paragraph{Picking the Guessed States.}
\Cref{Algorithm: Mark to Guess} prescribes to guess $\mathcal{O}(\sqrt{\numStates})$ states in MCs, as stated in \Cref{Result: Preprocessing sizes}, but guessing fewer states turns out to be faster in practice.
\Cref{Section: Practical algorithms for MDPs} presents \Cref{Algorithm: Pick a state to guess} which formalizes a practical heuristic to choose states to guess.
The idea is to guess a state that, after verification, decreases the current intervals the most.
Therefore, \Cref{Algorithm: Pick a state to guess} starts by weighting each state by the width of its own currently assigned interval.
Another important factor when guessing a state is how fast it will be verified.
In practice, we observed that this is dictated by the influence on its neighbors: the more connected a state is, the faster it will be verified.
Therefore, \Cref{Algorithm: Pick a state to guess} runs a fixed number of steps of a random walk on the labeled graph of the MDP, while accumulating weights given by neighbors.
After this random walk, the state with the highest weight is chosen.

\paragraph{Benefiting from VI.}
Value iteration is a fast algorithm in practice.
Moreover, it is easy to incorporate VI into our algorithm.
Indeed, after verifying the first guessed value as a lower (upper) bound, we obtain lower (upper) bounds for all other states.
Then, to improve the upper (lower) bounds, which have not been updated by the ``guess and verify'' procedure, we can apply Bellman updates.
Our practical approach applies as many Bellman updates as they were used while verifying the guess.

\Cref{Section: Practical algorithms for MDPs} presents \Cref{Algorithm: Practical guessing} which formalizes a practical implementation incorporating all these ideas.
It is a recursive procedure that takes an MDP $P$ and lower and upper bounds. 
It starts by picking carefully a state $\state$ to guess and then attempts to verify the guess $\guess$ through IVI with at most some number of updates.
If the verification of the guess is not successful, then it makes a recursive call asking for the solution of the reduced MDP where $s$ is forced to have value $\guess$ and updates the bounds on the state $\state$ with the result. 

\begin{lemma}[{Correctness of \Cref{Algorithm: Practical guessing}}]%
	\label{Result: Correctness of practical guessing}%
	Given an MDP $P$, the lower bound $l = \underline{v_0}$, the upper bound $u = \overline{v_0}$, an approximation error $\eps$, and parameters $K_1, K_2 \ge 1$, \Cref{Algorithm: Practical guessing}, i.e., $(l, u) =$ \Call{PickVerify}{$P$, $l$, $u$, $\eps$, $K_1$, $K_2$}, terminates and satisfies that $l \le \val_M \le u$ and $\| u - l \|_\infty \le \eps$.
\end{lemma}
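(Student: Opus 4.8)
The plan is to follow the structure of the proof of \Cref{Result: Correctness of guessing} for the MC case, establishing separately (a) the sandwich invariant $l \le \val_P \le u$ and (b) termination, and then deriving the final width bound $\| u - l \|_\infty \le \eps$ from the loop's exit condition. The two genuinely new ingredients compared with the MC argument are that the verification step must now reason about an MDP rather than an MC, and that the recursion of \Call{PickVerify}{} is organized around reduced MDPs (adding one target state per level) rather than around a precomputed guessing set $I$.

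For the invariant, I would argue by induction on the recursion depth, which is well-founded because every recursive call is made on a reduced MDP $P[\state = \guess]$ that has strictly one more target state, so the depth is bounded by $\numStates$ (and in any case by $K_2$). The base case is an instance solved by bounded \Call{IVI}{}, whose correctness for sandwiching bounds follows from \Cref{Result: Initial vectors} together with the monotone convergence of IVI. For the inductive step I would check that each way the algorithm modifies a bound preserves the sandwich: the two verification branches (raising $\lb$ or lowering $\ub$) are justified by the MDP form of \Cref{Result: from_paper} and \Cref{Result: from_paper lower bound}, which is available here because \Cref{Result: from_paper} is inherited from the stochastic-game result of \cite{chatterjee2023FasterAlgorithmTurnbased}; the ``approximately correct'' branch that sets the bounds to $\guess \pm \tfrac14 \eps$ is justified by \Cref{lemma:encompassing}; and the branch that updates the bound on $\state$ with the output of the recursive call is justified by combining the induction hypothesis (the recursive output sandwiches $\val_{P[\state=\guess]}$) with the verification lemmas applied to that output.

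For termination, I would use two nested finiteness facts. Inner: at each recursion level the while loop performs a binary search whose interval $[\lb, \ub]$ is halved each iteration (the guess is the midpoint), so it exits after $\calO(\log(w_{\max}/\eps))$ iterations once the width drops below the threshold $\eps/2$; the cap of $K_1$ Bellman updates on each verification attempt only forces the algorithm into the recursive branch, it never prevents progress of the binary search. Outer: since each recursive call strictly increases the target set, the recursion tree has depth at most $\numStates$, so the whole procedure halts. The final width bound then follows because every state's returned interval is either closed to width at most $\eps$ by the binary-search exit condition together with the two final calls at $\lb$ and $\ub$, or set directly to width $\tfrac12 \eps$ by the encompassing branch; the precision passed to recursive calls is deflated by a factor of order $\smallTransition^{\numStates}$ exactly so that the error amplification controlled by \Cref{lemma:encompassing} keeps every propagated interval within the $\eps$ budget.

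The step I expect to be the main obstacle is transferring the approximate-verification lemma \Cref{lemma:encompassing} from MCs to MDPs, since its proof is phrased via the hitting probabilities $\PP_{\state'}(\exists t \; \state_t = \state)$ of a single fixed chain. Here I would fix an optimal strategy $\strategy$ so that $\val_P = \val_{P_\strategy}$, apply the MC encompassing bound to $P_\strategy$, and then lift it back to $P$ using Property~\ref{obs:above}, which guarantees that VI on $P$ dominates VI on $P_\strategy$ from equal lower bounds. The delicate point is that the optimal strategy can change when states are frozen as targets in the reduced MDPs, so this lifting must be re-justified at each recursion level rather than once globally; making that argument uniform across the recursion is where I expect the technical work to concentrate.
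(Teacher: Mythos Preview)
Your high-level decomposition (induction on the number of non-target states, invariant $l\le\val_P\le u$ via \Cref{Result: from_paper} and \Cref{Result: Encompassing}, termination separately) is exactly the paper's, and the last paragraph identifying the MC-to-MDP lift of \Cref{lemma:encompassing} as the delicate point is apt; the paper's own full proof simply invokes \Cref{Result: Encompassing} without spelling that lift out.

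There is, however, a genuine gap in your termination argument. You write that ``at each recursion level the while loop performs a binary search whose interval $[\lb,\ub]$ is halved each iteration,'' and you speak of ``the two final calls at $\lb$ and $\ub$.'' That is the structure of \Cref{Algorithm: Approximate Value of preprocessed MC}, not of \Cref{Algorithm: Practical guessing}. In \Call{PickVerify}{} there is no fixed guessed state: \Call{PickState}{} is called afresh in every iteration and may return a different $\state$ each time, and the while loop tests $\|l-u\|_\infty>\eps$ over \emph{all} states. Halving the interval at the currently picked state does not by itself force $\|l-u\|_\infty$ to decrease, because the next iteration may pick a different state whose interval was untouched. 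So the ``$\calO(\log(w_{\max}/\eps))$ iterations'' bound does not follow from your argument. (Two smaller slips in the same paragraph: $K_2$ is the iteration cap inside \Call{Verify}{}, not a bound on recursion depth; and there are no final calls at $\lb,\ub$ in \Cref{Algorithm: Practical guessing}.)

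The paper closes this gap differently: it argues branch by branch that every pass through the while loop tightens the full vectors $(l,u)$ at least as much as a positive number of Bellman updates on $P$ would (in the Lower/Upper branches, $k$ updates; in the Inconclusive branch, at least $K_2$ updates or, in the ``approximately correct'' sub-branch, an outright halving of $\|l-u\|_\infty$). Since IVI on $P$ converges monotonically to the unique fixpoint, this forces the while loop to terminate. You should replace the binary-search counting by this ``progress dominates Bellman updates'' comparison; the rest of your outline then goes through.
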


\begin{proof}[Sketch]
	Termination is proven by induction on the cardinality of $\States \setminus \Target$ showing that the bounds tighten in each step. 
	Correctness is proven by showing that $l \le \val \le u$ is an invariant. 
	\qed\end{proof}

\section{Experiments}
\label{Section: Experiments (main text)}

In this section, we provide a performance comparison of VI-based approaches.

\paragraph{Algorithms.}
We consider the value approximation of SSP and Reachability MDPs through the use of Bellman updates.
Therefore, we compare the following VI-based approaches.
\begin{itemize}
	\item 
	\textbf{Interval VI (IVI).} 
	Introduced in~\cite{haddad2018IntervalIterationAlgorithm} and extended in~\cite{baier2017EnsuringReliabilityYour}, IVI consists of running simultaneously two VI: one giving an upper bound and the other giving a lower bound, obtaining an anytime algorithm.
	The required preprocessings are as follow. 
	For reachability objectives, MECs are collapsed.
	For SSP objectives, qualitative reachability is solved to obtain a contracting MDP.
	\item 
	\textbf{Optimistic VI (OVI).} 
	Introduced in~\cite{hartmanns2020OptimisticValueIteration}, OVI introduces a candidate vector to speed up VI. 
	Candidate vectors may be validated as lower or upper bounds. 
	If validation fails, then candidates are forgotten.
	\item 
	\textbf{Sound VI (SVI).} 
	Introduced in~\cite{quatmann2018SoundValueIteration}, SVI does not require the a priori computation of starting vectors. 
	It uses lower bounds and VI, while upper bounds are deduced from lower bounds.
	\item 
	\textbf{Guessing VI (GVI).} 
	Introduced in this work, see \Cref{Algorithm: Practical guessing} for details.
	GVI introduces guesses to speed up IVI.
\end{itemize}
All these alternatives have been implemented in the well-known probabilistic model checker STORM~\cite{hensel2022ProbabilisticModelChecker}, including our approach GVI.
The required preprocessings for IVI, OVI, and SVI are as follow.
For reachability objectives, MECs are collapsed, which changes the structure of the MDP and thus is more memory-intensive in model checkers such as STORM.
For SSP objectives, qualitative reachability is solved to obtain a contracting MDP, which only computes the states from which the target is never reached and there is no need to change the structure of the MDP.

Another alternative approach, which we will call Globally Bounded Value Iteartion (GBVI), developed for reachability objectives in Stochastic Games (an extension of MDPs to two opponent controllers) avoids collapsing MECs as a preprocessing~\cite{lahiri_widest_2020}. 
Instead, applied to MDPs, GBVI constructs a weighted graph and solves the widest path problem~\cite{pollack_maximum_1960} on it.
There are subquadratic algorithms to solve the widest path problem, for example, using Fibonacci heaps~\cite{fredman_fibonacci_1984}.
GBVI has been implemented in the probabilistic model checker PRISM~\cite{kwiatkowska2011PRISMVerificationProbabilistic}. 

\paragraph{Benchmarks.}
The Quantitative Verification Benchmark Set (QVBS)~\cite{hartmanns2019QuantitativeVerificationBenchmark}
is an open, freely available, extensive, and collaborative collection of quantitative models to facilitate the development, comparison, and benchmarking of new verification algorithms and tools.
It serves as a benchmark set for the benefit of algorithm and tool developers as well as the foundation of the Quantitative Verification Competition (QComp). QComp is the friendly competition among verification and analysis tools for quantitative formal models.

As opposed to QComp which uses only a curated subset of the benchmark set, we use all instances in QVBS. 
Each instance consists of a model, parameters, and a property.
All properties can be stated as either a reachability or SSP objective.
See \Cref{Section: Experiments} for details on the benchmark instances used in our experiments.

% NUMERICAL RESULTS

\paragraph{Results.}
We consider all 636 instances contained in the Quantitative Verification Benchmark Set (QVBS)~\cite{hartmanns2019QuantitativeVerificationBenchmark}.
There are 162 instances where some of the algorithms considered timed out (at 600 seconds) or failed.
From these 162 instances, there are 153 in which all algorithms failed or timed out.
In the remaining 11 instances, each algorithm returns an answer as follows:
IVI in 3 instances; 
OVI in 6 instances; 
SVI in 7 instances; 
GVI in 3 instances.
Omitting these 162 instances leaves a total of 474 instances that we analyze.
We grouped the instances as follows.
\begin{itemize}
	\item Group 1: instances where all algorithms are fast, i.e., they take at most 0.1 seconds (170 instances).
	\item Group 2: from the rest, those where the fastest and slowest algorithms are only at most 1.10 times of each other (135 instances).
	\item Group 3: from the rest, there is a winner among the previous VI-based approaches over our approach (83 instances).
	\item Group 4: all other instances not considered before (86 instances).
\end{itemize}

In Group~1, the overall performance of all algorithms are similar (see \Cref{Table: Small times} in \Cref{Section: Experiments}).
In Group~2, the overall performance of the top three algorithms (OVI, SVI, and GVI) are similar where the best and worst differ by at most 1.010 times of each other, whereas IVI is only 1.004 times slower (see \Cref{Table: Indistinguishable times} in \Cref{Section: Experiments}).
In Group~3, the average speedups on the overall performance are as follows compared to GVI: IVI is 0.98 times faster, OVI is 1.03 times faster, and SVI is 1.16 times faster (see \Cref{Table: Worse times} in \Cref{Section: Experiments}).
In Group~4, the average speedup on the overall performance of GVI is 1.33 times faster than IVI, 2.71 times faster than OVI, and 1.28 times faster than SVI (see \Cref{Table: Best Times} in \Cref{Section: Experiments}).

Group~4 is favorable for our algorithm and contains several models including: Randomized consensus protocol~\cite{aspnes1990FastRandomizedConsensus}; Coupon Collectors~\cite{artho_bounded_2016}; Crowds Protocol~\cite{reiter1998CrowdsAnonymityWeb}; IEEE 802.3 CSMA/CD Protocol~\cite{kwiatkowsa2012PRISMBenchmarkSuite}; Dynamic Power Management~\cite{qiu1999StochasticModelingPowermanaged}; EchoRing~\cite{dombrowski_model-checking_2016}; Probabilistic Contract Signing Protocol~\cite{even1985RandomizedProtocolSigning}; Embedded control system~\cite{muppala1994StochasticRewardNetsReliabilityPrediction}; Exploding Blocksworld~\cite{younes2005FirstProbabilisticTrack}; IEEE 1394 FireWire Root Contention Protocol~\cite{stoelinga1999RootContentionIEEE}; Fault-tolerant workstation cluster~\cite{haverkort2000UseModelChecking}; Haddad-Monmege purgatory variant~\cite{haddad2018IntervalIterationAlgorithm}; Cyclic Server Polling System~\cite{ibe_stochastic_1990}.

\Cref{Figure: Walltimes} shows the time (total execution time in the machine, not just CPU time) measured in seconds for every algorithm for each instance in the last two groups.
Instances are ordered by the time achieved by our algorithm.
Note that the $y$-axes are on a logarithmic scale.
See \Cref{Section: Experiments} for detailed tables containing the times of all groups.
See \Cref{Section: Comparative grouping} for a comparison between the distribution of instances in groups 3 and 4 with respect to other algorithms and comparisons of both average and median times.

\begin{figure}[t]
	\centering
	\includegraphics[width=\textwidth]{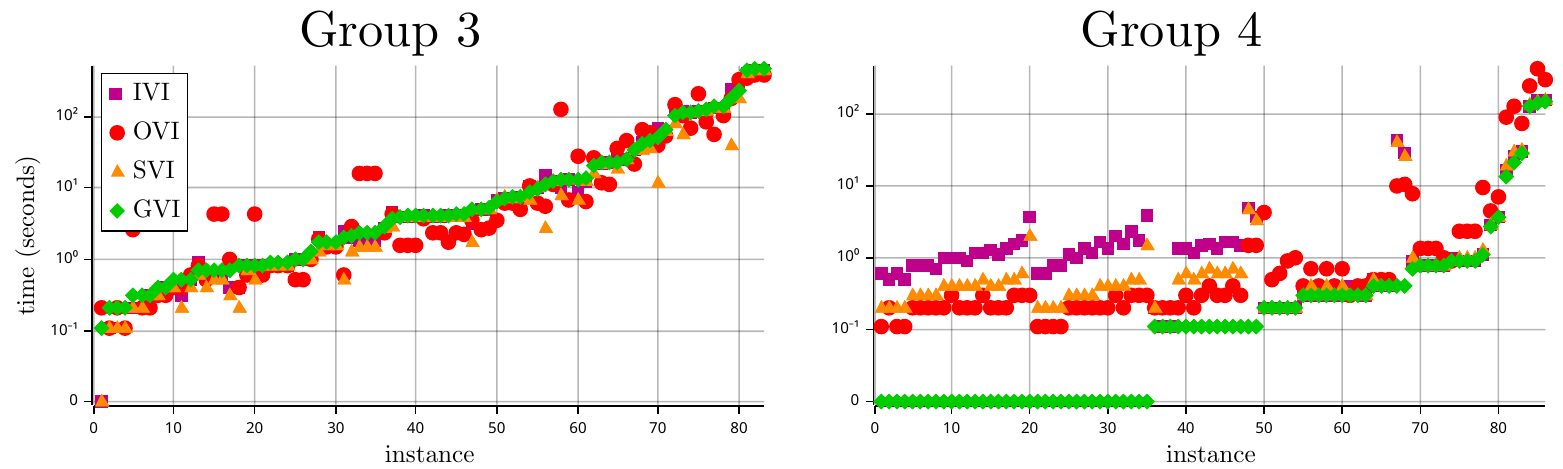}
	% Figure captions must be after the figure, according to LIPICs guidelines
	\caption{
		Time in seconds of all algorithms over instances in Groups 3 and 4 in increasing order according to GVI and displayed in logarithmic scale.
	}
	\label{Figure: Walltimes}
\end{figure}

\section{Conclusion and Future Works}

In this work, we presented a new approach for VI applied to MCs and MDPs.
For MCs, we proved an almost-linear preprocessing and sub-exponential Bellman updates.
For MDPs, we showed an improved speed of convergence of VI.
It remains an open question whether for MDPs, after polynomial-time preprocessing, VI can be achieved with a sub-exponential number of Bellman updates. 
Finding such an algorithm is an interesting direction for future work.
Our experimental results showed good performance for both MCs and MDPs.
Extending our approach to other models, such as stochastic games, is also a promising path for further research.

\begin{credits}
	\subsubsection{\ackname} 
	This research was partially supported by the ERC CoG 863818 (ForM-SMArt) grant and Austrian Science Fund (FWF) 10.55776/COE12 grant.
	
	\subsubsection{\discintname}
	The authors have no competing interests to declare that are relevant to the content of this article.
\end{credits}

%
% ---- Bibliography ----
%
% BibTeX users should specify bibliography style 'splncs04'.
% References will then be sorted and formatted in the correct style.
%
\bibliographystyle{splncs04}
\bibliography{biblio}

\appendix

\section{Discussion of Levels in MDPs}
\label{Section: Level definitions in MDPs}

In this section, we discuss the open question of a polynomial time preprocessing for MDPs which requires only subexponentially many Bellman updates to approximate the value.
For simplicity, we focus on weighted reachability objectives, i.e., the classic reachability objectives and consider MDPs where MECs are collapsed.
Collapsing MECs to a single state is a standard preprocessing technique in MDPs that runs in polynomial time, for details see~\cite{chatterjee2014EfficientDynamicAlgorithms}.

Similar to the levels introduced for MCs, we introduce the following notion of a partition of states for MDPs.
\begin{proposition}[Partition of states]
	\label{Result: Partition of MDPs}
	Consider an MDP $P$. 
	Then, the following holds.
	\begin{itemize}
		\item There is a partition $\{\States_k : k \in \{0, 1, \ldots, K\} \}$ of the set of states such that $\States_0$ contains all states with reachability value zero and one, and, for all $k \in [K]$ and probabilistic state in $\state \in S_k \cap S_p$, we have that the state $\state$ transitions to a state in some previous set of the partition with positive probability, i.e., there exists $\state' \in \bigcup_{k' < k} \States_{k'}$ such that $\prob(\state, \state') > 0$.
		
		\item For all time stage $t \in \NN$ and state $\state \in \States$, there is a strategy $\strategy$ such that the probability of not having reached the target after $t$ stages is upper bounded in terms of the minimum transition probability as follows $\PP_\state^\strategy( \exists t' \le K t : s_{t'} \in T) \le (1 - \smallTransition^K)^t$.
		
		\item For every starting states $\state \in \States$ and strategy $\strategy$, the dynamic eventually reaches the target set, i.e., $\PP_\state^\strategy( \exists t \ge 0 : s_t \in T) = 1$.
	\end{itemize}
\end{proposition}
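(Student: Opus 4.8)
The plan is to route all three items through one structural consequence of having collapsed MECs: in the resulting MDP the only end components are singletons consisting of absorbing states, and (together with the standard qualitative pruning of value-zero states) every reachable absorbing state lies in $T$. I would establish this first and reuse it throughout, the key elementary fact being that every closed set of states in a finite MDP contains an end component, so once MECs are collapsed any closed set must already meet $\States_0$.

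For the first item I would build the partition by a backward attractor construction seeded at $\States_0$, the set of all states of reachability value $0$ and $1$. Given $\bigcup_{k'<k}\States_{k'}$, let $\States_k$ be the not-yet-placed states that can descend one level: a probabilistic state $\state$ qualifies when $\prob(\state,\state')>0$ for some placed $\state'\in\bigcup_{k'<k}\States_{k'}$, and a decision state qualifies when it has any edge into $\bigcup_{k'<k}\States_{k'}$. The required transition property for probabilistic states then holds by construction, and the only real content is termination with every state placed. If a set $U$ of states were never placed, then $U$ would be closed, since no probabilistic state of $U$ sends positive probability outside $U$ and no decision state of $U$ has an edge leaving $U$; hence $U$ contains an end component, which by the first step is a singleton absorbing state lying in $\States_0$, disjoint from $U$ — a contradiction. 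So every state is placed; call $K$ the number of levels.

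For the second item I would fix the descending strategy that at each decision state picks an edge to a strictly lower level, which exists by the construction. From any state, which necessarily sits in some level at most $K$, each step under this strategy leaves the level unchanged or lowers it, and at a probabilistic state a decrease happens with probability at least $\smallTransition$; chaining at most $K$ descents shows that $\States_0$, and after pruning $T$, is reached within $K$ steps with probability at least $\smallTransition^K$. Splitting time into $t$ consecutive blocks of $K$ steps and applying the Markov property then bounds the probability of not having reached $T$ within $Kt$ steps by $(1-\smallTransition^K)^t$, i.e. the probability of the event complementary to the one displayed. For the third item, letting $t\to\infty$ already gives almost-sure reachability under this descending strategy, but the claim is for every strategy $\strategy$; there I would invoke the first step directly, noting that in $P_\strategy$ the play reaches a bottom strongly connected component almost surely, each such component is an end component of $P$ and hence a singleton absorbing state in $\States_0$, which after pruning equals a target state, so $\PP_\state^\strategy(\exists t:s_t\in T)=1$.

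The main obstacle I expect is not the level bookkeeping but the interaction between $\States_0$ and $T$: the descent only guarantees reaching $\States_0$, which a priori also contains value-one states that are not themselves targets, and the universal-strategy statement fails verbatim if a reachable value-zero sink survives. The clean resolution is to make explicit in the preprocessing that value-zero states are removed and that, once MECs are collapsed, no strategy from a value-one state can avoid $T$ forever — such a strategy would be trapped in a non-trivial end component, which no longer exists. Pinning down these qualitative reductions precisely, rather than the counting in the first two items, is where the care is needed.
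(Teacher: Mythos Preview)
The paper states this proposition without proof; it is presented as a variant of \cite[Proposition~2]{haddad2018IntervalIterationAlgorithm} and used only as a foil for the subsequent (false) claim and counterexample, so there is no argument in the paper to compare against.

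Your proposal is correct and follows the standard route one would expect: the backward-attractor construction for the partition, the descending strategy plus block-Markov argument for item~2, and the ``every bottom SCC of $P_\strategy$ is an end component of $P$'' observation for item~3. You also correctly flag the two real issues with the statement as written: the displayed inequality in item~2 bounds the wrong event (it should be the complement), and item~3 is literally false if a value-zero sink remains reachable, so the intended reading must presuppose the qualitative pruning from the paper's preprocessing paragraph. Your diagnosis that the delicate part is not the level arithmetic but making precise the identification of $\States_0$ with $T$ after preprocessing is exactly right, and is in fact the gap the paper glosses over by not supplying a proof.
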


The most important difference between \Cref{Result: Partition of MDPs} and \cite[Proposition 2]{haddad2018IntervalIterationAlgorithm} is on the second item, where \cite[Proposition 2]{haddad2018IntervalIterationAlgorithm} requires that this item holds for all strategies instead of only some strategy.

The following incorrect result would imply a polynomial time preprocessing for MDPs that requires subexponentially many Bellman updates to approximate the value.
\begin{claim}
	%\label{Result: Wrong convergence of IVI}
	Consider an MDP $P$, and an approximation error $\eps$. Denote $(K + 1)$ the number of partition sets given by \Cref{Result: Partition of MDPs}.
	Then, IVI converges in at most $K \left \lceil \eps / \log(1 - \smallTransition^K) \right \rceil$ steps.
\end{claim}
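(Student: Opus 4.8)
Since the statement is explicitly flagged as \emph{incorrect}, my plan is not to prove it but to refute it, after first pinning down why it is tempting. The claim is the naive MDP analogue of \Cref{Result: Levels VI} and \Cref{Result: VI Speed for MDPs}: it replaces the a posteriori optimal-strategy levels by the a priori partition of \Cref{Result: Partition of MDPs} and asserts that the partition count $K$ alone governs the convergence of \Call{IVI}{}. The natural (failing) proof attempt would therefore try to rerun the nested induction behind \Cref{Result: Levels VI} (induction on the iteration count, then on the partition index), using the sets $\States_k$ as if they were levels and invoking the second item of \Cref{Result: Partition of MDPs} to drive the geometric factor $(1 - \smallTransition^K)$ at each block of $K$ Bellman updates.

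My contention is that this induction cannot close, and locating the exact failure point is the heart of the refutation. The asymmetry is that \Call{IVI}{} tracks \emph{both} a lower and an upper bound. For the lower bound the argument does go through: by Property~\ref{obs:above} the lower sequence dominates the one produced along an optimal strategy, and the ``escape for \emph{some} strategy'' guarantee of \Cref{Result: Partition of MDPs} is exactly what is needed there, just as in \Cref{Result: VI Speed for MDPs}. The upper bound is where it breaks. At a decision state the Bellman update takes a maximum, so the upper sequence is governed by whichever action keeps the value highest, i.e.\ by a \emph{worst-case} strategy; bounding its decay requires the stronger ``drain for \emph{all} strategies'' property, which is precisely the item on which \Cref{Result: Partition of MDPs} differs from \cite[Proposition 2]{haddad2018IntervalIterationAlgorithm}. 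Hence the partition of \Cref{Result: Partition of MDPs} carries no information capable of bounding the upper sequence, and the inductive step fails at such states.

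To turn this gap into a disproof I would exhibit a concrete counterexample, e.g.\ a Haddad--Monmege purgatory variant augmented with choice. The design has a decision state offering two options: a ``fast'' action routing to the target through only a constant number of partition sets (so the fast strategy witnesses a small $K$ through the second item of \Cref{Result: Partition of MDPs}), and a ``slow'' action entering a long chain with smallest transition probability $\smallTransition$ whose mass drains into the value-zero sink only after order $\smallTransition^{-\Theta(\numStates)}$ updates. Because the upper sequence follows the maximising action, it stays far from $\val$ for exponentially many iterations even though $K$ is constant, contradicting the asserted bound $K \lceil \eps / \log(1 - \smallTransition^K) \rceil$ (whose shape is in any case a garbled rendering of the MC rate). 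A quick sanity check that the claim must fail is the meta-argument that a correct version would resolve the open problem of a polynomial-time preprocessing with subexponentially many Bellman updates for MDPs, which the paper deliberately leaves open.

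The main obstacle is engineering the instance so that the two demands hold \emph{simultaneously}: the a priori partition of \Cref{Result: Partition of MDPs} must genuinely certify a small $K$ (forcing the ``some strategy'' loophole to be exploitable), while the \Call{IVI}{} upper sequence must genuinely be slow (forcing a distinct ``bad'' strategy to dominate the maximum). One must also verify the standing hypotheses — MECs collapsed and the MDP contracting, so that the partition of \Cref{Result: Partition of MDPs} is well defined and \Call{IVI}{} has a unique fixpoint — so that the instance is a legitimate input to the claim rather than a degenerate one.
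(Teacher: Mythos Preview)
Your overall strategy---refute the claim by a family of MDPs with constant $K$ but unbounded \Call{IVI}{} time---matches the paper. But your diagnosis of \emph{which} half of \Call{IVI}{} fails is inverted, and this comes from a real error. You claim the lower-bound induction ``does go through'' by combining Property~\ref{obs:above} with the small-$K$ partition of \Cref{Result: Partition of MDPs}. It does not: Property~\ref{obs:above} tells you the MDP lower sequence dominates the $P_{\strategy'}$ lower sequence for the strategy $\strategy'$ witnessing item two of \Cref{Result: Partition of MDPs}, but that sequence converges to $\val_{P_{\strategy'}}$, which is strictly below $\val_P$ whenever $\strategy'$ is suboptimal. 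You have only shown that the lower bound quickly exceeds the \emph{wrong} target. \Cref{Result: VI Speed for MDPs} escapes this precisely because it insists on the levels of an \emph{optimal} strategy, which the partition of \Cref{Result: Partition of MDPs} does not.

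The paper's counterexample exploits exactly this gap. Take the classic slow-VI reachability chain (all chain states have value $1$; the lower bound needs $\smallTransition^{-\Theta(\numStates)}$ updates to approach $1$), make every other state a decision state, and give each decision state a shortcut to a single probabilistic state of value $1/2$. The shortcut strategy certifies $K=3$ in \Cref{Result: Partition of MDPs}; the \emph{upper} bound is immediately correct (it starts at $w_{\max}=1$, which is already the value on the chain); and it is the \emph{lower} bound that stalls, reaching $1/2$ instantly via the shortcuts and then crawling from $1/2$ to $1$ through the slow chain. Your own sketch, as written, does not obviously achieve constant $K$: a long chain draining to a sink, without a shortcut out of \emph{every} state, will have $K$ grow with the chain length under the definition in \Cref{Result: Partition of MDPs}, so the ``shortcut at every decision state'' trick is essential to the construction.
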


Let us show how this claim would imply the desired preprocessing.
We can show that \Cref{Result: Preprocessing sizes} and \Cref{Algorithm: Mark to Guess} can readily be extended to MDPs. 
Indeed, \Cref{Algorithm: Mark to Guess} works only on the graph of an MC and the same can be done on the labeled graph of an MDP.
Therefore, we only need to bound the number of Bellman updates required to compute an approximation of the optimal value of MDPs with a partition set of size at most $\sqrt{n}$. 
For the case of MCs, this is given by \Cref{Result: Levels VI}. 
For MDPs, this claim readily implies a similar statement.
Therefore, if the claim were true, then \Cref{Result: subexponential MC Algorith} can easily be extended to MDPs with the corresponding version of \Cref{Algorithm: Approximate Value} for MDPs.
Sadly, the following a simple example shows that this claim is not true.

\begin{example}
	Consider a long chain of decision and probabilistic states where probabilistic states may either advance towards the target set or go back to the decision state that is furthest from the target. 
	This corresponds to a classic example showing an MC where all states have reachability value one, but VI is extremely slow.
	We only change every other state to be a decision state. 
	In this setting, we consider two more states, a sink state and a probabilistic state that goes to the sink and to the target with equal probability. 
	All decision states can also choose to advance to the new probabilistic state that has a reachability value one half. 
	While taking any of these choices is suboptimal, their inclusion in the MDP decreases the number of levels to a constant.
	\Cref{Figure: slow} shows an illustration.
\end{example}

\begin{figure}[t]
	\centering
	\includegraphics[width=\textwidth*7/10]{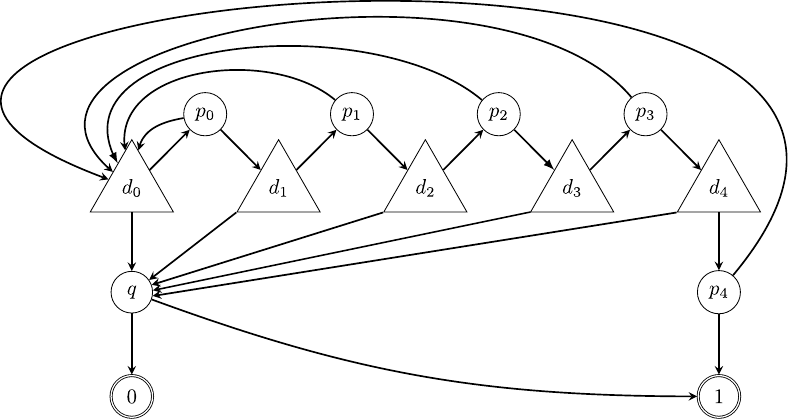}
	% Figure captions must be after the figure, according to LIPICs guidelines
	\caption{
		MDP with a constant number of levels where IVI requires an ever-growing number of Bellman updates to obtain a $1/4$-approximation of the optimal value for the reachability objective.
		Triangules represeent decision states, circles represent probabilistic states. 
		The transition function assigns equal probability to all outgoing edges of a probabilistic state.
	}
	\label{Figure: slow}
\end{figure}

Following \Cref{Result: Partition of MDPs}, this example admits a partition with four sets: the target and sink form $\States_0$, the two probabilistic states connected to them form the set $\States_1$, all decision states from the set $\States_2$ and the rest of the states form $\States_3$. 
In other words, this is an example of a family of instances with a constant size of the partition set, parametrized by the length of the chain. 

The claim implies that, for reachability objectives, the number of Bellman updates required to obtain an $\eps$-approximation depends only on the size of the partition set, the minimal probability, and $\eps$. 
On the other hand, it is well known that, in this example, the number of Bellman updates required to obtain an $\eps$-approximation of the reachability value grows quickly with the size of the chain. 
Therefore, the claim does not hold.

\section{Practical Algorithms for MDPs}
\label{Section: Practical algorithms for MDPs}

In this section, we present the practical algorithms used for MDPs as pseudocode.

Similar to MCs, a reduced MDP is defined as follows.
\begin{definition}[Reduced MDP]
	Consider an MDP $P$, a target set $\Target$, a state $\state \in \States \setminus \Target$, and a quantity $\guess$.
	The \emph{reduced MDP}, denoted by $P[\state = \guess]$, is the MDP $P$ with target set $\Target \cup \{ \state \}$ where the weight of $\state$ is $\guess$. 
\end{definition}

\begin{algorithm}[H]
	\caption{Verify a guess} 
	\label{Algorithm: Verify a guess}
	\begin{algorithmic}[1]
		\Require MDP $P$, vectors $l$ and $u$, state $\state$, guess $\guess$, approximation error $\eps$ and an iteration limit $K$
		\Ensure Either Bounds$(l', u', k)$, Upper$(u', k)$, or Lower$(l', k)$ or Inconclusive$(l', u')$
		\Procedure{Verify}{$P$, $l$, $u$, $\state$, $\guess$, $K$}
		\For {$k \in [K]$} \Comment {Number of Bellman updates performed}
		\State $(l, u) \gets (\Call{Bellman}{P[\state = \guess], l}, \Call{Bellman}{P[\state = \guess], u})$
		\If { $\| l - u \|_\infty \le \eps \smallTransition^{\numStates} / (2 (1 + \smallTransition^{\numStates}))$ }
		\Comment{$l$ and $u$ are sufficiently close}
		\State $(l, u) \gets ( l - (\eps / 2) \mathds{1}, u + (\eps / 2) \mathds{1} ) )$
		\State \Return Bounds$(l, u, k)$
		\Comment{ Approximations the value }
		\ElsIf { $\guess \le l$ }
		\State \Return Lower$(l, k)$ 
		\Comment{ $\guess$ is a lower bound and so $l$ }
		\ElsIf { $u \le \guess$ }
		\State \Return Upper$(u, k)$ 
		\Comment{ $\guess$ is an upper bound and so $u$ }
		\EndIf
		\EndFor
		\State \Return Inconclusive$(l, u)$ \Comment{ Achieved iteration limit }
		\EndProcedure
	\end{algorithmic}
\end{algorithm}

\begin{algorithm}[H]
	\caption{Pick a state to guess} 
	\label{Algorithm: Pick a state to guess}
	\begin{algorithmic}[1]
		\Require MDP $P$, vectors $l$ and $u$ and maximum number of iterations $K$
		\Ensure A state $s$
		\Procedure{PickState}{$P$, $l$, $u$, $K$}
		\State $w \gets u - l$ \Comment{Initial weight for each state}
		\State $\eta \gets 0$ for every state \Comment{total weight arrived at each state}
		\For {$i \in [K]$}
		\State $w' \gets 0$ for every state
		\ForAll {state $\state$}
		\ForAll {$\state' \in \Neighbors(\state)$}
		\State $w'(s') \gets w'(s') + \begin{cases} 
			\delta(s, s') w(s) 
			& \state \in \States_p \\ 
			\frac{1}{|E(s)|} w(s) 
			& \state \in \States_d \\
		\end{cases}$
		\EndFor
		\EndFor
		\State $w \gets w'$
		\State $\eta \gets \eta + w$
		\EndFor
		\State \Return $\argmax \{ \eta_\state : \state \in \States \setminus \Target \}$ \Comment{State with the highest cumulative weight}
		\EndProcedure
	\end{algorithmic}
\end{algorithm}

\begin{algorithm}[H]
	\caption{Guessing Value iteration: Practical algorithm}
	\label{Algorithm: Practical guessing}
	\begin{algorithmic}[1]
		\Require MDP $P$, vectors $l$, $u$, approximation error $\eps$ and parameters $K_1$ and $K_2$
		\Ensure Lower and upper bounds of the value vector $l$ and $u$ such that $\|l - u\|_\infty < \eps$
		\Procedure{PickVerify}{$P$, $l$, $u$, $\eps$, $K_1$, $K_2$}
		\While {$\|l - u\|_\infty > \eps$}
		\State $\state \gets \Call{PickState}{P, l, u, K_1}$ 
		\Comment{Choose a state to guess}
		\State $\guess \gets \frac{l(s) + u(s)}{2}$
		\Comment{Guess on the selected state}
		\Switch{$\Call{Verify}{P, l, u, \state, \guess, \eps, K_2}$} \Comment{Attempt verification}
		\Case{Bounds$(l', u', k)$}
		\State \Return $l', u'$
		\EndCase
		\Case{Lower$(l', k)$}
		\State $l \gets l'$
		\Comment{Update lower bound}
		\State $k$ times: $u \gets \Call{Bellman}{P, u}$
		\Comment{Advance upper bound $k$ times}
		\EndCase
		\Case{Upper$(u', k)$}
		\State $u \gets u'$
		\Comment{Update upper bound}
		\State $k$ times: $l \gets \Call{Bellman}{P, l}$
		\Comment{Advance lower bound $k$ times}
		\EndCase
		\Case{Inconclusive$(l', u')$}
		\State $(l'', u'') \gets \Call{PickVerify}{P[\state = \guess], l', u', \eps \smallTransition^{\numStates} / ( 4 + 6 \smallTransition^{\numStates})}$
		\label{line: Practical recursive call}
		\Switch{$\Call{Verify}{P, l'', u'', \state, \guess, 0, 1}$} \Comment{One Bellman update}
		\Case{Lower$(l''', k)$}
		\State $l \gets l'''$
		; $k$ times: $u \gets \Call{Bellman}{P, u}$
		%									\Comment{Update lower bound}
		%		                     	\State $k$ times: $u \gets \Call{Bellman}{P, u}$
		%									\Comment{Advance upper bound}
		\EndCase
		\Case{Upper$(u''', k)$}
		\State $u \gets u'''$
		; $k$ times: $l \gets \Call{Bellman}{P, l}$
		%			                        \Comment{Update upper bound}
		%								\State $k$ times: $l \gets \Call{Bellman}{P, l}$
		%									\Comment{Advance lower bound}
		\EndCase
		\Case{else} \Comment{Guess was approximately correct}  \label{line: guess was approximately correct}
		\State $l \gets l'' - (1 + \smallTransition^{\numStates}) \eps / (4 + 6 \smallTransition^{\numStates})$ 
		\State $u \gets u'' + (1 + \smallTransition^{\numStates}) \eps / (4 + 6 \smallTransition^{\numStates})$
		\EndCase
		\EndSwitch
		\EndCase
		\EndSwitch
		\EndWhile
		\State \Return $l, u$
		\EndProcedure
	\end{algorithmic}
\end{algorithm}

\paragraph{Numerical Precision.}
The classic VI requires to handle numbers as small as $\smallTransition^{\numStates}$ and hard examples are easy to construct.
In principle, our VI-based approach also requires to handle this precision.
In practice, since values are guessed, hard examples that require this presicion are not encountered in the benchmark set.
A similar phenomenon has been observed for Optimistic VI~\cite{hartmanns2020OptimisticValueIteration} because it guesses entire value vectors to speed up computations. 
Therefore, using standard floating point representation for numbers works correctly in the benchmark set investigated.
See~\cite{hartmanns2022CorrectProbabilisticModel} for a discussion on the limitations of using floating point numbers in model checking.

\paragraph{Dependency on approximation error.}
Consider the running time of VI-based variants in terms of the approximation parameter $\eps$.
On the one hand, VI requires $\calO \left(  2^{\numStates} \log(1 / \eps) \right)$ Bellman updates even for MCs, i.e., there is a dependecy of the form $\calO \left(  \log(1 / \eps) \right)$.
On the other hand, by \Cref{Result: subexponential MC} we have that Guessing VI requires $\left( \numStates \log (w_{\max} / \eps ) / \smallTransition \right)^{\calO \left( \sqrt{\numStates} \right)}$ Bellman updates to compute an $\eps$-approximation of the value. 
Therefore, a theoretical application of Guessing VI to MDPs would expect a dependency on $\eps$ of $\log (1 / \eps )^{\calO \left( \sqrt{\numStates} \right)}$. 

In contrast, our practical implementation of the core ideas in Guessing VI does not exhibit this worst case behavior in the instances of the studied benchmarks set.
Indeed, we run all instances varying the parameter $\eps$ and observe the following.
Parameterizing $\eps(x) = (10^{6})^{-x}$, the running times we provide in the tables are for the results for $\eps(1)$.
We run all instances in a grid for $x \in [0, 1.5]$ and extrapolate the behavior for $x > 1.5$.

For classic VI, the behavior is at most linear on $x$. 
For the theoretical application of Guessing VI, the behavior is polynomial of the form $x^{\calO \left( \sqrt{\numStates} \right)}$.
For our practical implementation, we observe the following.
Out of all instances solved by our algorithms, in almost all of them the approximation parameter has a negligible affect on the running time.
There are only 2 models where there is a clear increase in running time when $x$ increases (i.e., $\eps$ decreases): 
(i) Randomized Consensus Protocol (\emph{consensus})~\cite{aspnes1990FastRandomizedConsensus}, and
(ii) Flexible Manufacturing System with Repair (\emph{flexible-manufacturing})~\cite{bradley_tagged_2009}.
Under visual inspection, they all exhibit a linear trend.
In other words, in the benchmark set, the running time of our practical implementation of Guessing VI exhibits a dependency on $\eps$ of the form $\calO \left(  \log(1 / \eps) \right)$.

\section{Complete Proofs}
\label{Section: Missing proofs}

\begin{proof}[Proof of {\Cref{Result: Levels VI}}]
	Consider an MC $M$ with $\numLevels$ levels and $\smallTransition$ the smallest transition probability.
	We prove that, for all $t \ge 1$, after $\numLevels \cdot t$ iterations of \Call{IVI}{}, each state in level $i$ has an interval of size at most $\left(1-\smallTransition^i\right)\left(1-\smallTransition^{\numLevels}\right)^{t-1} \cdot C$, where $C = w_{\max}$ for reachability and $C = w_{\max} (k + 1) / \smallTransition^{\numLevels}$ for SSP.
	Since from \Cref{Result: Initial vectors}, the interval has exactly the maximal size, this implies the monotonicity.
	
	The proof is by induction on $t$.
	Consider the base case $t = 1$. 
	We proceed by induction on the level $i$.
	Consider the base case $i = 0$. 
	At level zero, all states are targets and their intervals have zero length since they are assigned the value, so the statement is true.
	Consider the inductive case on $i$.
	By inductive hypothesis on $i$, we have that the intervals for each state in level $i' < i$ have size at most
	\[
	\left(1-\smallTransition^{i'}\right) \cdot C \,.
	\]
	We must prove that this also holds for states in level $i$ too.
	
	Consider the vectors $\underline{v}_{k}$ and $\overline{v}_{k}$. 
	We have that, for each state $s \in \ell_i$,
	\begin{align*}
		|| \underline{v}_{kt}(s) &- \overline{v}_{kt}(s) || \\
		&= \left\| \sum_{\state' \in \Neighbors(\state)} \prob(\state, \state')  
		(\underline{v}_{kt - 1}(s) - \overline{v}_{kt - 1}(s) ) \right\|
		&(\text{def. Bellman update} )\\
		&\le \smallTransition \left( 1 - \smallTransition^{i - 1} \right) \cdot C \\
		&\qquad + (1 - \smallTransition) \left( 1 - \smallTransition^k \right) \cdot C 
		&(\text{induction hypothesis}) \\
		&\le \smallTransition \left( 1 - \smallTransition^{i - 1} \right) \cdot C \\
		&\qquad + (1 - \smallTransition) \cdot C 
		&(\text{omitting one factor}) \\
		&= \left( \smallTransition \left( 1 - \smallTransition^{i - 1} \right) + (1 - \smallTransition) \right) \cdot C 
		&(\text{factorizing}) \\
		&= \left( 1 - \smallTransition^{i} \right) \cdot C \,.
		&(\text{adding})
	\end{align*}
	This completes the induction proof over $i$ for $t = 1$. 
	
	Consider the inductive case on $t$. 
	Similarly, we proceed by induction on $i$. 
	The base case $i = 0$ is trivial, so consider the inductive case on $i$.
	By induction hypothesis on $t$, we know that for all $0 \le t' \le t$, after $\numLevels \cdot t'$ iterations of \Call{IVI}{}, the intervals for a state in each level $i'$ have size at most
	\[
	\left(1-\smallTransition^{i'}\right)\left(1-\smallTransition^{\numLevels}\right)^{t'-1} \cdot C \,.
	\]
	By induction hypothesis on $i$, we know that, after $\numLevels \cdot t$ iterations of \Call{IVI}{}, the intervals for a state in level $i' < i$ have size at most
	\[
	\left(1-\smallTransition^{i'}\right)\left(1-\smallTransition^{\numLevels}\right)^{t} \cdot C \,.
	\]
	We must prove that this also holds for states in level $i$.
	
	Consider the vectors $\underline{v}_{kt}$ and $\overline{v}_{kt}$, we have that, for each state $s \in \ell_i$,
	\begin{align*}
		|| \underline{v}_{kt}(s) &- \overline{v}_{kt}(s) || \\
		&= \left\| \sum_{\state' \in \Neighbors(\state)} \prob(\state, \state')  
		(\underline{v}_{kt - 1}(s) - \overline{v}_{kt - 1}(s) ) \right\|
		&(\text{def. Bellman update} )\\
		&\le \smallTransition \left(1-\smallTransition^{i - 1}\right)\left(1-\smallTransition^{\numLevels}\right)^{t} \cdot C \\
		&\qquad + (1 - \smallTransition) \left(1-\smallTransition^k\right)\left(1-\smallTransition^{\numLevels}\right)^{t-1} \cdot C 
		&(\text{induction hypothesis}) \\
		&= \left( \smallTransition \left(1-\smallTransition^{i - 1}\right) + (1 - \smallTransition) \right) \left(1-\smallTransition^{\numLevels}\right)^{t} \cdot C 
		&(\text{factorizing}) \\
		&= \left(1-\smallTransition^{i}\right)\left(1-\smallTransition^{\numLevels}\right)^{t} \cdot C \,.
		&(\text{adding})
	\end{align*}
	This completes the induction proof over $i$ for $t$, which in turn completes the induction proof for $t$.
	
	\qed\end{proof}

\begin{proof}[Proof of {\Cref{Result: Encompassing}}]
	Consider an MC $M$, a state $\state \in S$, and a guess $\guess$.
	Let $f \le \val_{M[\state = \guess]}$ be the lower bound on the value function of the MC where $\state$ is made target with value $\guess$.
	Let $\guess' = \Call{BellmanUpdate}{M, f, s}$.
	We prove that, for all $\eps > 0$, if $\guess' + \smallTransition^{\numStates} \eps > \guess$, then $\val_{M}(\state) > \guess - \eps$.
	
	Consider an arbitrary $\eps > 0$, the Markov Chain $M[\state  = \guess - \eps]$ and the function $f' \colon \States \to \RR^+$ defined as
	\[
	f'(\state') = \begin{cases}
		\guess - \eps
		&\state' = \state \\
		f(\state') - \eps \, \PP_{\state'}( \exists t \, \state_t = \state) 
		&\state' \not = \state
	\end{cases}
	\]
	First, we show that $f' \le \val_{M[\state = \guess - \eps]}$ for both objectives.
	Then, we argue that $\Call{BellmanUpdate}{M, f', \state} > \guess - \eps$.
	Finally, applying \Cref{Result: from_paper} to $M$, $f'$, and guess $\guess - \eps$, we conclude that $\val_{M}(\state) > \guess - \eps$.
	
	We prove that $f' \le \val_{M[\state = \guess - \eps]}$.
	Note that the inequality holds at state $\state$. 
	Therefore, consider $\state' \not = \state$. 
	We partition the plays starting at $\state'$ depending on eventually reaching $\state$ or not.
	Formally, recall that $t^* = \inf \{ t \ge 0 : s_t \in T \}$. 
	Consider both objectives separately.
	For weighted reachability objectives,
	\begin{align*}
		\val_{M[\state = \guess]}(\state'; \Reach) 
		&= \guess \, \PP_{\state'} ( \exists t \, \state_t = \state ) 
		+ \EE_{\state'} ( w(\state_{t^*}) \mathds{1}[t^* < \infty, \forall t \, \state_t \not = \state] ) \\
		&= \val_{M[\state = \guess - \eps]}(\state'; \Reach) + \eps \, \PP_{\state'} ( \exists t \, \state_t = \state ) \,.
	\end{align*}
	So for weighted reachability objectives $f' \le \val_{M[\state = \guess - \eps]}$.
	For SSP objectives, 
	\begin{align*}
		\val_{M[\state = \guess]}(\state'; \SSP) 
		&= \EE_{\state'} \left( \sum_{u = 1}^{\infty} \mathds{1}[\state_u = \state] \left( \guess + \sum_{t = 0}^{u-1} w[\state = \guess](\state_{t^*}) \right) \right) \\
		&\qquad + \EE_{\state'} \left( \mathds{1}[\forall t \, \state_t \not = \state] \sum_{t = 0}^{t^*} w[\state = \guess](\state_{t^*}) \right) \\
		&= \guess \PP_{\state'}(\exists t \, s_t = s) + \EE_{\state'} \left( \sum_{u = 1}^{\infty} \mathds{1}[\state_u = \state] \sum_{t = 0}^{u-1} w[\state = \guess](\state_{t^*}) \right) \\
		&\qquad + \EE_{\state'} \left( \mathds{1}[\forall t \, \state_t \not = \state] \sum_{t = 0}^{t^*} w[\state = \guess](\state_{t^*}) \right) \\
		&= \val_{M[\state = \guess - \eps]}(\state'; \SSP) + \eps \, \PP_{\state'} ( \exists t \, \state_t = \state ) \,.
	\end{align*}
	So for SSP objectives $f' \le \val_{M[\state = \guess - \eps]}$ too.
	
	We argue that $\Call{BellmanUpdate}{M, f', \state} > \guess - \eps$.
	Indeed, 
	\begin{align*}
		%    \makebox[0pt]{\hspace{80pt} } 
		&\Call{BellmanUpdate}{M,f',s}\\
		&\qquad = \sum_{s'\in S} \prob(\state,s')f'(s')
		&\left(\text{def. Bellman update}\right)\\
		&\qquad = \sum_{s'\in S} \prob(\state,s')(f(\state) - \eps \, \PP_{\state'} ( \exists t \, \state_t = \state ) )
		&\left(\text{def. $f'$}\right)\\
		&\qquad = \sum_{s'\in S} \prob(\state,s')f(\state) - \sum_{s'\in S} \prob(\state,s') \eps \, \PP_{\state'} ( \exists t \, \state_t = \state ) 
		&(\text{factor in})\\
		&\qquad = \guess' - \sum_{s'\in S} \prob(\state,s') \eps \, \PP_{\state'} ( \exists t \, \state_t = \state ) 
		&\left(\text{def. of $\guess'$}\right)\\
		&\qquad \ge \guess' - (1-\smallTransition^{\numStates})\eps
		&\left(\Cref{observation:no_return}\right)\\
		&\qquad > \guess - \eps \,.
		&\left(\guess' + \smallTransition^{\numStates} \eps > \guess\right)
	\end{align*}
	
	Finally, applying \Cref{Result: from_paper} to $M[\state = \guess - \eps]$, $f'$ and guess $\guess - \eps$, we have $\val_{M}(\state) > \guess - \eps$. 
	Since $\eps$ is arbitrary we conclude the proof.
	
	\qed\end{proof}

\begin{proof}[Proof of {\Cref{Result: Correctness of guessing}}]
	Given an MC $M$, an approximation error $\eps$, and a set of states $I \subseteq \States$, we show that the procedure given by \Cref{Algorithm: Approximate Value of preprocessed MC}, i.e., $(l,u) = \Call{SolveWithGuessingSet}{M, \eps, I}$, satisfies that $l \le \val_M \le u$ and $\| u - l \|_\infty \le \eps$.
	
	Fix an arbitrarity state $\state$. 
	We show that $\lb \le \val_{M}(\state) \le \ub$.
	On line~\ref{line:start_bounds}, the initial bounds are set so that they contain the value.
	Every time the lower bound is updated on line~\ref{line:lower}, we have that $\lb$ is the lower bound on $M[\state = \guess]$ and
	$\Call{BellmanUpdate}{l,s}>\guess$.
	Therefore, by \Cref{Result: from_paper}, we have $\lb = \guess < \val_{M}(\state)$.
	For the upper bound, the update on line~\ref{line:upper} is symmetric.
	If no condition is invoked, then it holds that $|u - l| \le \frac{1}{4}\eps \smallTransition^{\numStates}$.
	In particular, it holds that $\Call{BellmanUpdate}{l,\state} + \frac{1}{4}\eps \smallTransition^{\numStates} > \Call{BellmanUpdate}{u,\state} > \guess$.
	Therefore, by \Cref{lemma:encompassing}, we conclude that $\val_{M}(\state) > \guess - \frac{1}{4}\eps$.
	Symmetrically, the statement holds for the upper bound.
	In the last two updates, for the lower bound $l$ on line~\ref{line:final_lower}, since $\val_{M}(\state) \ge \lb$.
	Therefore, at the end of the algorithm, we have that $l \le \val_{M[\state = \lb]}$.
	Again, the symmetric statement holds for $u$.
	
	Lastly, we show that $\| u - l \|_\infty \le \eps$.
	Before \Cref{line:final_lower}, we have that $| \ub - \lb | \le \eps / 2$.
	After \Cref{line:final_upper}, $\lb$ is at most $\frac{\eps}{4}$ apart from $\val_{M[\state = \lb]}$, and similarly for $u$ and $\val_{M[\state = \ub]}$.
	Therefore, $u$ and $l$ are at most $\eps$ apart.
	
	\qed\end{proof}

\begin{proof}[Proof of {\Cref{Result: subexponential MC Algorith}}]
	Consider an MC $M$ and an approximation error $\eps$.
	Let $I$ be the set given by \Cref{Algorithm: Mark to Guess}.
	We show that the number of calls of $\Call{BellmanUpdate}{}$ during \Cref{Algorithm: Approximate Value of preprocessed MC} is at most 
	\[
	\left( \log \left( \frac{ w_{\max} }{ \eps }\right) \frac{\numStates }{ \smallTransition  } \right)^{\calO \left( \sqrt{\numStates} \right)} \,.
	\]
	
	The proof is by induction on $|I|$. 
	Consider the base case $|I| = 0$. 
	By \Cref{Algorithm: Mark to Guess}, there are at most $\sqrt{\numStates}$ levels.
	Then, by \Cref{Result: Levels VI}, we require at most the following number of Bellman updates:
	\[
	\log \left( \frac{\sqrt{\numStates} w_{\max} }{ \eps \smallTransition } \right) \frac{ \sqrt{\numStates} }{ \smallTransition  } 
	\le \left( \log \left( \frac{ w_{\max} }{ \eps }\right) \frac{\numStates }{ \smallTransition  } \right)^{\calO \left( \sqrt{\numStates} \right)} \,.
	\]
	This proves the base case.
	For the inductive step, there is a simple recursion.
	Indeed, computing $\Call{Solve}{M, \eps, I}$ leads to at most 
	\[
	\log \left( \frac{2}{ \eps \smallTransition^{\numStates}} \right)
	\] 
	calls to the procedure $\Call{Solve}{M, \eps \smallTransition^{\numStates} / 2, I \setminus \{ \state \}}$.
	Recall that, by \Cref{Result: Preprocessing sizes}, we have that $|I| \le 9 \sqrt{\numStates}$ and the number of levels of $\Call{Guess}{M, I}$ is at most $\sqrt{\numStates}$.
	Therefore, the highest precision a Markov Chain is solved is $\eps \left( \smallTransition^{\numStates} / 2 \right)^{9 \sqrt{\numStates}}$.
	Then, the number of calls to $\Call{Solve}{M,\eps, \emptyset}$ is
	\[
	\left( \log \left( \frac{2^{\sqrt{\numStates}}}{ \eps \smallTransition^{\numStates \sqrt{\numStates}}} \right) \right)^{ 9 \sqrt{\numStates}}
	\le \left( \numStates \log \left( \frac{1}{ \eps \smallTransition} \right) \right)^{ 18 \sqrt{\numStates}} \,.
	\] 
	Lastly, by \Cref{Result: Levels VI} and the maximum precision used, each of these calls requires at most the following number of Bellman updates:
	\[
	\log \left( \frac{\sqrt{\numStates} w_{\max} }{ \eps \left( \smallTransition^{\numStates} / 2 \right)^{\sqrt{\numStates}} \smallTransition } \right) \frac{ \sqrt{\numStates} }{ \smallTransition  } 
	\le 2 \log \left( \frac{w_{\max} }{ \eps } \right) \frac{ \numStates^3 }{ \smallTransition^2 } \,.
	\]
	
	Using both bounds together, we deduce that we use in total at most 
	\[
	\left( \numStates \log \left( \frac{1}{ \eps \smallTransition} \right) \right)^{ 18 \sqrt{\numStates}}
	\cdot 2 \log \left( \frac{w_{\max} }{ \eps } \right) \frac{ \numStates^3 }{ \smallTransition^2 } 
	\]
	Bellman updates, which is at most $\left( \numStates \log (w_{\max} / \eps ) / \smallTransition \right)^{\calO \left( \sqrt{\numStates} \right)}$.

	\paragraph{Memory Requirements.}
	The depth of the recursion is only $\calO(\sqrt{\numStates})$. 
	Also, in every call of the recursion, only $\calO(\numStates)$ numbers need to be stored.
	Therefore, the overall memory requirement is $\calO(\numStates^{3/2})$.
	
	\qed\end{proof}

\begin{definition}[Maximal End Component (MEC)]
	\label{def:MEC}
	Consider an MDP $P$. Let a subgraph $G' = (\States' = \States_p' \cup \States_d', E')$, where $E' \defas \{ (\state, \state') : \state' \in \Neighbors(\state) \cap \States' \}$, of the labeled graph of $M$.
	The subgraph $G'$ is called an end-component if the following properties hold.
	\begin{itemize}
		\item \emph{Closed.} For all states $\state \in \States_p'$, we have that $\Neighbors(s) \subseteq \States'$.
		\item \emph{Strongly connected.} The graph $G'$ is strongly connected.
	\end{itemize}
	We say that the end component $G'$ is maximal if it is maximal with respect to graph inclusion.
\end{definition}

\begin{proof}[Proof of \Cref{Result: Correctness of practical guessing}]
	We prove the termination of \Cref{Algorithm: Practical guessing} by induction on the cardinality of $\States \setminus \Target$. 
	For the base case, consider $|\States \setminus \Target| = |\{ \state \}| = 1$. 
	Then, after only one Bellman update, the value of $\state$ is computed and therefore $\Call{Verify}{P, l, u, s, \guess, \eps, K_2}$ never returns Inconclusive$(l', u')$. 
	Since the length of the interval $(l_\state, u_\state)$ is divided by at least $2$ in each iteration, it terminates in at most $\log \eps$ steps.
	For the inductive case, by induction, recursive calls always terminate. 
	Therefore, the main while loop always advances.
	We prove that all possible branches of $\Call{Verify}{P, l, u, s, \guess, \eps, K_2}$ tighten the bounds guaranteeing termination.
	We proceed by cases.
	For the case Bounds$(l', u', k)$ termination is immediate.
	Assume the case Lower$(l', k)$. 
	By \Cref{Result: from_paper}, we have that $\guess$ is a lower bound. 
	Therefore, $l' \le \val$. 
	Moreover, consider $\overline{l}$ the result of applying $k$ Belmman updates to $P$ from $l$.
	By monotonicity of the Bellman update, we have that $\overline{l} \le l$. 
	Therefore, in this case, we tighten the bounds at least as much as $k$ Belmman updates would do.
	The case Upper$(u', k)$ is symmetric.
	Lastly, consider the case Inconclusive$(l', u')$. 
	If $\guess < \Call{BellmanUpdate}{l'', s}$, then the argument follows as in the case Lower$(l', k)$ and the obtained bounds tighten at least as much as $K_2$ Bellman updates.
	Similarly when $\guess > \Call{BellmanUpdate}{u'', s}$.
	For the last case, note that, if $\| u'' - l'' \|_\infty \le \eps \smallTransition^{\numStates} / ( 4 + 6 \smallTransition^{\numStates})$, then $\| u - l \|_\infty \le \eps /2$, which tightens the bounds by a factor of two. 
	In conclusion, all cases provide sufficient improvement in the bounds to guarantee termination.
	This concludes our proof by induction.
	
	Correctness is guaranteed by the invariant $l \le \val \le u$. 
	Indeed, \Cref{Result: from_paper} implies this invariant in all cases except when the guess was approximately correct, i.e., \Cref{line: guess was approximately correct}. 
	In this case, \Cref{Result: Encompassing} guarantees the invariant.
	With this, a proof by induction on $|\States \setminus \Target|$ for correctness is direct.
	
	\qed\end{proof}

\section{Fixpoint of Bellman Updates}

In general, the Bellman update operator does not have a unique fixpoint.
For example, for weighted reachability objectives, if there is a state with a loop transition, then every fixpoint can be modified at this state and remain a fixpoint.
Fortunately, there is a simple procedure called collapsing MECs that transforms an MDP into another where the Bellman update has a unique fixpoint. 
Moreover, this transformation preserves the value of the states, i.e., the values of the states in the original MDP can be deduced from the value of the states in the transformed MDP.

The collapsing procedure is as follows: 
(i)~introduce new target states with weights $w_{\max}$ and $0$;
(ii)~replace all target states, except the new ones, by states with the same incoming edges but with outgoing edges only to the new targets proportional to their weight;
(iii)~compute the set of states that have value $0$ and $w_{\max}$; 
(iv)~convert the value $0$ (resp. $w_{\max}$) class as a single sink state (the process ends
upon reaching that state) with value $0$ (resp. value $w_{\max}$); 
and (v), in the remaining MDP, collapse maximal end-components (MECs) to single states. 
Intuitively, collapsing converts each MEC to a single state with the union of exit choices.
This procedure requires subquadratic time~\cite{chatterjee2011FasterDynamicAlgorithms}.

\section{Experiments}
\label{Section: Experiments}

% NUMERICAL RESULTS

\paragraph{Instances.}

We consider all 636 instances contained in the Quantitative Verification Benchmark Set (QVBS)~\cite{hartmanns2019QuantitativeVerificationBenchmark}.
There are 162 instances where some of the algorithms considered timed out (at 600 seconds) or failed.
From these 162 instances, there are 153 in which all algorithms failed or timed out.
In the remaining 11 instances, each algorithm returns an answer as follows:
IVI in 3 instances; 
OVI in 6 instances; 
SVI in 7 instances; 
GVI in 3 instances.
Omitting these 162 instances leaves a total of 474 instances that we analyze.
Each instance consists of a model, parameters and a property.
All properties can be stated as either reachability or SSP objective.

Instances in QVBS are highly diverse. 
They come from established probabilistic verification and planning benchmarks, industrial case studies, models of biological systems, dynamic fault trees, and Petri net examples. 
Some of the instances have been used in previous benchmarkset, for example in the PRISM benchmark set~\cite{kwiatkowsa2012PRISMBenchmarkSuite}.

\paragraph{Experimental Setting.} 
The results were obtained on Ubuntu 20.04 with an Intel Core i7-11800H processor (4.6 GHz, 24 MB cache) using 16 GB of RAM. 
For each instance, the algorithms were run to compute the solution with at most $10^{-3}$ additive error within a time limit of 600 seconds. 
Instances for which some algorithm failed to provide a correct result within the time limit were omitted.

\paragraph{Detailed experimental results.}
Recall that we grouped the instances in the following groups.
\begin{itemize}
	\item Group 1: instances where all algorithms are fast, i.e., they take at most 100 milliseconds (170 instances).
	\item Group 2: from the rest, those where the fastest and slowest algorithms are only at most 1.10 times of each other (135 instances).
	\item Group 3: from the rest, there is a winner among the previous VI-based approaches over our approach (83 instances).
	\item Group 4: all other instances not considered before (86 instances).
\end{itemize}
We present the following tables.
\begin{itemize}
	\item \Cref{Table: Small times} contains Group 1, i.e., instances with small times.
	
	\item \Cref{Table: Indistinguishable times} contains Group 2, i.e., instances where there is no clear winner.
	
	\item \Cref{Table: Worse times} contains Group 3, i.e., among the remaining instances, there is a winner among the previous VI-based approaches over our approach.
	The improvement of each previous approach over GVI in these instances is at most 1.15 times. 
	
	\item \Cref{Table: Best Times} contains Group 4, i.e., the times for the rest of the instances. 
\end{itemize}
At the end of each table, the sum of the times over the instances is shown. 

We expand on Group 4, where there are 86 instances.
Each instance consists of a model, parameters and a property to be computed.
The following models are present in Group 4:
\begin{itemize}	\item Randomized Consensus Protocol (\emph{consensus}), first presented in~\cite{aspnes1990FastRandomizedConsensus},
	\item Coupon Collectors (\emph{coupon}), first presented in~\cite{artho_bounded_2016},
	\item Crowds Protocol (\emph{crowds}), first presented in~\cite{reiter1998CrowdsAnonymityWeb},
	\item IEEE 802.3 CSMA/CD Protocol (\emph{csma}), first presented in~\cite{kwiatkowsa2012PRISMBenchmarkSuite},
	\item Dynamic Power Management (\emph{dpm}), first presented in~\cite{qiu1999StochasticModelingPowermanaged},
	\item EchoRing (\emph{echoring}), first presented in~\cite{dombrowski_model-checking_2016},
	\item Probabilistic Contract Signing Protocol (\emph{egl}), first presented in~\cite{even1985RandomizedProtocolSigning},
	\item Embedded Control System (\emph{embedded}), first presented in~\cite{muppala1994StochasticRewardNetsReliabilityPrediction},
	\item Exploding Blocksworld (\emph{exploding-blocksworld}), first presented in~\cite{younes2005FirstProbabilisticTrack},
	\item IEEE 1394 FireWire Root Contention Protocol (\emph{firewire}), first presented in~\cite{stoelinga1999RootContentionIEEE},
	\item Fault-Tolerant Workstation Cluster (\emph{ftwc}), first presented in~\cite{haverkort2000UseModelChecking},
	\item Haddad-Monmege (\emph{haddad-monmege}), first presented in~\cite{haddad2018IntervalIterationAlgorithm},
	\item Cyclic Server Polling System (\emph{polling}), first presented in~\cite{ibe_stochastic_1990}.
\end{itemize}

\newcolumntype{R}{p{210pt}<{\raggedright\arraybackslash}}
\begin{longtable}{ |c|R||r|r|r|r| }
	% Table captions must be before the table, according to LIPICs guidelines
	\caption{Detailed experimental results measuring time in seconds for each algorithm over instance in Group 1.}
	\label{Table: Small times}\\
	% First table head
	\hline
	\multicolumn{2}{|c||}{Instance} & \multicolumn{4}{c|}{Time (seconds)} \\
	\hline
	\# & Model \allowbreak @~Paramenters \allowbreak @~Property & IVI & OVI & SVI & GVI \\
	\hline
	\endfirsthead
	% Continuation table head
	\multicolumn{4}{c}%
	{\tablename\ \thetable\ -- \textit{Continued from previous page}} \\
	\hline
	\multicolumn{2}{|c||}{Instance} & \multicolumn{4}{c|}{Time (seconds)} \\
	\hline
	\# & Model \allowbreak @~Paramenters \allowbreak @~Property & IVI & OVI & SVI & GVI \\
	\hline
	\endhead
	% End table head declaration
	% Table foot
	\hline \multicolumn{6}{r}{\textit{Continued on next page}} \\
	\endfoot
	\hline
	\endlastfoot
	% End table foot
	1 & beb \allowbreak @~3, 4, 3 \allowbreak @~GaveUp & 0 & 0 & 0 & 0 \\
	2 & beb \allowbreak @~3, 4, 3 \allowbreak @~LineSeized & 0 & 0 & 0 & 0 \\
	3 & brp \allowbreak @~16, 2 \allowbreak @~p1 & 0 & 0 & 0 & 0 \\
	4 & brp \allowbreak @~16, 2 \allowbreak @~p2 & 0 & 0 & 0 & 0 \\
	5 & brp \allowbreak @~16, 2 \allowbreak @~p4 & 0 & 0 & 0 & 0 \\
	6 & brp \allowbreak @~16, 3 \allowbreak @~p1 & 0 & 0 & 0 & 0 \\
	7 & brp \allowbreak @~16, 3 \allowbreak @~p2 & 0 & 0 & 0 & 0 \\
	8 & brp \allowbreak @~16, 3 \allowbreak @~p4 & 0 & 0 & 0 & 0 \\
	9 & brp \allowbreak @~16, 4 \allowbreak @~p1 & 0 & 0 & 0 & 0 \\
	10 & brp \allowbreak @~16, 4 \allowbreak @~p2 & 0 & 0 & 0 & 0 \\
	11 & brp \allowbreak @~16, 4 \allowbreak @~p4 & 0 & 0 & 0 & 0 \\
	12 & brp \allowbreak @~16, 5 \allowbreak @~p1 & 0 & 0 & 0 & 0 \\
	13 & brp \allowbreak @~16, 5 \allowbreak @~p2 & 0 & 0 & 0 & 0 \\
	14 & brp \allowbreak @~16, 5 \allowbreak @~p4 & 0 & 0 & 0 & 0 \\
	15 & brp \allowbreak @~32, 2 \allowbreak @~p1 & 0 & 0 & 0 & 0 \\
	16 & brp \allowbreak @~32, 2 \allowbreak @~p2 & 0 & 0 & 0 & 0 \\
	17 & brp \allowbreak @~32, 2 \allowbreak @~p4 & 0 & 0 & 0 & 0 \\
	18 & brp \allowbreak @~32, 3 \allowbreak @~p1 & 0 & 0 & 0 & 0 \\
	19 & brp \allowbreak @~32, 3 \allowbreak @~p2 & 0 & 0 & 0 & 0 \\
	20 & brp \allowbreak @~32, 3 \allowbreak @~p4 & 0 & 0 & 0 & 0 \\
	21 & brp \allowbreak @~32, 4 \allowbreak @~p1 & 0 & 0 & 0 & 0 \\
	22 & brp \allowbreak @~32, 4 \allowbreak @~p2 & 0 & 0 & 0 & 0 \\
	23 & brp \allowbreak @~32, 4 \allowbreak @~p4 & 0 & 0 & 0 & 0 \\
	24 & brp \allowbreak @~32, 5 \allowbreak @~p1 & 0 & 0 & 0 & 0 \\
	25 & brp \allowbreak @~32, 5 \allowbreak @~p2 & 0 & 0 & 0 & 0 \\
	26 & brp \allowbreak @~32, 5 \allowbreak @~p4 & 0 & 0 & 0 & 0 \\
	27 & brp \allowbreak @~64, 2 \allowbreak @~p1 & 0 & 0 & 0 & 0 \\
	28 & brp \allowbreak @~64, 2 \allowbreak @~p2 & 0 & 0 & 0 & 0 \\
	29 & brp \allowbreak @~64, 2 \allowbreak @~p4 & 0 & 0 & 0 & 0 \\
	30 & brp \allowbreak @~64, 3 \allowbreak @~p1 & 0 & 0 & 0 & 0 \\
	31 & brp \allowbreak @~64, 3 \allowbreak @~p2 & 0 & 0 & 0 & 0 \\
	32 & brp \allowbreak @~64, 3 \allowbreak @~p4 & 0 & 0 & 0 & 0 \\
	33 & brp \allowbreak @~64, 4 \allowbreak @~p1 & 0 & 0 & 0 & 0 \\
	34 & brp \allowbreak @~64, 4 \allowbreak @~p2 & 0 & 0 & 0 & 0 \\
	35 & brp \allowbreak @~64, 4 \allowbreak @~p4 & 0 & 0 & 0 & 0 \\
	36 & brp \allowbreak @~64, 5 \allowbreak @~p1 & 0 & 0 & 0 & 0 \\
	37 & brp \allowbreak @~64, 5 \allowbreak @~p2 & 0 & 0 & 0 & 0 \\
	38 & brp \allowbreak @~64, 5 \allowbreak @~p4 & 0 & 0 & 0 & 0 \\
	39 & cdrive \allowbreak @~2 \allowbreak @~goal & 0 & 0 & 0 & 0 \\
	40 & cdrive \allowbreak @~3 \allowbreak @~goal & 0 & 0 & 0 & 0 \\
	41 & consensus \allowbreak @~2, 16 \allowbreak @~c1 & 0 & 0 & 0 & 0 \\
	42 & consensus \allowbreak @~2, 2 \allowbreak @~c1 & 0 & 0 & 0 & 0 \\
	43 & consensus \allowbreak @~2, 2 \allowbreak @~c2 & 0 & 0 & 0 & 0 \\
	44 & consensus \allowbreak @~2, 2 \allowbreak @~disagree & 0 & 0 & 0 & 0 \\
	45 & consensus \allowbreak @~2, 4 \allowbreak @~c1 & 0 & 0 & 0 & 0 \\
	46 & consensus \allowbreak @~2, 4 \allowbreak @~c2 & 0 & 0 & 0 & 0 \\
	47 & consensus \allowbreak @~2, 4 \allowbreak @~disagree & 0 & 0 & 0 & 0 \\
	48 & consensus \allowbreak @~2, 8 \allowbreak @~c1 & 0 & 0 & 0 & 0 \\
	49 & coupon \allowbreak @~5, 2, 5 \allowbreak @~collect\_all & 0 & 0 & 0 & 0 \\
	50 & crowds \allowbreak @~3, 10 \allowbreak @~positive & 0 & 0 & 0 & 0 \\
	51 & crowds \allowbreak @~3, 5 \allowbreak @~positive & 0 & 0 & 0 & 0 \\
	52 & crowds \allowbreak @~4, 5 \allowbreak @~positive & 0 & 0 & 0 & 0 \\
	53 & csma \allowbreak @~2, 2 \allowbreak @~all\_before\_max & 0 & 0 & 0 & 0 \\
	54 & csma \allowbreak @~2, 2 \allowbreak @~all\_before\_min & 0 & 0 & 0 & 0 \\
	55 & csma \allowbreak @~2, 2 \allowbreak @~some\_before & 0 & 0 & 0 & 0 \\
	56 & elevators \allowbreak @~a, 3, 3 \allowbreak @~goal & 0 & 0 & 0 & 0 \\
	57 & elevators \allowbreak @~b, 3, 3 \allowbreak @~goal & 0 & 0 & 0 & 0 \\
	58 & erlang \allowbreak @~10, 10, 5 \allowbreak @~PminReach & 0 & 0 & 0 & 0 \\
	59 & erlang \allowbreak @~5000, 10, 5 \allowbreak @~PminReach & 0 & 0 & 0 & 0 \\
	60 & erlang \allowbreak @~5000, 100, 5 \allowbreak @~PminReach & 0 & 0 & 0 & 0 \\
	61 & erlang \allowbreak @~5000, 100, 50 \allowbreak @~PminReach & 0 & 0 & 0 & 0 \\
	62 & firewire \allowbreak @~false, 3, 200 \allowbreak @~elected & 0 & 0 & 0 & 0 \\
	63 & firewire \allowbreak @~false, 3, 400 \allowbreak @~elected & 0 & 0 & 0 & 0 \\
	64 & firewire \allowbreak @~false, 3, 600 \allowbreak @~elected & 0 & 0 & 0 & 0 \\
	65 & firewire \allowbreak @~false, 3, 800 \allowbreak @~elected & 0 & 0 & 0 & 0 \\
	66 & firewire\_abst \allowbreak @~3 \allowbreak @~elected & 0 & 0 & 0 & 0 \\
	67 & firewire\_abst \allowbreak @~36 \allowbreak @~elected & 0 & 0 & 0 & 0 \\
	68 & firewire\_dl \allowbreak @~3, 200 \allowbreak @~deadline & 0 & 0 & 0 & 0 \\
	69 & ftwc \allowbreak @~4, 5 \allowbreak @~ReachMinIsOne & 0 & 0 & 0 & 0 \\
	70 & ij \allowbreak @~10 \allowbreak @~stable & 0 & 0 & 0 & 0 \\
	71 & leader\_sync \allowbreak @~3, 2 \allowbreak @~eventually\_elected & 0 & 0 & 0 & 0 \\
	72 & leader\_sync \allowbreak @~3, 3 \allowbreak @~eventually\_elected & 0 & 0 & 0 & 0 \\
	73 & leader\_sync \allowbreak @~3, 4 \allowbreak @~eventually\_elected & 0 & 0 & 0 & 0 \\
	74 & leader\_sync \allowbreak @~4, 2 \allowbreak @~eventually\_elected & 0 & 0 & 0 & 0 \\
	75 & leader\_sync \allowbreak @~4, 3 \allowbreak @~eventually\_elected & 0 & 0 & 0 & 0 \\
	76 & leader\_sync \allowbreak @~4, 4 \allowbreak @~eventually\_elected & 0 & 0 & 0 & 0 \\
	77 & leader\_sync \allowbreak @~5, 2 \allowbreak @~eventually\_elected & 0 & 0 & 0 & 0 \\
	78 & leader\_sync \allowbreak @~5, 3 \allowbreak @~eventually\_elected & 0 & 0 & 0 & 0 \\
	79 & leader\_sync \allowbreak @~5, 4 \allowbreak @~eventually\_elected & 0 & 0 & 0 & 0 \\
	80 & philosophers-mdp \allowbreak @~3 \allowbreak @~eat & 0 & 0 & 0 & 0 \\
	81 & philosophers \allowbreak @~4, 1 \allowbreak @~MaxPrReachDeadlock & 0 & 0 & 0 & 0 \\
	82 & pnueli-zuck \allowbreak @~3 \allowbreak @~live & 0 & 0 & 0 & 0 \\
	83 & polling \allowbreak @~3, 16 \allowbreak @~s1\_before\_s2 & 0 & 0 & 0 & 0 \\
	84 & polling \allowbreak @~4, 16 \allowbreak @~s1\_before\_s2 & 0 & 0 & 0 & 0 \\
	85 & polling \allowbreak @~5, 16 \allowbreak @~s1\_before\_s2 & 0 & 0 & 0 & 0 \\
	86 & polling \allowbreak @~6, 16 \allowbreak @~s1\_before\_s2 & 0 & 0 & 0 & 0 \\
	87 & polling \allowbreak @~7, 16 \allowbreak @~s1\_before\_s2 & 0 & 0 & 0 & 0 \\
	88 & polling \allowbreak @~8, 16 \allowbreak @~s1\_before\_s2 & 0.1 & 0 & 0 & 0 \\
	89 & bitcoin-attack \allowbreak @~20, 6 \allowbreak @~T\_MWinMin & 0 & 0 & 0 & 0 \\
	90 & consensus \allowbreak @~2, 2 \allowbreak @~steps\_max & 0 & 0 & 0 & 0 \\
	91 & consensus \allowbreak @~2, 2 \allowbreak @~steps\_min & 0 & 0 & 0 & 0 \\
	92 & consensus \allowbreak @~2, 4 \allowbreak @~steps\_max & 0 & 0 & 0 & 0 \\
	93 & consensus \allowbreak @~2, 4 \allowbreak @~steps\_min & 0 & 0 & 0 & 0 \\
	94 & coupon \allowbreak @~5, 2, 5 \allowbreak @~exp\_draws & 0 & 0 & 0 & 0 \\
	95 & csma \allowbreak @~2, 2 \allowbreak @~time\_max & 0 & 0 & 0 & 0 \\
	96 & csma \allowbreak @~2, 2 \allowbreak @~time\_min & 0 & 0 & 0 & 0 \\
	97 & erlang \allowbreak @~10, 10, 5 \allowbreak @~TminReach & 0 & 0 & 0 & 0 \\
	98 & erlang \allowbreak @~5000, 10, 5 \allowbreak @~TminReach & 0 & 0 & 0 & 0 \\
	99 & erlang \allowbreak @~5000, 100, 5 \allowbreak @~TminReach & 0 & 0 & 0 & 0 \\
	100 & erlang \allowbreak @~5000, 100, 50 \allowbreak @~TminReach & 0 & 0 & 0 & 0 \\
	101 & firewire\_abst \allowbreak @~3 \allowbreak @~rounds & 0 & 0 & 0 & 0 \\
	102 & firewire\_abst \allowbreak @~3 \allowbreak @~time\_max & 0 & 0 & 0 & 0 \\
	103 & firewire\_abst \allowbreak @~3 \allowbreak @~time\_min & 0 & 0 & 0 & 0 \\
	104 & firewire\_abst \allowbreak @~36 \allowbreak @~rounds & 0 & 0 & 0 & 0 \\
	105 & firewire\_abst \allowbreak @~36 \allowbreak @~time\_max & 0 & 0 & 0 & 0 \\
	106 & firewire\_abst \allowbreak @~36 \allowbreak @~time\_min & 0 & 0 & 0 & 0 \\
	107 & herman \allowbreak @~3 \allowbreak @~steps & 0 & 0 & 0 & 0 \\
	108 & herman \allowbreak @~5 \allowbreak @~steps & 0 & 0 & 0 & 0 \\
	109 & herman \allowbreak @~7 \allowbreak @~steps & 0 & 0 & 0 & 0 \\
	110 & herman \allowbreak @~9 \allowbreak @~steps & 0 & 0 & 0 & 0 \\
	111 & jobs \allowbreak @~5, 2 \allowbreak @~avgtime & 0 & 0 & 0 & 0 \\
	112 & jobs \allowbreak @~5, 2 \allowbreak @~completiontime & 0 & 0 & 0 & 0 \\
	113 & leader\_sync \allowbreak @~3, 2 \allowbreak @~time & 0 & 0 & 0 & 0 \\
	114 & leader\_sync \allowbreak @~3, 3 \allowbreak @~time & 0 & 0 & 0 & 0 \\
	115 & leader\_sync \allowbreak @~3, 4 \allowbreak @~time & 0 & 0 & 0 & 0 \\
	116 & leader\_sync \allowbreak @~4, 2 \allowbreak @~time & 0 & 0 & 0 & 0 \\
	117 & leader\_sync \allowbreak @~4, 3 \allowbreak @~time & 0 & 0 & 0 & 0 \\
	118 & leader\_sync \allowbreak @~4, 4 \allowbreak @~time & 0 & 0 & 0 & 0 \\
	119 & leader\_sync \allowbreak @~5, 2 \allowbreak @~time & 0 & 0 & 0 & 0 \\
	120 & leader\_sync \allowbreak @~5, 3 \allowbreak @~time & 0 & 0 & 0 & 0 \\
	121 & leader\_sync \allowbreak @~5, 4 \allowbreak @~time & 0 & 0 & 0 & 0 \\
	122 & mapk\_cascade \allowbreak @~1, 30 \allowbreak @~activated\_time & 0 & 0 & 0 & 0 \\
	123 & philosophers \allowbreak @~4, 1 \allowbreak @~MinExpTimeDeadlock & 0 & 0 & 0 & 0 \\
	124 & blocksworld \allowbreak @~5 \allowbreak @~goal & 0.1 & 0.1 & 0.1 & 0.1 \\
	125 & breakdown-queues \allowbreak @~16 \allowbreak @~Max & 0.1 & 0.1 & 0.1 & 0.1 \\
	126 & breakdown-queues \allowbreak @~16 \allowbreak @~Min & 0.1 & 0.1 & 0.1 & 0.1 \\
	127 & breakdown-queues \allowbreak @~8 \allowbreak @~Max & 0.1 & 0.1 & 0.1 & 0.1 \\
	128 & breakdown-queues \allowbreak @~8 \allowbreak @~Min & 0.1 & 0.1 & 0.1 & 0.1 \\
	129 & cdrive \allowbreak @~6 \allowbreak @~goal & 0.1 & 0.1 & 0.1 & 0.1 \\
	130 & consensus \allowbreak @~2, 8 \allowbreak @~disagree & 0 & 0.1 & 0 & 0.1 \\
	131 & consensus \allowbreak @~4, 2 \allowbreak @~c1 & 0.1 & 0.1 & 0.1 & 0.1 \\
	132 & crowds \allowbreak @~3, 15 \allowbreak @~positive & 0.1 & 0.1 & 0.1 & 0.1 \\
	133 & crowds \allowbreak @~3, 20 \allowbreak @~positive & 0.1 & 0.1 & 0.1 & 0.1 \\
	134 & crowds \allowbreak @~4, 10 \allowbreak @~positive & 0.1 & 0.1 & 0.1 & 0.1 \\
	135 & crowds \allowbreak @~5, 5 \allowbreak @~positive & 0 & 0 & 0 & 0.1 \\
	136 & crowds \allowbreak @~6, 5 \allowbreak @~positive & 0.1 & 0.1 & 0.1 & 0.1 \\
	137 & csma \allowbreak @~2, 4 \allowbreak @~all\_before\_max & 0.1 & 0.1 & 0.1 & 0.1 \\
	138 & csma \allowbreak @~2, 4 \allowbreak @~all\_before\_min & 0.1 & 0.1 & 0.1 & 0.1 \\
	139 & csma \allowbreak @~2, 4 \allowbreak @~some\_before & 0.1 & 0.1 & 0.1 & 0.1 \\
	140 & dpm \allowbreak @~4, 4, 25 \allowbreak @~PmaxQueue1Full & 0.1 & 0.1 & 0.1 & 0.1 \\
	141 & dpm \allowbreak @~4, 4, 25 \allowbreak @~PmaxQueuesFull & 0.1 & 0.1 & 0.1 & 0.1 \\
	142 & dpm \allowbreak @~4, 4, 25 \allowbreak @~PminQueue1Full & 0.1 & 0.1 & 0.1 & 0.1 \\
	143 & dpm \allowbreak @~4, 4, 5 \allowbreak @~PmaxQueue1Full & 0.1 & 0.1 & 0.1 & 0.1 \\
	144 & dpm \allowbreak @~4, 4, 5 \allowbreak @~PmaxQueuesFull & 0.1 & 0.1 & 0.1 & 0.1 \\
	145 & dpm \allowbreak @~4, 4, 5 \allowbreak @~PminQueue1Full & 0.1 & 0.1 & 0.1 & 0.1 \\
	146 & elevators \allowbreak @~a, 11, 9 \allowbreak @~goal & 0.1 & 0.1 & 0.1 & 0.1 \\
	147 & firewire\_dl \allowbreak @~3, 400 \allowbreak @~deadline & 0.1 & 0.1 & 0.1 & 0.1 \\
	148 & firewire\_dl \allowbreak @~36, 200 \allowbreak @~deadline & 0.1 & 0.1 & 0.1 & 0.1 \\
	149 & ftwc \allowbreak @~8, 5 \allowbreak @~ReachMinIsOne & 0.1 & 0.1 & 0.1 & 0.1 \\
	150 & nand \allowbreak @~20, 1 \allowbreak @~reliable & 0.1 & 0.1 & 0.1 & 0.1 \\
	151 & pacman \allowbreak @~5 \allowbreak @~crash & 0.1 & 0.1 & 0.1 & 0.1 \\
	152 & consensus \allowbreak @~2, 8 \allowbreak @~steps\_max & 0.1 & 0.1 & 0.1 & 0.1 \\
	153 & consensus \allowbreak @~2, 8 \allowbreak @~steps\_min & 0.1 & 0.1 & 0.1 & 0.1 \\
	154 & csma \allowbreak @~2, 4 \allowbreak @~time\_max & 0.1 & 0.1 & 0.1 & 0.1 \\
	155 & csma \allowbreak @~2, 4 \allowbreak @~time\_min & 0.1 & 0.1 & 0.1 & 0.1 \\
	156 & eajs \allowbreak @~2, 100, 5 \allowbreak @~ExpUtil & 0.1 & 0.1 & 0.1 & 0.1 \\
	157 & firewire \allowbreak @~false, 3, 200 \allowbreak @~time\_max & 0.1 & 0.1 & 0.1 & 0.1 \\
	158 & firewire \allowbreak @~false, 3, 200 \allowbreak @~time\_min & 0 & 0.1 & 0 & 0.1 \\
	159 & firewire \allowbreak @~false, 3, 200 \allowbreak @~time\_sending & 0.1 & 0.1 & 0.1 & 0.1 \\
	160 & firewire \allowbreak @~false, 3, 400 \allowbreak @~time\_max & 0.1 & 0.1 & 0.1 & 0.1 \\
	161 & firewire \allowbreak @~false, 3, 400 \allowbreak @~time\_min & 0 & 0 & 0 & 0.1 \\
	162 & firewire \allowbreak @~false, 3, 400 \allowbreak @~time\_sending & 0.1 & 0.1 & 0.1 & 0.1 \\
	163 & firewire \allowbreak @~false, 3, 600 \allowbreak @~time\_max & 0.1 & 0.1 & 0.1 & 0.1 \\
	164 & firewire \allowbreak @~false, 3, 600 \allowbreak @~time\_min & 0 & 0 & 0 & 0.1 \\
	165 & firewire \allowbreak @~false, 3, 600 \allowbreak @~time\_sending & 0.1 & 0.1 & 0.1 & 0.1 \\
	166 & firewire \allowbreak @~false, 3, 800 \allowbreak @~time\_max & 0.1 & 0.1 & 0.1 & 0.1 \\
	167 & firewire \allowbreak @~false, 3, 800 \allowbreak @~time\_min & 0.1 & 0 & 0 & 0.1 \\
	168 & firewire \allowbreak @~false, 3, 800 \allowbreak @~time\_sending & 0.1 & 0.1 & 0.1 & 0.1 \\
	169 & flexible-manufacturing \allowbreak @~3, 1 \allowbreak @~M3Fail\_E & 0 & 0 & 0 & 0.1 \\
	170 & herman \allowbreak @~11 \allowbreak @~steps & 0.1 & 0.1 & 0.1 & 0.1 \\
	\hline
	\hline
	& SUM & 4.2 & 4.2 & 4.0 & 4.7
\end{longtable}

% longtable environment is already centred
\newcolumntype{R}{p{210pt}<{\raggedright\arraybackslash}}
\begin{longtable}{ |c|R||r|r|r|r| }
	% Table captions must be before the table, according to LIPICs guidelines
	\caption{Detailed experimental results measuring time in seconds for each algorithm over instance in Group 2.}
	\label{Table: Indistinguishable times}\\
	% First table head
	\hline
	\multicolumn{2}{|c||}{Instance} & \multicolumn{4}{c|}{Time (seconds)} \\
	\hline
	\# & Model \allowbreak @~Paramenters \allowbreak @~Property & IVI & OVI & SVI & GVI \\
	\hline
	\endfirsthead
	% Continuation table head
	\multicolumn{4}{c}%
	{\tablename\ \thetable\ -- \textit{Continued from previous page}} \\
	\hline
	\multicolumn{2}{|c||}{Instance} & \multicolumn{4}{c|}{Time (seconds)} \\
	\hline
	\# & Model \allowbreak @~Paramenters \allowbreak @~Property & IVI & OVI & SVI & GVI \\
	\hline
	\endhead
	% End table head declaration
	% Table foot
	\hline \multicolumn{6}{r}{\textit{Continued on next page}} \\
	\endfoot
	\hline
	\endlastfoot
	% End table foot
	1 & breakdown-queues \allowbreak @~32 \allowbreak @~Max & 0.2 & 0.2 & 0.2 & 0.2 \\
	2 & breakdown-queues \allowbreak @~32 \allowbreak @~Min & 0.2 & 0.2 & 0.2 & 0.2 \\
	3 & csma \allowbreak @~2, 6 \allowbreak @~some\_before & 0.2 & 0.2 & 0.2 & 0.2 \\
	4 & csma \allowbreak @~3, 2 \allowbreak @~all\_before\_max & 0.2 & 0.2 & 0.2 & 0.2 \\
	5 & csma \allowbreak @~3, 2 \allowbreak @~all\_before\_min & 0.2 & 0.2 & 0.2 & 0.2 \\
	6 & dpm \allowbreak @~4, 6, 100 \allowbreak @~PmaxQueue1Full & 0.2 & 0.2 & 0.2 & 0.2 \\
	7 & dpm \allowbreak @~4, 6, 25 \allowbreak @~PmaxQueue1Full & 0.2 & 0.2 & 0.2 & 0.2 \\
	8 & dpm \allowbreak @~4, 6, 50 \allowbreak @~PmaxQueue1Full & 0.2 & 0.2 & 0.2 & 0.2 \\
	9 & egl \allowbreak @~5, 2 \allowbreak @~unfairA & 0.2 & 0.2 & 0.2 & 0.2 \\
	10 & egl \allowbreak @~5, 2 \allowbreak @~unfairB & 0.2 & 0.2 & 0.2 & 0.2 \\
	11 & csma \allowbreak @~3, 2 \allowbreak @~time\_min & 0.2 & 0.2 & 0.2 & 0.2 \\
	12 & egl \allowbreak @~5, 2 \allowbreak @~messagesA & 0.2 & 0.2 & 0.2 & 0.2 \\
	13 & egl \allowbreak @~5, 2 \allowbreak @~messagesB & 0.2 & 0.2 & 0.2 & 0.2 \\
	14 & oscillators \allowbreak @~6, 6, 0.1, 1, 0.1, 1.0 \allowbreak @~power\_consumption & 0.2 & 0.2 & 0.2 & 0.2 \\
	15 & oscillators \allowbreak @~6, 6, 0.1, 1, 0.1, 1.0 \allowbreak @~time\_to\_synch & 0.2 & 0.2 & 0.2 & 0.2 \\
	16 & cdrive \allowbreak @~10 \allowbreak @~goal & 0.3 & 0.3 & 0.3 & 0.3 \\
	17 & crowds \allowbreak @~4, 15 \allowbreak @~positive & 0.3 & 0.3 & 0.3 & 0.3 \\
	18 & csma \allowbreak @~2, 6 \allowbreak @~all\_before\_max & 0.3 & 0.3 & 0.3 & 0.3 \\
	19 & dpm \allowbreak @~4, 6, 100 \allowbreak @~PmaxQueuesFull & 0.3 & 0.3 & 0.3 & 0.3 \\
	20 & dpm \allowbreak @~4, 6, 25 \allowbreak @~PmaxQueuesFull & 0.3 & 0.3 & 0.3 & 0.3 \\
	21 & dpm \allowbreak @~4, 6, 50 \allowbreak @~PmaxQueuesFull & 0.3 & 0.3 & 0.3 & 0.3 \\
	22 & firewire\_dl \allowbreak @~3, 600 \allowbreak @~deadline & 0.3 & 0.3 & 0.3 & 0.3 \\
	23 & nand \allowbreak @~20, 2 \allowbreak @~reliable & 0.3 & 0.3 & 0.3 & 0.3 \\
	24 & philosophers \allowbreak @~12, 1 \allowbreak @~MaxPrReachDeadlock & 0.3 & 0.3 & 0.3 & 0.3 \\
	25 & csma \allowbreak @~2, 6 \allowbreak @~time\_min & 0.3 & 0.3 & 0.3 & 0.3 \\
	26 & jobs \allowbreak @~10, 3 \allowbreak @~avgtime & 0.3 & 0.3 & 0.3 & 0.3 \\
	27 & jobs \allowbreak @~10, 3 \allowbreak @~completiontime & 0.3 & 0.3 & 0.3 & 0.3 \\
	28 & breakdown-queues \allowbreak @~64 \allowbreak @~Max & 0.4 & 0.4 & 0.4 & 0.4 \\
	29 & breakdown-queues \allowbreak @~64 \allowbreak @~Min & 0.4 & 0.4 & 0.4 & 0.4 \\
	30 & egl \allowbreak @~5, 4 \allowbreak @~unfairA & 0.4 & 0.4 & 0.4 & 0.4 \\
	31 & egl \allowbreak @~5, 4 \allowbreak @~unfairB & 0.4 & 0.4 & 0.4 & 0.4 \\
	32 & firewire\_dl \allowbreak @~36, 400 \allowbreak @~deadline & 0.4 & 0.4 & 0.4 & 0.4 \\
	33 & nand \allowbreak @~20, 3 \allowbreak @~reliable & 0.4 & 0.4 & 0.4 & 0.4 \\
	34 & egl \allowbreak @~5, 4 \allowbreak @~messagesA & 0.4 & 0.4 & 0.4 & 0.4 \\
	35 & egl \allowbreak @~5, 4 \allowbreak @~messagesB & 0.4 & 0.4 & 0.4 & 0.4 \\
	36 & echoring \allowbreak @~2 \allowbreak @~MaxOffline1 & 0.5 & 0.5 & 0.5 & 0.5 \\
	37 & echoring \allowbreak @~2 \allowbreak @~MaxOffline2 & 0.5 & 0.5 & 0.5 & 0.5 \\
	38 & echoring \allowbreak @~2 \allowbreak @~MaxOffline3 & 0.5 & 0.5 & 0.5 & 0.5 \\
	39 & echoring \allowbreak @~2 \allowbreak @~MinOffline1 & 0.5 & 0.5 & 0.5 & 0.5 \\
	40 & echoring \allowbreak @~2 \allowbreak @~MinOffline2 & 0.5 & 0.5 & 0.5 & 0.5 \\
	41 & echoring \allowbreak @~2 \allowbreak @~MinOffline3 & 0.5 & 0.5 & 0.5 & 0.5 \\
	42 & nand \allowbreak @~20, 4 \allowbreak @~reliable & 0.5 & 0.5 & 0.5 & 0.5 \\
	43 & firewire \allowbreak @~true, 3, 200 \allowbreak @~time\_min & 0.5 & 0.5 & 0.5 & 0.5 \\
	44 & dpm \allowbreak @~4, 8, 100 \allowbreak @~PmaxQueue1Full & 0.6 & 0.6 & 0.6 & 0.6 \\
	45 & dpm \allowbreak @~4, 8, 25 \allowbreak @~PmaxQueue1Full & 0.6 & 0.6 & 0.6 & 0.6 \\
	46 & dpm \allowbreak @~4, 8, 5 \allowbreak @~PmaxQueue1Full & 0.6 & 0.6 & 0.6 & 0.6 \\
	47 & egl \allowbreak @~5, 6 \allowbreak @~unfairA & 0.6 & 0.6 & 0.6 & 0.6 \\
	48 & egl \allowbreak @~5, 6 \allowbreak @~unfairB & 0.6 & 0.6 & 0.6 & 0.6 \\
	49 & egl \allowbreak @~5, 6 \allowbreak @~messagesA & 0.6 & 0.6 & 0.6 & 0.6 \\
	50 & egl \allowbreak @~5, 6 \allowbreak @~messagesB & 0.6 & 0.6 & 0.6 & 0.6 \\
	51 & dpm \allowbreak @~4, 8, 100 \allowbreak @~PmaxQueuesFull & 0.7 & 0.7 & 0.7 & 0.7 \\
	52 & dpm \allowbreak @~4, 8, 25 \allowbreak @~PmaxQueuesFull & 0.7 & 0.7 & 0.7 & 0.7 \\
	53 & dpm \allowbreak @~4, 8, 5 \allowbreak @~PmaxQueuesFull & 0.7 & 0.7 & 0.7 & 0.7 \\
	54 & firewire\_dl \allowbreak @~36, 600 \allowbreak @~deadline & 0.7 & 0.7 & 0.7 & 0.7 \\
	55 & egl \allowbreak @~5, 8 \allowbreak @~unfairA & 0.8 & 0.8 & 0.8 & 0.8 \\
	56 & egl \allowbreak @~5, 8 \allowbreak @~unfairB & 0.8 & 0.8 & 0.8 & 0.8 \\
	57 & firewire\_dl \allowbreak @~36, 800 \allowbreak @~deadline & 1 & 1 & 1 & 1 \\
	58 & herman \allowbreak @~13 \allowbreak @~steps & 1.1 & 1 & 1.1 & 1 \\
	59 & firewire \allowbreak @~false, 36, 400 \allowbreak @~elected & 1.3 & 1.3 & 1.3 & 1.3 \\
	60 & firewire \allowbreak @~false, 36, 600 \allowbreak @~elected & 1.3 & 1.3 & 1.3 & 1.3 \\
	61 & firewire \allowbreak @~false, 36, 800 \allowbreak @~elected & 1.3 & 1.3 & 1.3 & 1.3 \\
	62 & firewire \allowbreak @~false, 36, 200 \allowbreak @~elected & 1.3 & 1.3 & 1.3 & 1.4 \\
	63 & firewire \allowbreak @~false, 36, 600 \allowbreak @~time\_min & 1.6 & 1.5 & 1.6 & 1.6 \\
	64 & firewire \allowbreak @~false, 36, 800 \allowbreak @~time\_min & 1.6 & 1.5 & 1.6 & 1.6 \\
	65 & crowds \allowbreak @~5, 15 \allowbreak @~positive & 1.8 & 1.7 & 1.7 & 1.7 \\
	66 & nand \allowbreak @~40, 1 \allowbreak @~reliable & 1.8 & 1.8 & 1.8 & 1.8 \\
	67 & pnueli-zuck \allowbreak @~5 \allowbreak @~live & 2 & 2 & 2 & 2 \\
	68 & firewire \allowbreak @~true, 3, 400 \allowbreak @~elected & 2.3 & 2.3 & 2.3 & 2.3 \\
	69 & dpm \allowbreak @~6, 4, 5 \allowbreak @~PmaxQueue1Full & 2.4 & 2.5 & 2.4 & 2.4 \\
	70 & csma \allowbreak @~4, 2 \allowbreak @~some\_before & 2.8 & 2.8 & 2.8 & 2.8 \\
	71 & dpm \allowbreak @~6, 4, 5 \allowbreak @~PmaxQueuesFull & 3.1 & 3.1 & 3.1 & 3.1 \\
	72 & elevators \allowbreak @~b, 11, 9 \allowbreak @~goal & 3.5 & 3.5 & 3.5 & 3.6 \\
	73 & nand \allowbreak @~40, 2 \allowbreak @~reliable & 3.8 & 3.8 & 3.8 & 3.8 \\
	74 & csma \allowbreak @~4, 2 \allowbreak @~all\_before\_min & 3.8 & 3.8 & 3.8 & 3.9 \\
	75 & oscillators \allowbreak @~6, 10, 0.1, 1, 0.1, 1.0 \allowbreak @~time\_to\_synch & 4 & 4 & 4 & 3.9 \\
	76 & csma \allowbreak @~4, 2 \allowbreak @~all\_before\_max & 4.1 & 4 & 3.9 & 4.1 \\
	77 & oscillators \allowbreak @~6, 10, 0.1, 1, 0.1, 1.0 \allowbreak @~power\_consumption & 4 & 4 & 4 & 4.1 \\
	78 & consensus \allowbreak @~6, 2 \allowbreak @~c1 & 5.2 & 5.4 & 5.2 & 5.3 \\
	79 & nand \allowbreak @~40, 3 \allowbreak @~reliable & 5.6 & 5.7 & 5.7 & 5.6 \\
	80 & csma \allowbreak @~3, 4 \allowbreak @~some\_before & 5.8 & 5.8 & 5.7 & 5.7 \\
	81 & firewire \allowbreak @~true, 3, 600 \allowbreak @~elected & 5.7 & 5.7 & 5.7 & 5.7 \\
	82 & csma \allowbreak @~3, 4 \allowbreak @~all\_before\_min & 6.3 & 6.3 & 6.3 & 6.2 \\
	83 & oscillators \allowbreak @~8, 8, 0.1, 1, 0.1, 1.0 \allowbreak @~power\_consumption & 6.4 & 6.5 & 6.4 & 6.5 \\
	84 & oscillators \allowbreak @~8, 8, 0.1, 1, 0.1, 1.0 \allowbreak @~time\_to\_synch & 6.5 & 6.5 & 6.5 & 6.5 \\
	85 & crowds \allowbreak @~5, 20 \allowbreak @~positive & 7 & 6.8 & 6.8 & 6.7 \\
	86 & csma \allowbreak @~3, 4 \allowbreak @~all\_before\_max & 6.7 & 6.6 & 6.7 & 6.7 \\
	87 & nand \allowbreak @~40, 4 \allowbreak @~reliable & 7.8 & 7.8 & 7.7 & 7.8 \\
	88 & nand \allowbreak @~60, 1 \allowbreak @~reliable & 8.7 & 8.8 & 8.7 & 8.7 \\
	89 & echoring \allowbreak @~50 \allowbreak @~MinFailed & 9.7 & 9.7 & 9.6 & 9.7 \\
	90 & herman \allowbreak @~15 \allowbreak @~steps & 10.1 & 9.5 & 10 & 9.7 \\
	91 & echoring \allowbreak @~50 \allowbreak @~MinOffline1 & 10.1 & 10.2 & 10.2 & 10.1 \\
	92 & echoring \allowbreak @~50 \allowbreak @~MinOffline2 & 10 & 10 & 10 & 10.1 \\
	93 & firewire \allowbreak @~true, 3, 800 \allowbreak @~elected & 10.2 & 10 & 10 & 10.1 \\
	94 & echoring \allowbreak @~50 \allowbreak @~MinOffline3 & 10.2 & 10.2 & 10.2 & 10.2 \\
	95 & echoring \allowbreak @~50 \allowbreak @~MaxOffline2 & 10.2 & 10.3 & 10.3 & 10.3 \\
	96 & crowds \allowbreak @~6, 15 \allowbreak @~positive & 10.7 & 10.5 & 10.3 & 10.4 \\
	97 & echoring \allowbreak @~50 \allowbreak @~MaxOffline1 & 10.5 & 10.4 & 10.4 & 10.4 \\
	98 & echoring \allowbreak @~50 \allowbreak @~MaxOffline3 & 10.5 & 10.5 & 10.5 & 10.5 \\
	99 & oscillators \allowbreak @~7, 10, 0.1, 1, 0.1, 1.0 \allowbreak @~time\_to\_synch & 13.2 & 13.3 & 13.1 & 13.4 \\
	100 & oscillators \allowbreak @~7, 10, 0.1, 1, 0.1, 1.0 \allowbreak @~power\_consumption & 13.1 & 13.2 & 13.1 & 13.5 \\
	101 & philosophers \allowbreak @~16, 1 \allowbreak @~MaxPrReachDeadlock & 14.3 & 14.2 & 14.1 & 14.5 \\
	102 & nand \allowbreak @~60, 2 \allowbreak @~reliable & 18.4 & 18.4 & 18.2 & 18.4 \\
	103 & echoring \allowbreak @~100 \allowbreak @~MinFailed & 19.4 & 19.5 & 19.5 & 19.4 \\
	104 & echoring \allowbreak @~100 \allowbreak @~MinOffline2 & 20.1 & 20 & 20.1 & 19.9 \\
	105 & echoring \allowbreak @~100 \allowbreak @~MinOffline1 & 20.4 & 20.3 & 20.3 & 20.2 \\
	106 & echoring \allowbreak @~100 \allowbreak @~MinOffline3 & 21.5 & 20.5 & 20.5 & 20.5 \\
	107 & echoring \allowbreak @~100 \allowbreak @~MaxOffline2 & 20.7 & 20.8 & 20.6 & 20.6 \\
	108 & dpm \allowbreak @~6, 6, 5 \allowbreak @~PmaxQueue1Full & 20.7 & 20.7 & 20.6 & 20.7 \\
	109 & echoring \allowbreak @~100 \allowbreak @~MaxOffline1 & 20.8 & 20.9 & 20.9 & 21 \\
	110 & echoring \allowbreak @~100 \allowbreak @~MaxOffline3 & 21.1 & 21.1 & 21.1 & 21 \\
	111 & dpm \allowbreak @~6, 6, 5 \allowbreak @~PmaxQueuesFull & 25 & 24.4 & 24.9 & 24.6 \\
	112 & nand \allowbreak @~60, 3 \allowbreak @~reliable & 28.4 & 28.6 & 28.6 & 28.5 \\
	113 & nand \allowbreak @~60, 4 \allowbreak @~reliable & 38.1 & 38.4 & 37.9 & 38 \\
	114 & firewire \allowbreak @~true, 36, 200 \allowbreak @~elected & 47.9 & 48 & 47.6 & 47.7 \\
	115 & oscillators \allowbreak @~8, 10, 0.1, 1, 0.1, 1.0 \allowbreak @~time\_to\_synch & 47.4 & 47.2 & 47.3 & 48.1 \\
	116 & oscillators \allowbreak @~8, 10, 0.1, 1, 0.1, 1.0 \allowbreak @~power\_consumption & 47.8 & 47.6 & 47.4 & 48.6 \\
	117 & coupon \allowbreak @~9, 4, 5 \allowbreak @~collect\_all & 46.2 & 45.2 & 45.3 & 49.3 \\
	118 & beb \allowbreak @~4, 8, 7 \allowbreak @~GaveUp & 53.6 & 53.6 & 53.9 & 54 \\
	119 & crowds \allowbreak @~6, 20 \allowbreak @~positive & 69.2 & 67.2 & 66 & 67.5 \\
	120 & beb \allowbreak @~4, 8, 7 \allowbreak @~LineSeized & 70.2 & 70 & 70 & 70.1 \\
	121 & jobs \allowbreak @~15, 3 \allowbreak @~completiontime & 81.9 & 79.2 & 81.7 & 80 \\
	122 & jobs \allowbreak @~15, 3 \allowbreak @~avgtime & 79.6 & 77.6 & 78 & 81 \\
	123 & pacman \allowbreak @~60 \allowbreak @~crash & 82.7 & 82.7 & 83.4 & 83 \\
	124 & herman \allowbreak @~17 \allowbreak @~steps & 95.2 & 91.2 & 94.9 & 91.4 \\
	125 & dpm \allowbreak @~6, 8, 5 \allowbreak @~PmaxQueue1Full & 100.7 & 99.8 & 100.2 & 99.6 \\
	126 & dpm \allowbreak @~8, 4, 5 \allowbreak @~PmaxQueue1Full & 102.1 & 102.1 & 103.6 & 102.3 \\
	127 & dpm \allowbreak @~6, 8, 5 \allowbreak @~PmaxQueuesFull & 117.3 & 123.9 & 122.9 & 115.5 \\
	128 & dpm \allowbreak @~8, 4, 5 \allowbreak @~PmaxQueuesFull & 133.2 & 132.6 & 134.4 & 132.3 \\
	129 & pacman \allowbreak @~100 \allowbreak @~crash & 189.8 & 189.2 & 204.6 & 189.5 \\
	130 & firewire \allowbreak @~true, 36, 400 \allowbreak @~elected & 311 & 310.1 & 311.7 & 309.6 \\
	131 & egl \allowbreak @~10, 2 \allowbreak @~unfairA & 329.7 & 330.3 & 328.9 & 329.9 \\
	132 & egl \allowbreak @~10, 2 \allowbreak @~unfairB & 330.5 & 331.3 & 340.9 & 330.1 \\
	133 & csma \allowbreak @~3, 6 \allowbreak @~some\_before & 373.2 & 372.7 & 375.2 & 374.2 \\
	134 & csma \allowbreak @~3, 6 \allowbreak @~all\_before\_min & 396.8 & 391.4 & 397.3 & 395.5 \\
	135 & csma \allowbreak @~3, 6 \allowbreak @~all\_before\_max & 477.2 & 476.9 & 473.5 & 475.4 \\
	\hline
	\hline
	& SUM & 4144.6 & 4129.2 & 4167.1 & 4134.1
\end{longtable}

% longtable environment is already centred
\newcolumntype{R}{p{210pt}<{\raggedright\arraybackslash}}
\begin{longtable}{ |c|R||r|r|r|r| }
	% Table captions must be before the table, according to LIPICs guidelines
	\caption{Detailed experimental results measuring time in seconds for each algorithm over instance in Group 3.}
	\label{Table: Worse times}\\
	% First table head
	\hline
	\multicolumn{2}{|c||}{Instance} & \multicolumn{4}{c|}{Time (seconds)} \\
	\hline
	\# & Model \allowbreak @~Paramenters \allowbreak @~Property & IVI & OVI & SVI & GVI \\
	\hline
	\endfirsthead
	% Continuation table head
	\multicolumn{6}{c}%
	{\tablename\ \thetable\ -- \textit{Continued from previous page}} \\
	\hline
	\multicolumn{2}{|c||}{Instance} & \multicolumn{4}{c|}{Time (seconds)} \\
	\hline
	\# & Model \allowbreak @~Paramenters \allowbreak @~Property & IVI & OVI & SVI & GVI \\
	\hline
	\endhead
	% End table head declaration
	% Table foot
	\hline \multicolumn{6}{r}{\textit{Continued on next page}} \\
	\endfoot
	\hline
	\endlastfoot
	% End table foot
	1 & consensus \allowbreak @~2, 8 \allowbreak @~c2 & 0 & 0.2 & 0 & 0.1 \\
	2 & consensus \allowbreak @~4, 4 \allowbreak @~c1 & 0.1 & 0.1 & 0.1 & 0.2 \\
	3 & polling \allowbreak @~10, 16 \allowbreak @~s1\_before\_s2 & 0.2 & 0.2 & 0.1 & 0.2 \\
	4 & mapk\_cascade \allowbreak @~2, 30 \allowbreak @~activated\_time & 0.2 & 0.1 & 0.1 & 0.2 \\
	5 & consensus \allowbreak @~2, 16 \allowbreak @~disagree & 0.2 & 2.6 & 0.2 & 0.3 \\
	6 & csma \allowbreak @~2, 6 \allowbreak @~all\_before\_min & 0.2 & 0.2 & 0.2 & 0.3 \\
	7 & csma \allowbreak @~3, 2 \allowbreak @~time\_max & 0.3 & 0.2 & 0.3 & 0.3 \\
	8 & crowds \allowbreak @~5, 10 \allowbreak @~positive & 0.3 & 0.3 & 0.3 & 0.4 \\
	9 & csma \allowbreak @~2, 6 \allowbreak @~time\_max & 0.4 & 0.3 & 0.4 & 0.4 \\
	10 & firewire\_dl \allowbreak @~3, 800 \allowbreak @~deadline & 0.4 & 0.4 & 0.4 & 0.5 \\
	11 & consensus \allowbreak @~2, 16 \allowbreak @~steps\_min & 0.3 & 0.4 & 0.2 & 0.5 \\
	12 & philosophers \allowbreak @~12, 1 \allowbreak @~MinExpTimeDeadlock & 0.5 & 0.6 & 0.4 & 0.5 \\
	13 & polling \allowbreak @~12, 16 \allowbreak @~s1\_before\_s2 & 0.9 & 0.8 & 0.6 & 0.7 \\
	14 & consensus \allowbreak @~4, 2 \allowbreak @~steps\_max & 0.6 & 0.5 & 0.4 & 0.7 \\
	15 & dpm \allowbreak @~4, 6, 100 \allowbreak @~TminQueuesFull & 0.6 & 4.3 & 0.5 & 0.7 \\
	16 & dpm \allowbreak @~4, 6, 25 \allowbreak @~TminQueuesFull & 0.6 & 4.4 & 0.5 & 0.7 \\
	17 & flexible-manufacturing \allowbreak @~9, 1 \allowbreak @~M3Fail\_E & 0.4 & 1 & 0.3 & 0.7 \\
	18 & consensus \allowbreak @~2, 16 \allowbreak @~steps\_max & 0.4 & 0.4 & 0.2 & 0.8 \\
	19 & coupon \allowbreak @~7, 3, 5 \allowbreak @~exp\_draws & 0.8 & 0.6 & 0.7 & 0.8 \\
	20 & dpm \allowbreak @~4, 6, 50 \allowbreak @~TminQueuesFull & 0.6 & 4.4 & 0.5 & 0.8 \\
	21 & eajs \allowbreak @~3, 150, 7 \allowbreak @~ExpUtil & 0.8 & 0.6 & 0.8 & 0.8 \\
	22 & egl \allowbreak @~5, 8 \allowbreak @~messagesB & 0.8 & 0.8 & 0.8 & 0.9 \\
	23 & oscillators \allowbreak @~6, 8, 0.1, 1, 0.1, 1.0 \allowbreak @~power\_consumption & 0.8 & 0.8 & 0.8 & 0.9 \\
	24 & oscillators \allowbreak @~6, 8, 0.1, 1, 0.1, 1.0 \allowbreak @~time\_to\_synch & 0.8 & 0.8 & 0.8 & 0.9 \\
	25 & firewire \allowbreak @~true, 3, 200 \allowbreak @~time\_max & 1 & 0.5 & 1 & 1 \\
	26 & firewire \allowbreak @~true, 3, 200 \allowbreak @~time\_sending & 1 & 0.5 & 1 & 1 \\
	27 & crowds \allowbreak @~6, 10 \allowbreak @~positive & 1.1 & 1 & 1 & 1.3 \\
	28 & polling \allowbreak @~13, 16 \allowbreak @~s1\_before\_s2 & 2 & 1.9 & 1.3 & 1.7 \\
	29 & firewire \allowbreak @~false, 36, 200 \allowbreak @~time\_min & 1.6 & 1.5 & 1.6 & 1.7 \\
	30 & firewire \allowbreak @~false, 36, 400 \allowbreak @~time\_min & 1.6 & 1.5 & 1.6 & 1.7 \\
	31 & mapk\_cascade \allowbreak @~3, 30 \allowbreak @~activated\_time & 2.5 & 0.6 & 0.5 & 2 \\
	32 & consensus \allowbreak @~4, 4 \allowbreak @~steps\_min & 2 & 2.8 & 1.3 & 2.1 \\
	33 & dpm \allowbreak @~4, 8, 100 \allowbreak @~TminQueuesFull & 1.8 & 15.6 & 1.5 & 2.3 \\
	34 & dpm \allowbreak @~4, 8, 25 \allowbreak @~TminQueuesFull & 1.8 & 15.9 & 1.5 & 2.3 \\
	35 & dpm \allowbreak @~4, 8, 5 \allowbreak @~TminQueuesFull & 1.8 & 15.9 & 1.5 & 2.3 \\
	36 & firewire \allowbreak @~true, 3, 400 \allowbreak @~time\_min & 2.7 & 2.4 & 2.7 & 2.8 \\
	37 & polling \allowbreak @~14, 16 \allowbreak @~s1\_before\_s2 & 4.5 & 4.4 & 2.9 & 3.7 \\
	38 & firewire \allowbreak @~false, 36, 400 \allowbreak @~time\_sending & 3.9 & 1.6 & 3.8 & 3.9 \\
	39 & firewire \allowbreak @~false, 36, 200 \allowbreak @~time\_sending & 3.9 & 1.6 & 3.8 & 4 \\
	40 & firewire \allowbreak @~false, 36, 600 \allowbreak @~time\_sending & 3.9 & 1.6 & 3.8 & 4 \\
	41 & csma \allowbreak @~4, 2 \allowbreak @~time\_min & 4.1 & 3.7 & 4.1 & 4.1 \\
	42 & firewire \allowbreak @~false, 36, 600 \allowbreak @~time\_max & 3.9 & 2.3 & 3.8 & 4.1 \\
	43 & firewire \allowbreak @~false, 36, 800 \allowbreak @~time\_max & 3.9 & 2.3 & 3.9 & 4.1 \\
	44 & firewire \allowbreak @~false, 36, 800 \allowbreak @~time\_sending & 4 & 1.7 & 3.9 & 4.1 \\
	45 & firewire \allowbreak @~false, 36, 200 \allowbreak @~time\_max & 4 & 2.3 & 3.9 & 4.2 \\
	46 & firewire \allowbreak @~false, 36, 400 \allowbreak @~time\_max & 4 & 2.2 & 3.9 & 4.2 \\
	47 & consensus \allowbreak @~4, 4 \allowbreak @~steps\_max & 3.2 & 3.5 & 1.7 & 4.9 \\
	48 & firewire \allowbreak @~true, 3, 400 \allowbreak @~time\_sending & 5.1 & 2.6 & 5.1 & 5.1 \\
	49 & firewire \allowbreak @~true, 3, 400 \allowbreak @~time\_max & 5.1 & 2.7 & 5.1 & 5.2 \\
	50 & csma \allowbreak @~4, 2 \allowbreak @~time\_max & 6.8 & 3.6 & 6.7 & 6.5 \\
	51 & csma \allowbreak @~3, 4 \allowbreak @~time\_min & 7.2 & 6.2 & 7.3 & 7.2 \\
	52 & firewire \allowbreak @~true, 3, 600 \allowbreak @~time\_min & 7 & 6.2 & 7 & 7.4 \\
	53 & eajs \allowbreak @~4, 200, 9 \allowbreak @~ExpUtil & 7.5 & 5 & 7.5 & 7.6 \\
	54 & polling \allowbreak @~15, 16 \allowbreak @~s1\_before\_s2 & 10.4 & 10.5 & 6.7 & 8.5 \\
	55 & csma \allowbreak @~3, 4 \allowbreak @~time\_max & 10 & 6.1 & 10 & 9.8 \\
	56 & mapk\_cascade \allowbreak @~4, 30 \allowbreak @~activated\_time & 15 & 5.4 & 2.7 & 11.4 \\
	57 & firewire \allowbreak @~true, 3, 800 \allowbreak @~time\_min & 12.4 & 11.2 & 12.5 & 12.7 \\
	58 & dpm \allowbreak @~6, 4, 5 \allowbreak @~TminQueuesFull & 10 & 126.3 & 8 & 13 \\
	59 & firewire \allowbreak @~true, 3, 600 \allowbreak @~time\_max & 12.9 & 6.8 & 12.9 & 13 \\
	60 & flexible-manufacturing \allowbreak @~21, 1 \allowbreak @~M3Fail\_E & 8.6 & 27.5 & 6.6 & 13.2 \\
	61 & firewire \allowbreak @~true, 3, 600 \allowbreak @~time\_sending & 12.6 & 6.4 & 12.7 & 13.4 \\
	62 & polling \allowbreak @~16, 16 \allowbreak @~s1\_before\_s2 & 24.1 & 26.2 & 15.6 & 20.1 \\
	63 & firewire \allowbreak @~true, 3, 800 \allowbreak @~time\_max & 22.1 & 11.6 & 22.1 & 22.5 \\
	64 & firewire \allowbreak @~true, 3, 800 \allowbreak @~time\_sending & 22.6 & 11.2 & 22.5 & 22.8 \\
	65 & philosophers \allowbreak @~16, 1 \allowbreak @~MinExpTimeDeadlock & 25 & 35.1 & 18.7 & 23.2 \\
	66 & dpm \allowbreak @~6, 6, 5 \allowbreak @~PminQueue1Full & 24.9 & 46.5 & 25.4 & 25 \\
	67 & eajs \allowbreak @~5, 250, 11 \allowbreak @~ExpUtil & 34.3 & 22.1 & 34.4 & 33.9 \\
	68 & consensus \allowbreak @~6, 2 \allowbreak @~steps\_min & 42.9 & 66.7 & 34.8 & 40.7 \\
	69 & polling \allowbreak @~17, 16 \allowbreak @~s1\_before\_s2 & 54.2 & 60.2 & 35.4 & 45.1 \\
	70 & mapk\_cascade \allowbreak @~5, 30 \allowbreak @~activated\_time & 68.1 & 40 & 11.6 & 51.7 \\
	71 & firewire \allowbreak @~true, 36, 200 \allowbreak @~time\_min & 61.5 & 54.2 & 61.9 & 65.7 \\
	72 & polling \allowbreak @~18, 16 \allowbreak @~s1\_before\_s2 & 122.2 & 142.9 & 80.6 & 101.8 \\
	73 & consensus \allowbreak @~6, 2 \allowbreak @~steps\_max & 119.8 & 103.6 & 57.6 & 112.9 \\
	74 & eajs \allowbreak @~6, 300, 13 \allowbreak @~ExpUtil & 112.1 & 70.4 & 112.9 & 114 \\
	75 & dpm \allowbreak @~8, 4, 5 \allowbreak @~PminQueue1Full & 117.4 & 206.1 & 119.9 & 118.4 \\
	76 & coupon \allowbreak @~9, 4, 5 \allowbreak @~exp\_draws & 120.3 & 83.3 & 118.7 & 127.6 \\
	77 & firewire \allowbreak @~true, 36, 200 \allowbreak @~time\_sending & 133.1 & 55.9 & 133.2 & 136.9 \\
	78 & firewire \allowbreak @~true, 36, 200 \allowbreak @~time\_max & 131.8 & 104.8 & 133.3 & 140.2 \\
	79 & mapk\_cascade \allowbreak @~6, 30 \allowbreak @~activated\_time & 241.6 & 178.9 & 40.2 & 177.2 \\
	80 & polling \allowbreak @~19, 16 \allowbreak @~s1\_before\_s2 & 270.6 & 331.6 & 179.4 & 225.9 \\
	81 & firewire \allowbreak @~true, 36, 400 \allowbreak @~time\_min & 406.7 & 348.7 & 410.2 & 434.4 \\
	82 & egl \allowbreak @~10, 2 \allowbreak @~messagesB & 446.1 & 378.1 & 442.2 & 455.3 \\
	83 & egl \allowbreak @~10, 2 \allowbreak @~messagesA & 442.9 & 375.9 & 452.3 & 457.1 \\
	\hline
	\hline
	& SUM & 3250.3 & 3092.8 & 2739.3 & 3167.2
\end{longtable}

% longtable environment is already centred
\newcolumntype{R}{p{210pt}<{\raggedright\arraybackslash}}
\begin{longtable}{ |c|R||r|r|r|r| }
	% Table captions must be before the table, according to LIPICs guidelines
	\caption{Detailed experimental results measuring time in seconds for each algorithm over instance in Group 4.}
	\label{Table: Best Times}\\
	% First table head
	\hline
	\multicolumn{2}{|c||}{Instance} & \multicolumn{4}{c|}{Time (seconds)} \\
	\hline
	\# & Model \allowbreak @~Paramenters \allowbreak @~Property & IVI & OVI & SVI & GVI \\
	\hline
	\endfirsthead
	% Continuation table head
	\multicolumn{4}{c}%
	{\tablename\ \thetable\ -- \textit{Continued from previous page}} \\
	\hline
	\multicolumn{2}{|c||}{Instance} & \multicolumn{4}{c|}{Time (seconds)} \\
	\hline
	\# & Model \allowbreak @~Paramenters \allowbreak @~Property & IVI & OVI & SVI & GVI \\
	\hline
	\endhead
	% End table head declaration
	% Table foot
	\hline \multicolumn{6}{r}{\textit{Continued on next page}} \\
	\endfoot
	\hline
	\endlastfoot
	% End table foot
	1 & embedded \allowbreak @~2, 12 \allowbreak @~actuators & 0.6 & 0.1 & 0.2 & 0 \\
	2 & embedded \allowbreak @~2, 12 \allowbreak @~io & 0.5 & 0.2 & 0.2 & 0 \\
	3 & embedded \allowbreak @~2, 12 \allowbreak @~main & 0.6 & 0.1 & 0.2 & 0 \\
	4 & embedded \allowbreak @~2, 12 \allowbreak @~sensors & 0.5 & 0.1 & 0.2 & 0 \\
	5 & embedded \allowbreak @~3, 12 \allowbreak @~actuators & 0.8 & 0.2 & 0.3 & 0 \\
	6 & embedded \allowbreak @~3, 12 \allowbreak @~io & 0.8 & 0.2 & 0.3 & 0 \\
	7 & embedded \allowbreak @~3, 12 \allowbreak @~main & 0.8 & 0.2 & 0.3 & 0 \\
	8 & embedded \allowbreak @~3, 12 \allowbreak @~sensors & 0.7 & 0.2 & 0.3 & 0 \\
	9 & embedded \allowbreak @~4, 12 \allowbreak @~actuators & 1 & 0.2 & 0.4 & 0 \\
	10 & embedded \allowbreak @~4, 12 \allowbreak @~io & 1 & 0.3 & 0.4 & 0 \\
	11 & embedded \allowbreak @~4, 12 \allowbreak @~main & 1 & 0.2 & 0.4 & 0 \\
	12 & embedded \allowbreak @~4, 12 \allowbreak @~sensors & 0.9 & 0.2 & 0.4 & 0 \\
	13 & embedded \allowbreak @~5, 12 \allowbreak @~actuators & 1.2 & 0.2 & 0.4 & 0 \\
	14 & embedded \allowbreak @~5, 12 \allowbreak @~io & 1.2 & 0.3 & 0.5 & 0 \\
	15 & embedded \allowbreak @~5, 12 \allowbreak @~main & 1.3 & 0.2 & 0.4 & 0 \\
	16 & embedded \allowbreak @~5, 12 \allowbreak @~sensors & 1.1 & 0.2 & 0.4 & 0 \\
	17 & embedded \allowbreak @~6, 12 \allowbreak @~main & 1.4 & 0.2 & 0.5 & 0 \\
	18 & embedded \allowbreak @~7, 12 \allowbreak @~main & 1.6 & 0.3 & 0.5 & 0 \\
	19 & embedded \allowbreak @~8, 12 \allowbreak @~main & 1.8 & 0.3 & 0.6 & 0 \\
	20 & haddad-monmege \allowbreak @~20, 0.7 \allowbreak @~target & 3.8 & 0.3 & 2 & 0 \\
	21 & embedded \allowbreak @~2, 12 \allowbreak @~danger\_time & 0.6 & 0.1 & 0.2 & 0 \\
	22 & embedded \allowbreak @~2, 12 \allowbreak @~up\_time & 0.6 & 0.1 & 0.2 & 0 \\
	23 & embedded \allowbreak @~3, 12 \allowbreak @~danger\_time & 0.8 & 0.1 & 0.2 & 0 \\
	24 & embedded \allowbreak @~3, 12 \allowbreak @~up\_time & 0.8 & 0.1 & 0.2 & 0 \\
	25 & embedded \allowbreak @~4, 12 \allowbreak @~danger\_time & 1.1 & 0.2 & 0.3 & 0 \\
	26 & embedded \allowbreak @~4, 12 \allowbreak @~up\_time & 1 & 0.2 & 0.3 & 0 \\
	27 & embedded \allowbreak @~5, 12 \allowbreak @~danger\_time & 1.4 & 0.2 & 0.3 & 0 \\
	28 & embedded \allowbreak @~5, 12 \allowbreak @~up\_time & 1.2 & 0.2 & 0.3 & 0 \\
	29 & embedded \allowbreak @~6, 12 \allowbreak @~danger\_time & 1.7 & 0.2 & 0.4 & 0 \\
	30 & embedded \allowbreak @~6, 12 \allowbreak @~up\_time & 1.4 & 0.2 & 0.4 & 0 \\
	31 & embedded \allowbreak @~7, 12 \allowbreak @~danger\_time & 2 & 0.3 & 0.4 & 0 \\
	32 & embedded \allowbreak @~7, 12 \allowbreak @~up\_time & 1.6 & 0.2 & 0.4 & 0 \\
	33 & embedded \allowbreak @~8, 12 \allowbreak @~danger\_time & 2.4 & 0.3 & 0.5 & 0 \\
	34 & embedded \allowbreak @~8, 12 \allowbreak @~up\_time & 1.8 & 0.3 & 0.5 & 0 \\
	35 & haddad-monmege \allowbreak @~20, 0.7 \allowbreak @~exp\_steps & 4 & 0.3 & 1.5 & 0 \\
	36 & csma \allowbreak @~3, 2 \allowbreak @~some\_before & 0.2 & 0.2 & 0.2 & 0.1 \\
	37 & dpm \allowbreak @~4, 4, 25 \allowbreak @~PminQueuesFull & 0.1 & 0.2 & 0.1 & 0.1 \\
	38 & dpm \allowbreak @~4, 4, 5 \allowbreak @~PminQueuesFull & 0.1 & 0.2 & 0.1 & 0.1 \\
	39 & embedded \allowbreak @~6, 12 \allowbreak @~actuators & 1.4 & 0.2 & 0.5 & 0.1 \\
	40 & embedded \allowbreak @~6, 12 \allowbreak @~io & 1.4 & 0.3 & 0.6 & 0.1 \\
	41 & embedded \allowbreak @~6, 12 \allowbreak @~sensors & 1.2 & 0.2 & 0.5 & 0.1 \\
	42 & embedded \allowbreak @~7, 12 \allowbreak @~actuators & 1.5 & 0.3 & 0.6 & 0.1 \\
	43 & embedded \allowbreak @~7, 12 \allowbreak @~io & 1.6 & 0.4 & 0.7 & 0.1 \\
	44 & embedded \allowbreak @~7, 12 \allowbreak @~sensors & 1.4 & 0.3 & 0.6 & 0.1 \\
	45 & embedded \allowbreak @~8, 12 \allowbreak @~actuators & 1.7 & 0.3 & 0.6 & 0.1 \\
	46 & embedded \allowbreak @~8, 12 \allowbreak @~io & 1.7 & 0.4 & 0.7 & 0.1 \\
	47 & embedded \allowbreak @~8, 12 \allowbreak @~sensors & 1.5 & 0.3 & 0.6 & 0.1 \\
	48 & ftwc \allowbreak @~4, 5 \allowbreak @~TimeMax & 5 & 1.5 & 4.7 & 0.1 \\
	49 & ftwc \allowbreak @~4, 5 \allowbreak @~TimeMin & 3.7 & 1.5 & 3.4 & 0.1 \\
	50 & consensus \allowbreak @~2, 16 \allowbreak @~c2 & 0.2 & 4.4 & 0.2 & 0.2 \\
	51 & consensus \allowbreak @~4, 2 \allowbreak @~c2 & 0.2 & 0.5 & 0.2 & 0.2 \\
	52 & consensus \allowbreak @~4, 2 \allowbreak @~disagree & 0.2 & 0.6 & 0.2 & 0.2 \\
	53 & dpm \allowbreak @~4, 4, 25 \allowbreak @~TminQueuesFull & 0.2 & 0.9 & 0.2 & 0.2 \\
	54 & dpm \allowbreak @~4, 4, 5 \allowbreak @~TminQueuesFull & 0.2 & 1 & 0.2 & 0.2 \\
	55 & dpm \allowbreak @~4, 6, 100 \allowbreak @~PminQueue1Full & 0.3 & 0.4 & 0.3 & 0.3 \\
	56 & dpm \allowbreak @~4, 6, 100 \allowbreak @~PminQueuesFull & 0.3 & 0.7 & 0.4 & 0.3 \\
	57 & dpm \allowbreak @~4, 6, 25 \allowbreak @~PminQueue1Full & 0.3 & 0.4 & 0.3 & 0.3 \\
	58 & dpm \allowbreak @~4, 6, 25 \allowbreak @~PminQueuesFull & 0.3 & 0.7 & 0.4 & 0.3 \\
	59 & dpm \allowbreak @~4, 6, 50 \allowbreak @~PminQueue1Full & 0.3 & 0.4 & 0.3 & 0.3 \\
	60 & dpm \allowbreak @~4, 6, 50 \allowbreak @~PminQueuesFull & 0.3 & 0.7 & 0.4 & 0.3 \\
	61 & exploding-blocksworld \allowbreak @~5 \allowbreak @~goal & 0.4 & 0.3 & 0.3 & 0.3 \\
	62 & polling \allowbreak @~11, 16 \allowbreak @~s1\_before\_s2 & 0.4 & 0.4 & 0.3 & 0.3 \\
	63 & consensus \allowbreak @~4, 2 \allowbreak @~steps\_min & 0.3 & 0.4 & 0.3 & 0.3 \\
	64 & coupon \allowbreak @~7, 3, 5 \allowbreak @~collect\_all & 0.5 & 0.5 & 0.5 & 0.4 \\
	65 & echoring \allowbreak @~2 \allowbreak @~MinFailed & 0.4 & 0.5 & 0.4 & 0.4 \\
	66 & firewire \allowbreak @~true, 3, 200 \allowbreak @~elected & 0.4 & 0.5 & 0.4 & 0.4 \\
	67 & ftwc \allowbreak @~8, 5 \allowbreak @~TimeMax & 43.5 & 10.1 & 42 & 0.4 \\
	68 & ftwc \allowbreak @~8, 5 \allowbreak @~TimeMin & 28.5 & 10.5 & 27 & 0.4 \\
	69 & consensus \allowbreak @~4, 4 \allowbreak @~c2 & 0.9 & 8 & 1 & 0.7 \\
	70 & dpm \allowbreak @~4, 8, 100 \allowbreak @~PminQueue1Full & 0.8 & 1.4 & 0.8 & 0.8 \\
	71 & dpm \allowbreak @~4, 8, 25 \allowbreak @~PminQueue1Full & 0.8 & 1.4 & 0.8 & 0.8 \\
	72 & dpm \allowbreak @~4, 8, 5 \allowbreak @~PminQueue1Full & 0.8 & 1.4 & 0.8 & 0.8 \\
	73 & egl \allowbreak @~5, 8 \allowbreak @~messagesA & 0.8 & 1 & 0.8 & 0.8 \\
	74 & crowds \allowbreak @~4, 20 \allowbreak @~positive & 1 & 0.9 & 0.9 & 0.9 \\
	75 & dpm \allowbreak @~4, 8, 100 \allowbreak @~PminQueuesFull & 0.9 & 2.4 & 1 & 0.9 \\
	76 & dpm \allowbreak @~4, 8, 25 \allowbreak @~PminQueuesFull & 0.9 & 2.4 & 1 & 0.9 \\
	77 & dpm \allowbreak @~4, 8, 5 \allowbreak @~PminQueuesFull & 0.9 & 2.4 & 1 & 0.9 \\
	78 & consensus \allowbreak @~4, 4 \allowbreak @~disagree & 1.1 & 9.8 & 1.3 & 1.1 \\
	79 & dpm \allowbreak @~6, 4, 5 \allowbreak @~PminQueue1Full & 2.9 & 4.6 & 2.9 & 2.8 \\
	80 & dpm \allowbreak @~6, 4, 5 \allowbreak @~PminQueuesFull & 3.8 & 7.1 & 3.8 & 3.7 \\
	81 & consensus \allowbreak @~6, 2 \allowbreak @~c2 & 16.9 & 90.4 & 19.7 & 13.8 \\
	82 & consensus \allowbreak @~6, 2 \allowbreak @~disagree & 26 & 133.8 & 30.5 & 22.1 \\
	83 & dpm \allowbreak @~6, 6, 5 \allowbreak @~PminQueuesFull & 31 & 76.5 & 31.6 & 29.5 \\
	84 & dpm \allowbreak @~6, 8, 5 \allowbreak @~PminQueue1Full & 130 & 252.4 & 134.8 & 128.6 \\
	85 & dpm \allowbreak @~6, 8, 5 \allowbreak @~PminQueuesFull & 156.9 & 426.4 & 154 & 145.3 \\
	86 & dpm \allowbreak @~8, 4, 5 \allowbreak @~PminQueuesFull & 157 & 312 & 160.4 & 148.9 \\
	\hline
	\hline
	& SUM & 678.1 & 1381.7 & 650.3 & 510.4
	
\end{longtable}

\section{Comparative Grouping}
\label{Section: Comparative grouping}

In this section, we provide alternative distribution of instances in groups 3 and 4 focusing on the other VI-based approaches.
Note that, in \Cref{Section: Experiments (main text)} instances that do not belong to groups 1 and 2 are divided with respect to our algorithm GVI and \Cref{Figure: Walltimes} presents instances ordered by the time GVI takes. 
Below, we provide the corresponding presentation focusing on the other VI-based approaches.

The benchmark set contains 636 instances.
There are 162 instances where some of the algorithms considered timed out (at 600 seconds) or failed.
Omitting these 162 instances leaves a total of 474 instances that we analyze.
We grouped the instances depending on the focus variant as follows.
\begin{itemize}
	\item Group 1: instances where all algorithms are fast, i.e., they take at most 0.1 seconds (170 instances).
	\item Group 2: from the rest, those where the fastest and slowest algorithms are only at most 1.10 times of each other (135 instances).
	\item Group 3: from the rest, those where the focus variant is not the fastest.
	\item Group 4: all other instances not considered before.
\end{itemize}
In the rest of the section we present the concrete groups 3 and 4 for different focus variants.

\subsection{IVI}

Focusing on IVI, the new distribution of groups 3 and 4 is as follows.
In Group~3, the average and median speedups on the overall performance are as follows compared to IVI.
On average,
OVI is 0.91 times faster,
SVI is 1.17 times faster, and
GVI is 1.08 times faster.
With respect to the median,
OVI is 1.20 times faster,
SVI is 1.39 times faster, and
GVI is 2.25 times faster.
In Group~4, which is favorable for IVI, the average and median speedup on the overall performance of IVI are as follows.
On average, IVI is
1.87 times faster than OVI,
1.03 times faster than SVI, and
1.02 times faster than GVI.
With respect to the median, IVI is
2.34 times faster than OVI,
1.34 times faster than SVI, and
1.00 times faster than GVI.

\Cref{Figure: Walltimes IVI} shows the time (total execution time in the machine, not just CPU time) measured in seconds for every algorithm for each instance in the last two groups.
Instances are ordered by the time achieved by our algorithm.
Note that the $y$-axes are on a logarithmic scale.

\begin{figure}[H]
	\centering
	\includegraphics[width=\textwidth]{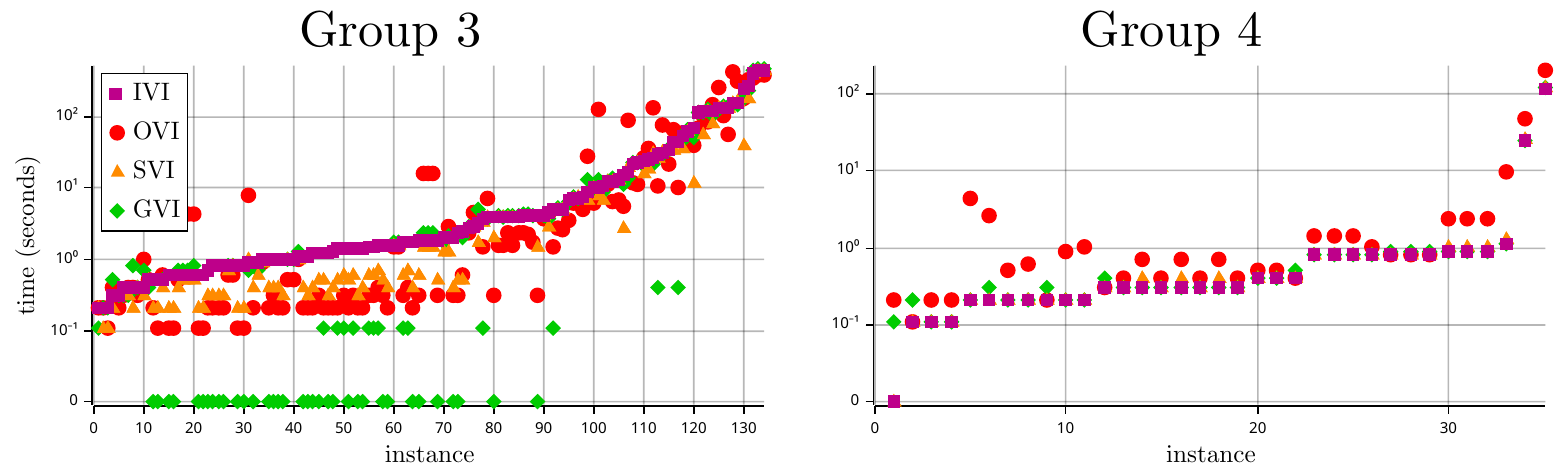}
	% Figure captions must be after the figure, according to LIPICs guidelines
	\caption{
		Time in seconds of all algorithms over instances not in groups 1 and 2 in increasing order according to IVI and displayed in logarithmic scale.
		Group 3 in this graph contains instances where IVI is not the fastest.
		Group 4 contains the rest of the instances, which is beneficial for IVI.
	}
	\label{Figure: Walltimes IVI}
\end{figure}

\subsection{OVI}

Focusing on OVI, the new distribution of groups 3 and 4 is as follows.
In Group~3, the average and median speedups on the overall performance are as follows compared to OVI.
On average,
IVI is 1.55 times faster,
SVI is 2.20 times faster, and
GVI is 1.89 times faster.
With respect to the median,
IVI is 0.42 times faster,
SVI is 1.00 times faster, and
GVI is 1.67 times faster.
In Group~4, which is favorable for OVI, the average and median speedup on the overall performance of OVI are as follows.
On average, OVI is
1.30 times faster than IVI,
1.30 times faster than SVI, and
1.35 times faster than GVI.
With respect to the median, OVI is
1.74 times faster than IVI,
1.70 times faster than SVI, and
1.79 times faster than GVI.

\Cref{Figure: Walltimes OVI} shows the time (total execution time in the machine, not just CPU time) measured in seconds for every algorithm for each instance in the last two groups.
Instances are ordered by the time achieved by our algorithm.
Note that the $y$-axes are on a logarithmic scale.

\begin{figure}[H]
	\centering
	\includegraphics[width=\textwidth]{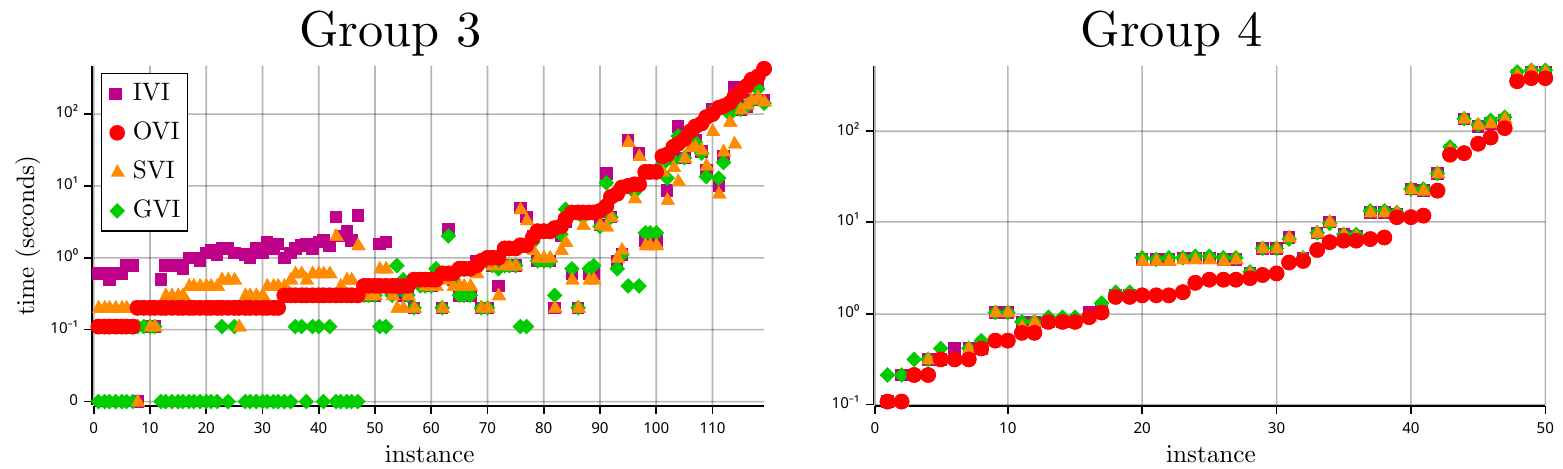}
	% Figure captions must be after the figure, according to LIPICs guidelines
	\caption{
		Time in seconds of all algorithms over instances not in groups 1 and 2 in increasing order according to OVI and displayed in logarithmic scale.
		Group 3 in this graph contains instances where OVI is not the fastest.
		Group 4 contains the rest of the instances, which is beneficial for OVI.
	}
	\label{Figure: Walltimes OVI}
\end{figure}

\subsection{SVI}

Focusing on SVI, the new distribution of groups 3 and 4 is as follows.
In Group~3, the average and median speedups on the overall performance are as follows compared to SVI.
On average,
IVI is 1.00 times faster,
OVI is 0.89 times faster, and
GVI is 1.03 times faster.
With respect to the median,
IVI is 0.59 times faster,
OVI is 1.43 times faster, and
GVI is 2.50 times faster.
In Group~4, which is favorable for SVI, the average and median speedup on the overall performance of SVI are as follows.
On average, SVI is
2.00 times faster than IVI,
2.39 times faster than OVI, and
1.69 times faster than GVI.
With respect to the median, SVI is
1.60 times faster than IVI,
2.00 times faster than OVI, and
1.60 times faster than GVI.

\Cref{Figure: Walltimes SVI} shows the time (total execution time in the machine, not just CPU time) measured in seconds for every algorithm for each instance in the last two groups.
Instances are ordered by the time achieved by our algorithm.
Note that the $y$-axes are on a logarithmic scale.

\begin{figure}[H]
	\centering
	\includegraphics[width=\textwidth]{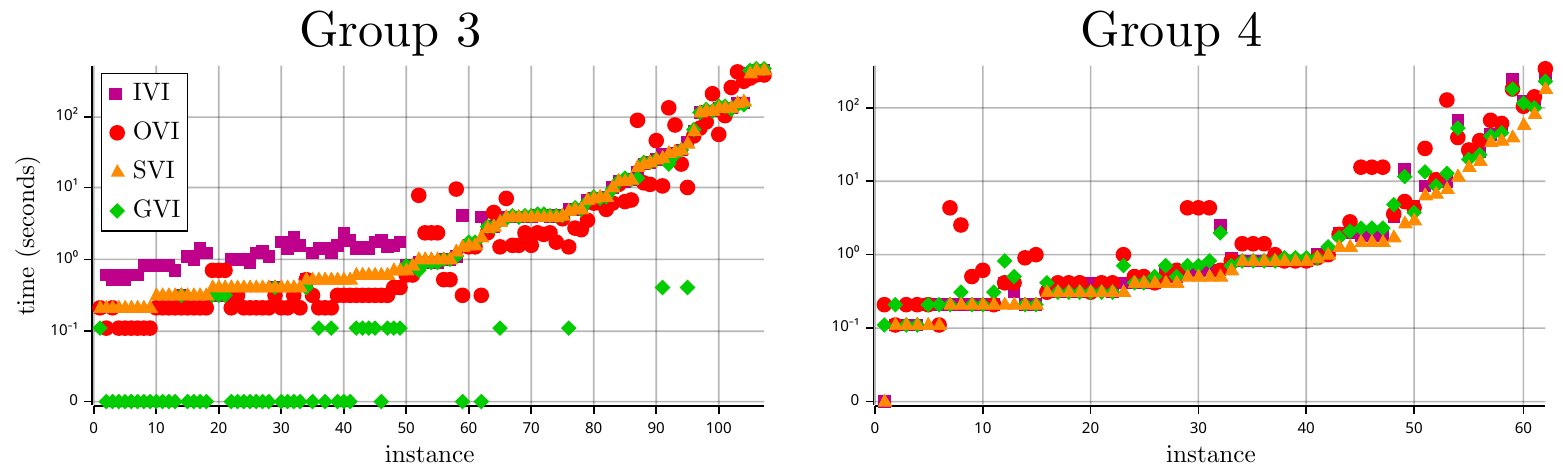}
	% Figure captions must be after the figure, according to LIPICs guidelines
	\caption{
		Time in seconds of all algorithms over instances not in groups 1 and 2 in increasing order according to SVI and displayed in logarithmic scale.
		Group 3 in this graph contains instances where SVI is not the fastest.
		Group 4 contains the rest of the instances, which is beneficial for SVI.
	}
	\label{Figure: Walltimes SVI}
\end{figure}

\subsection{GVI}

We present the results focusing on GVI, including average and median statistics.
In Group~3, the average and median speedups on the overall performance are as follows compared to GVI.
On average,
IVI is 0.98 times faster,
OVI is 1.03 times faster, and
SVI is 1.16 times faster.
With respect to the median,
IVI is 1.06 times faster,
OVI is 1.14 times faster, and
SVI is 1.08 times faster.
In Group~4, which is favorable for GVI, the average and median speedup on the overall performance of GVI are as follows.
On average, GVI is
1.33 times faster than IVI,
2.71 times faster than OVI, and
1.28 times faster than SVI.
With respect to the median, GVI is
10.00 times faster than IVI,
3.00 times faster than OVI, and
4.00 times faster than SVI.

\Cref{Figure: Walltimes} shows the time (total execution time in the machine, not just CPU time) measured in seconds for every algorithm for each instance in the last two groups.
Instances are ordered by the time achieved by our algorithm.
Note that the $y$-axes are on a logarithmic scale.

\subsection{Summary}

\Cref{Table: Comparative grouping 3} and \Cref{Table: Comparative grouping 4} shows all relevant quantities for groups 3 and 4 respectively.

\begin{table}
	\caption{Relative speedups of groups 3 varying the VI-based variant of focus. Larger values means that the focus variant is slower.}
	\label{Table: Comparative grouping 3}
	\centering
	\begin{tabular}{|l|l|l|l|l|l|l|l|l|}
		\hline
		\multirow{3}{*}{Focus variant} & \multicolumn{8}{c|}{Speedup of other variant} \\
		\cline{2-9}		
		& \multicolumn{4}{c|}{Average}
		& \multicolumn{4}{c|}{Median} \\
		\cline{2-9}
		&  IVI 
		& OVI
		& SVI
		& GVI 
		&  IVI 
		& OVI
		& SVI
		& GVI \\
		\hline
		IVI 
		& 1.00 
		& 1.15
		& 1.15
		& 0.82
		& 1.00 
		& 1.07
		& 1.89
		& 2.13  \\
		OVI 
		& 1.14 
		& 1.00
		& 1.47
		& 0.74
		& 1.50
		& 1.00
		& 3.00
		& 4.00 \\
		SVI 
		& 1.03
		& 1.21
		& 1.00
		& 0.91
		& 0.98
		& 1.90
		& 1.00
		& 0.98 \\
		GVI 
		& 1.32
		& 1.39
		& 1.45
		& 1.00
		& 1.70
		& 1.07
		& 2.13
		& 1.00 \\
		\hline
	\end{tabular}
\end{table}

\begin{table}
	\caption{Relative speedups of groups 4 varying the VI-based variant of focus. Larger values means that the focus variant is faster.}
	\label{Table: Comparative grouping 4}
	\centering
	\begin{tabular}{|l|l|l|l|l|l|l|l|l|}
		\hline
		\multirow{3}{*}{Focus variant} & \multicolumn{8}{c|}{Speedup over other variant} \\
		\cline{2-9}		
		& \multicolumn{4}{c|}{Average}
		& \multicolumn{4}{c|}{Median} \\
		\cline{2-9}
		&  IVI 
		& OVI
		& SVI
		& GVI
		&  IVI 
		& OVI
		& SVI
		& GVI \\
		\hline
		IVI 
		& 1.00 
		& 1.51
		& 1.26
		& 1.80
		& 1.00
		& 1.00
		& 1.00
		& 1.50 \\
		OVI 
		& 1.27
		& 1.00
		& 1.28
		& 1.39
		& 1.19
		& 1.00
		& 1.00
		& 1.37 \\
		SVI 
		& 1.59
		& 1.76
		& 1.00
		& 2.59 
		& 2.00
		& 3.34
		& 1.00
		& 1.67 \\
		GVI 
		& 18.92
		& 5.86
		& 15.55
		& 1.00
		& 10.00
		& 3.00
		& 4.00
		& 1.00 \\
		\hline
	\end{tabular}
\end{table}

\end{document}